\newtheorem{theorem}{Theorem}
\newtheorem{lemma}{Lemma}
\newtheorem{remark}{Remark}
\newtheorem{definition}[theorem]{Definition}
\newtheorem{claim}[theorem]{Claim}
\newtheorem{example}{Example}
\DeclareMathOperator*{\E}{{\mathbb{E}}}
\DeclarePairedDelimiter\floor{\lfloor}{\rfloor}
\newcolumntype{?}{!{\vrule width 1pt}}
\newcommand{\paraspace}{\vspace{-2mm}}
\newcommand{\figref}[1]{Figure~\ref{fig:#1}}
\newcommand{\tabref}[1]{Table~\ref{tab:#1}}
\newcommand{\secref}[1]{Sec.~\ref{sec:#1}}
\newcommand{\thmref}[1]{Theorem~\ref{thm:#1}}
\newcommand{\appref}[1]{Appendix~\ref{sec:#1}}
\newcommand{\eq}[1]{\eqref{eq:#1}}
\newcommand{\bb}[0]{\bm{b}}
\newcommand{\sfd}[0]{\mathsf{d}}
\newcommand{\sfW}[0]{\mathsf{W}}
\newcommand{\kwta}[0]{$k$-WTA\xspace}
\newlength\savedwidth
\title{Enhancing Adversarial Defense by $k$-Winners-Take-All}
\author{
 Chang Xiao \qquad Peilin Zhong\qquad Changxi Zheng\\
 \ Columbia University\\
  \texttt{\{chang, peilin, cxz\}@cs.columbia.edu} \\
}
\begin{document}

\maketitle
\vspace{-1mm}
\begin{abstract}
\vspace{-1mm}
We propose a simple change to existing neural network structures
for better defending against gradient-based adversarial attacks.
Instead of using popular activation functions (such as ReLU), we advocate the use
of $k$-Winners-Take-All (\kwta) activation, a $C^0$ discontinuous function that purposely 
invalidates the neural network model's gradient at densely distributed input data points.
The proposed \kwta activation can be readily used in nearly all existing networks and training methods
with no significant overhead.
Our proposal is theoretically rationalized.
We analyze why the discontinuities in \kwta networks can largely 
prevent gradient-based search of adversarial examples and 
why they at the same time remain innocuous to the network training.
This understanding is also empirically backed.
We test \kwta activation on various network structures optimized by a training method, be it adversarial training or not.
In all cases, 
the robustness of \kwta networks outperforms that of traditional networks
under white-box attacks. 
%
%
\end{abstract}

\vspace{-0.05in}
\section{Introduction}
\vspace{-0.05in}
In the tremendous success of deep learning techniques, there is a grain of salt.
It has become well-known that deep neural networks can be easily fooled by
\emph{adversarial examples}~\citep{intriguing2014}.
Those deliberately crafted input samples can mislead the networks 
to produce an output drastically different from what we expect.
In many important applications, 
from face recognition authorization to autonomous cars,
this vulnerability gives rise to serious security concerns~\citep{barreno2010security, barreno2006can,sharif2016accessorize,thys2019fooling}.

Attacking the network is straightforward. Provided a labeled data item $(\bm{x},y)$,
the attacker finds a perturbation $\bm{x}'$ imperceptibly similar to $\bm{x}$
but misleading enough to cause the network to output a label different from $y$.
By far, the most effective way of finding such a perturbation (or adversarial example)
is by exploiting the gradient information of the network with respect to its input:
the gradient indicates how to perturb $\bm{x}$ to trigger the maximal change of $y$.

The defense, however, is challenging.
Recent studies showed that adversarial examples always exist if one tends to pursue
a high classification accuracy---adversarial
robustness seems at odds with the accuracy~\citep{tsipras2018robustness,  shafahi2018are, robustcost2018,weng2018towards,zhang2019theoretically}.
This intrinsic difficulty of eliminating adversarial examples suggests 
an alternative path: \emph{can we design a network whose adversarial examples are evasive rather than eliminated?}
Indeed, along with this thought is a series of works using obfuscated gradients as a defense mechanism~\citep{obfuscated-gradients}.
Those methods hide the network's gradient information by artificially discretizing the input~\citep{thermo2018,lin2018defensive}
or introducing certain randomness to the input~\citep{xie2018mitigating, guo2018countering} or the network structure~\citep{s.2018stochastic, cohen2019certified}
(see more discussion in \secref{obf}).
Yet, the hidden gradient in those methods can still be approximated, and as recently pointed out by~\citet{obfuscated-gradients},
those methods remain vulnerable.

\paraspace
\paragraph{Technical contribution I).}
Rather than obfuscating the network's gradient, we make the gradient \emph{undefined}. 
This is achieved by a simple change to the standard neural network structure:
we advocate the use of a \emph{$C^0$ discontinuous} activation function, namely the $k$-Winners-Take-All (\kwta) 
activation, to replace the popular activation functions
such as rectified linear units (ReLU).
This is the only change we propose to a deep neural network.
All other components (such as BatchNorm, convolution, and pooling) as well as the training methods remain unaltered.
With no significant overhead,
\kwta activation can be readily
used in nearly all existing networks and training methods. 

\kwta activation takes as input the entire output of a layer, retains its $k$ largest
values and deactivates all others to zero.
As we will show in this paper, even an infinitesimal perturbation to the input may cause a complete change 
to the network neurons' activation pattern, thereby resulting in a large jump 
in the network's output.
This means that, mathematically, if we use $f(\bm{x};\bm{w})$ to denote a \kwta network taking 
an input $\bm{x}$ and parameterized by weights $\bm{w}$,
then the gradient $\nabla_{\bm{x}}f(\bm{x};\bm{w})$ at certain $\bm{x}$ is 
undefined---$f(\bm{x};\bm{w})$ is $C^0$ \emph{discontinuous}.

\begin{wrapfigure}[13]{r}{0.405\textwidth}
  \centering
  \vspace{-6mm}
  \includegraphics[width=0.405\textwidth]{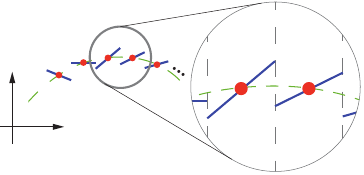}
  \vspace{-7mm}
  \caption{ \textbf{1D illustration.} 
  Fit a 1D function (green dotted curve) using a \kwta model provided
  with a set of points (red). The resulting model is piecewise continuous (blue curve),
  and the discontinuities can be dense.
\label{fig:illus}}
\end{wrapfigure}
\paraspace
\paragraph{Technical contribution II).}
More intriguing than the mere replacement of the activation function 
is \emph{why \kwta helps improve the adversarial robustness}.
We offer our theoretical reasoning of its behavior from two perspectives.
On the one hand, we show that the discontinuities of $f(\bm{x};\bm{w})$ is 
\emph{densely distributed} in the space of $\bm{x}$. Dense enough such that a tiny 
perturbation from any $\bm{x}$
almost always comes across some discontinuities, where the gradients 
are undefined and thus the attacker's search of adversarial examples becomes
blinded (see \figref{illus}). 

On the other hand, a paradox seemingly exists.
The discontinuities in the activation function 
also renders $f(\bm{x};\bm{w})$ discontinuous with respect to the network weights $\bm{w}$
(at certain $\bm{w}$ values).
But training the network relies on the presumption that the gradient with respect to 
the weights is almost always available.
Interestingly, we show that, under \kwta activation,
the discontinuities of $f(\bm{x};\bm{w})$ is rather \emph{sparse} in the space of $\bm{w}$,
intuitively because the dimension of $\bm{w}$ (in parameter space) is much larger than the dimension of $\bm{x}$ (in data space).
Thus, the network can be trained successfully.

\paraspace
\paragraph{Summary of results.}
We conducted extensive experiments on multiple datasets under 
different network architectures, including ResNet~\citep{he2016deep}, DenseNet~\citep{huang2017densely}, 
and Wide ResNet~\citep{zagoruyko2016wide},
that are optimized by regular training as well as various adversarial training methods~\citep{madry2017towards,zhang2019theoretically,shafahi2019adversarial}.

In all these setups, we compare the robustness performance of 
using the proposed \kwta activation with commonly used ReLU activation
under several white-box attacks, 
including PGD~\citep{kurakin2016adversarial}, Deepfool~\citep{moosavi2016deepfool}, C\&W~\citep{carlini2017towards}, 
MIM~\citep{dong2018boosting}, and others.
In all tests, \kwta networks outperform ReLU networks.

The use of \kwta activation is motivated for defending against gradient-based adversarial attacks.
Our experiments suggest that the robustness improvement gained by simply switching to \kwta activation is universal,
not tied to specific network architectures or training methods.
To promote reproducible research, we will release our implementation of \kwta networks, 
along with our experiment code, configuration files and pre-trained models\footnote{https://github.com/a554b554/kWTA-Activation}.

\vspace{-0.03in}
\subsection{Related Work: Obfuscated Gradients as a Defense Mechanism}\label{sec:obf}
\vspace{-0.03in}
Before delving into \kwta details, we review prior adversarial defense methods that 
share the same philosophy with our method and highlight our advantages.
For a review of other attack and defense methods, we refer to \appref{related}. 

Methods aiming for concealing the gradient information from the attacker has
been termed as \emph{obfuscated gradients}~\citep{obfuscated-gradients} or
\emph{gradient masking}~\citep{papernot2017practical, tramer2017ensemble} techniques.
One type of such methods is by exploiting randomness, either 
randomly transforming the input before feeding it to the network~\citep{xie2018mitigating,guo2018countering}
or introducing stochastic layers in the network~\citep{s.2018stochastic}.
However, the gradient information in these methods can be estimated
by taking the average over multiple trials~\citep{obfuscated-gradients,athalye2017synthesizing}. As a result, 
these methods are vulnerable.


Another type of obfuscated gradient methods relies on the 
so-called \emph{shattered gradient}~\citep{obfuscated-gradients}, which aims to
make the network gradients nonexistent or incorrect to the attacker, by
purposely discretizing the input~\citep{thermo2018,ma2018characterizing} or
artificially raising numerical instability for gradient
evaluation~\citep{song2018pixeldefend,samangouei2018defensegan}. 
Unfortunately, these methods are also vulnerable.
As shown by~\citet{obfuscated-gradients}, they can be 
compromised by \textit{backward pass differentiable approximation} (BPDA). 
Suppose $f_i(\bm{x})$ is a non-differentiable component of 
a network expressed by $f=f_1\circ f_2\circ \cdots \circ f_n$.
The gradient $\nabla_{\bm{x}}f$ can be estimated as long as one can find a
smooth delegate $g$ that approximates well $f_i$ (i.e., $g(x) \approx f_i(x)$).


In stark contrast to all those methods, 
a slight change of the \kwta activation pattern in an earlier layer of a network can cause a 
radical reorganization of activation patterns in later layers (as shown in \secref{theory}).
Thereby, \kwta activation
not just obfuscates the network's gradients but destroys them at certain input samples,
introducing discontinuities densely distributed in the input data space.
We are not aware of any possible smooth approximation of a \kwta network to launch BPDA attacks.
Even if hypothetically there exists a smooth approximation of $k$-WTA activation, that approximation 
has to be applied to every layer. Then the network would accumulate the approximation error at each layer rapidly 
so that any gradient-estimation-based attack (such as BPDA) will be defeated.

\vspace{-0.05in}
\section{$k$-Winners-Take-All Activation}
\vspace{-0.05in}
The debut of the \emph{Winner-Takes-All} (WTA) activation on the stage of neural networks 
dates back to 1980s, 
when ~\citet{grossberg1982contour} introduced shunting short-term memory equations in on-center off-surround networks
and showed the ability to identify the largest of $N$ real numbers.
Later, ~\citet{majani1989k} generalized the WTA network to identify the $K$ largest of $N$ real numbers,
and they termed the network as the K-Winners-Take-All (KWTA) network.
These early WTA-type activation functions output only boolean values, mainly
motivated by the properties of biological neural circuits. 
In particular,~\citet{maass2000computational,maass2000neural} has proved that any boolean
function can be computed by a single KWTA unit. 
Yet, the boolean nature of these activation functions differs starkly from the modern 
activation functions, including the one we use.





\vspace{-0.05in}
\subsection{Deep Neural Networks Activated by $k$-Winners-Take-All}\label{sec:kwta_sub}
\vspace{-0.04in}
We propose to use $k$-Winners-Take-All ($k$-WTA) activation, a natural generalization of the boolean
KWTA%
\footnote{In this paper, we use $k$-WTA to refer our activation function, while using
KWTA to refer the original boolean version by~\citet{majani1989k}.}~\citep{majani1989k}.
$k$-WTA retains the $k$ largest values of an $N\times1$ input vector and 
sets all others to be zero before feeding the vector to the next network layer, namely,
\begin{equation}\label{eq:kwta}
    \phi_k(\bm{y})_j=\left\{\begin{array}{ll}y_j, & y_j \in \{\text{$k$ largest elements of $\bm{y}$} \}, \\ 
0, & \text{Otherwise}.\end{array}\right.
\end{equation}
Here $\phi_k:\mathbb{R}^N \rightarrow \mathbb{R}^N$ is the $k$-WTA function
(parameterized by an integer $k$), $\bm{y} \in \mathbb{R}^N$ is the input to the activation, 
and $\phi_k(\bm{y})_j$ denote the $j$-the element of the output $\phi_k(\bm{y})$
(see the rightmost subfigure of \figref{act}).
Note that if $\bm{y}$ has multiple elements that are equally $k$-th largest, we break the tie by
retaining the element with smaller indices until the $k$ slots are taken.


\begin{figure*}[t]
	\centering
	\includegraphics[width=0.97\textwidth]{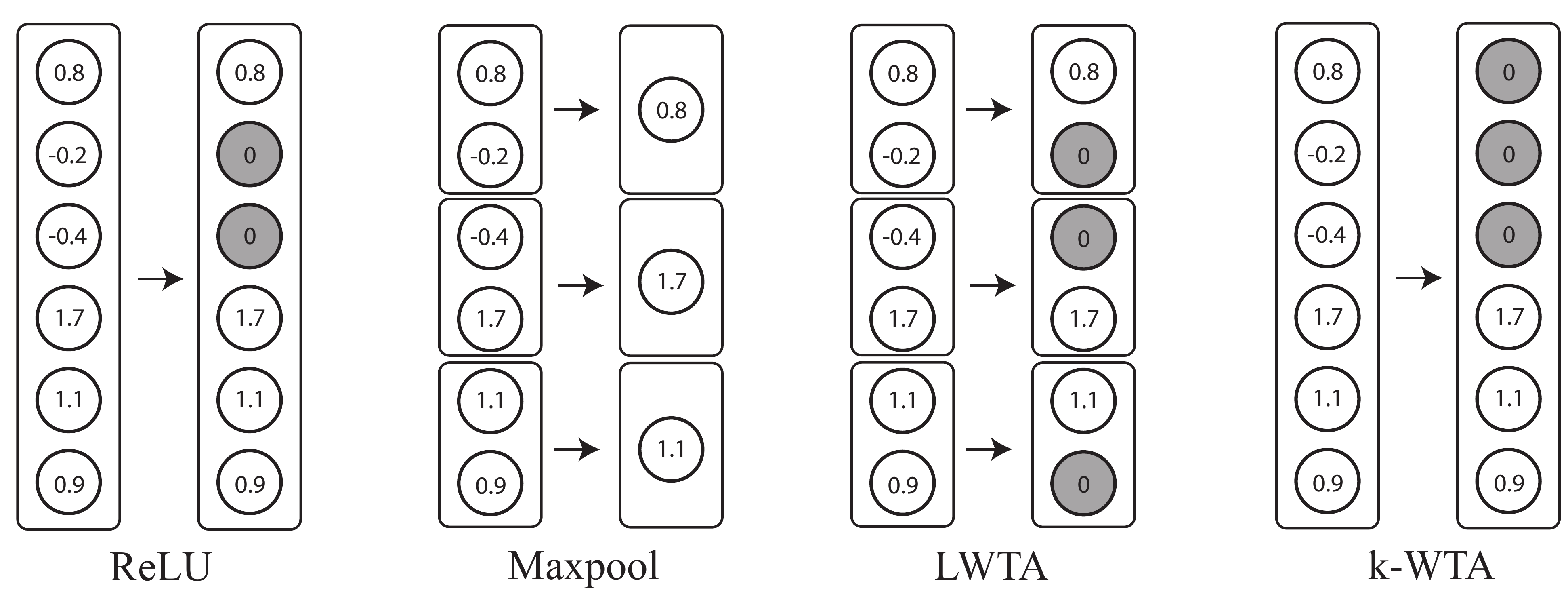}
	\vspace{-4.5mm}
	\caption{
            \textbf{Different activation functions.} 
	\textbf{ReLU}: all neurons with negative activation values will be set to zero.
	\textbf{Max-pooling}: only the largest activation in each group 
        is transmitted to the next layer, and this effectively downsample the output.
	\textbf{LWTA}: the largest activation in each group retains its value when entering the next layer, 
	others are set to zero.
	\textbf{$k$-WTA}: the $k$ largest activations in the entire layer 
        retain their values when entering the next layer,
	others are set to zero ($k=3$ in this example).
	Note that the output is not downsampled through ReLU, LWTA and $k$-WTA.
    \label{fig:act}}
	\vspace{-4mm}
\end{figure*}

When using $k$-WTA activation, we need to choose $k$. Yet it makes no sense to fix
$k$ throughout all layers of the neural network,
because these layers often have different output dimensions;
a small $k$ to one layer's dimension can be relatively large to the other.
Instead of specifying $k$, we introduce a parameter $\gamma\in(0,1)$ called \emph{sparsity ratio}.
If a layer has an output dimension $N$, then its $k$-WTA activation 
has $k=\floor{\gamma\cdot N}$.
Even though the sparsity ratio can be set differently for different layers, 
in practice we found no clear gain from introducing such a variation.
Therefore, we use a fixed $\gamma$---the only additional hyperparameter
needed for the neural network.

In convolutional neural networks (CNN), the output of a layer is a 
$C\times H\times W$ tensor. $C$ denotes the number of output channels;
$H$ and $W$ indicate the feature resolution.
While there are multiple choices of applying $k$-WTA on the tensor---for example,
one can apply $k$-WTA individually to each channel---empirically we found that
the most effective (and conceptually the simplest) way is to treat the tensor 
as a long $C\cdot H\cdot W\times 1$ vector input to the $k$-WTA activation.
Using $k$-WTA in this way is also backed by our theoretical understanding (see \secref{theory}).

The runtime cost of computing a \kwta activation is asymptotically $O(N)$,
because finding $k$ largest values in a list is asymptotically equivalent to
finding the $k$-th largest value, which has an $O(N)$
complexity~\citep{cormen2009introduction}. This cost is comparable to ReLU's $O(N)$
cost on a $N$-length vector. Thus, replacing ReLU with \kwta introduces no
significant overhead.

\paraspace
\paragraph{Remark: other WTA-type activations.}
Relevant to \kwta is the \textit{local Winner-Take-All} (LWTA)
activation~\citep{srivastava2013compete,srivastava2014understanding}, which
divides each layer's output values into local groups 
and applies WTA to each group individually. 
LWTA is similar to max-pooling~\citep{riesenhuber1999hierarchical} for dividing the layer output and choosing group maximums.
But unlike ReLU and max-pooling being $C^0$ continuous, LWTA 
and our $k$-WTA are both discontinuous with respect to the input.
The differences among ReLU, max-pooling, LWTA, and \kwta are illusrated in
Figure~\ref{fig:act}.

LWTA is motivated toward preventing catastrophic forgetting~\citep{mccloskey1989catastrophic},
whereas our use of $k$-WTA is for defending against adversarial threat. 
Both are discontinuous. But it remains unclear 
what the LWTA's discontinuity properties are and how its discontinuities affect the network training. 
Our theoretical analysis (\secref{theory}), in contrast, 
sheds some light on these fundamental questions
about $k$-WTA, rationalizing its ability for improving adversarial robustness.
Indeed, our experiments confirm that $k$-WTA outperforms LWTA
in terms of robustness (see \appref{add}).


WTA-type activation, albeit originated decades ago,
remains elusive in modern neural networks.
Perhaps this is because it has not demonstrated a considerable improvement to
the network's standard test accuracy, though it 
can offer an accuracy comparable to ReLU~\citep{srivastava2013compete}.
Our analysis and proposed use of $k$-WTA and its enabled improvement 
in the context of adversarial defense
may suggest a renaissance of studying WTA.


\vspace{-0.05in}
\subsection{Training $k$-WTA Networks}\label{sec:train}
\vspace{-0.04in}
\kwta networks require no special treatment in training.  Any optimization
algorithm (such as stochastic gradient descent) for training ReLU networks can be
directly used to train \kwta networks.

Our experiments have found that 
when the sparsity ratio $\gamma$ is relatively small ($\le 0.2$),
the network training converges slowly.
This is not a surprise. A smaller $\gamma$ activates fewer neurons, effectively reducing more
of the layer width and in turn the network size, 
and the stripped ``subnetwork'' is much less expressive~\citep{srivastava2013compete}.
Since different training examples activate different subnetworks, collectively they make the training harder.

%
%
%
%
%
Nevertheless, we prefer a smaller $\gamma$.
As we will discuss in the next section, a smaller $\gamma$
usually leads to better robustness against finding adversarial examples.
Therefore, to ease the training (when $\gamma$ is small), we propose to use 
an iterative fine-tuning approach.
Suppose the target sparsity ratio is $\gamma_1$. 
We first train the network with a larger sparsity ratio $\gamma_0$ using the standard 
training process. Then, we iteratively fine tune the network. In each iteration, 
we reduce its sparsity ratio by a small $\delta$ and train the network for two epochs.
The iteration repeats until
$\gamma_0$ is reduced to $\gamma_1$. 

This incremental process introduces little training overhead, because the cost of each fine tuning is 
negligible in comparison to training from scratch toward $\gamma_0$. We also note that this process is optional.
In practice we use it only when $\gamma<0.2$.
We show more experiments on the efficacy of the incremental training in \appref{inc}.

\vspace{-0.05in}
\section{Understanding $k$-WTA Discontinuity}\label{sec:theory}
\vspace{-0.05in}
We now present our theoretical understanding of \kwta's discontinuity behavior
in the context of deep neural networks,
revealing some implication toward the network's adversarial robustness.

\paraspace
\paragraph{Activation pattern.}
To understand \kwta's discontinuity, consider one layer outputting values $\bm{x}$, passed through 
a \kwta activation, and followed by the next layer whose linear weight matrix is 
\begin{wrapfigure}[3]{r}{0.215\textwidth}
  \centering
  \vspace{-0mm}
  \includegraphics[width=0.215\textwidth]{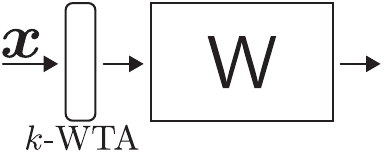}
\end{wrapfigure}
$\mathsf{W}$
(see adjacent figure). Then, the value fed into the next activation can be expressed as
$ \mathsf{W}\phi_k(\bm{x})$, where $\phi_k(\cdot)$ is the \kwta function defined in~\eq{kwta}.
Suppose the vector $\bm{x}$ has a length $l$.
We define the \kwta's \emph{activation pattern} under the input $\bm{x}$ as
\begin{equation}
\mathcal{A}(\bm{x}) \coloneqq \left\{i\in[l]\mid \text{$x_i$ is one of the $k$ largest values in $\bm{x}$}\right\} \subseteq[l].
\end{equation}
Here (and throughout this paper), we use $[l]$ to denote the integer set $\{1,2,...,l\}$.

\paraspace
\paragraph{Discontinuity.}
The activation pattern $\mathcal{A}(\bm{x})$ is a key notion for analyzing \kwta's discontinuity behavior.
Even an infinitesimal perturbation of $\bm{x}$ may change $\mathcal{A}(\bm{x})$: 
some element $i$ is removed from $\mathcal{A}(\bm{x})$ while another element $j$ is added in.
Corresponding to this change, in the evaluation of $\mathsf{W}\phi_k(\bm{x})$,
the contribution of $\sfW$'s column vector $\sfW_i$ vanishes while another column $\sfW_j$
suddenly takes effect.
It is this abrupt change that renders the result of $\mathsf{W}\phi_k(\bm{x})$ $C^0$ discontinuous.

Such a discontinuity jump can be arbitrarily large, 
because the column vectors $\mathsf{W}_i$ and $\sfW_j$ can be of any difference.
Once $\sfW$ is determined, the discontinuity jump then depends on 
the value of $x_i$ and $x_j$. 
As explained in \appref{jmp}, 
when the discontinuity occurs, $x_i$ and $x_j$ have about the same value, depending on
the choice of the sparsity ratio $\gamma$ (recall \secref{kwta_sub})---the smaller the $\gamma$ is, the larger the jump
will be. This relationship suggests that a smaller $\gamma$ will make the search of adversarial examples harder.
Indeed, this is confirmed through our experiments (see \appref{surf}).


\paraspace
\paragraph{Piecewise linearity.}
Now, consider an $n$-layer \kwta network, which can be expressed as
$
f(\bm{x}) = \sfW^{(1)} \cdot \phi_k(\sfW^{(2)} \cdot \phi_k ( \cdots  \phi_k(\sfW^{(n)} \bm{x} + \bb^{(n)}) ) + \bb^{(2)}) + \bb^{(1)}
$,
where $\sfW^{(i)}$ and $\bb^{(i)}$ are the $i$-th layer's weight matrix and bias vector, respectively.
If the activation patterns of all layers are fixed, then $f(\bm{x})$ is a linear function.
When the activation pattern changes, $f(\bm{x})$ switches from one linear function to another linear function.
Over the entire space of $\bm{x}$, $f(\bm{x})$ is \emph{piecewise} linear.
The specific activation patterns of all layers define a specific linear piece of the function, 
or a \emph{linear region} (following the notion introduced by~\cite{montufar2014number}).
Conventional ReLU (or hard tanh) networks also represent piecewise linear functions and their linear regions 
are joined together at their boundaries, whereas in \kwta networks the linear
regions are disconnected (see \figref{illus}).

\begin{figure*}[t]
    \centering
    \includegraphics[width=0.98\textwidth]{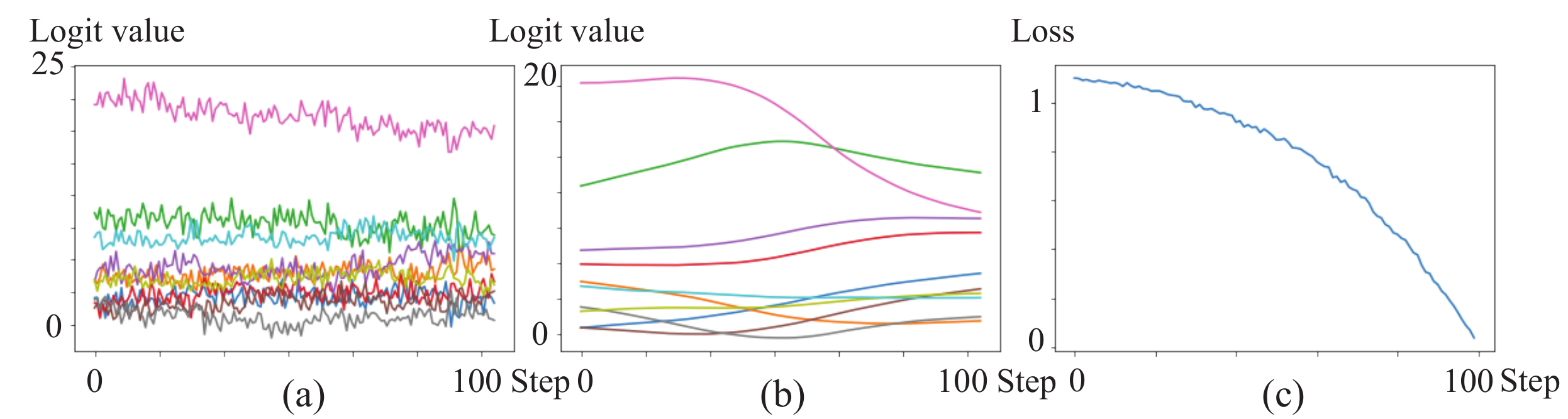}
    \vspace{-3mm}
    \caption{ 
    \textbf{(a, b)} We plot the change of 10 logits values 
    when conducting untargeted PGD attack with 100 iterations. 
    X-axis indicates the perturbation size $\epsilon$ and Y-axis indicates the
    10 color-coded logits values. 
    \textbf{(a)} When we apply PGD attack on $k$-WTA ResNet18, the
    strong discontinuities w.r.t.~to input invalidate gradient estimation,
    effectively defending well against the attack. 
    \textbf{(b)} In contrast, for a ReLU ResNet18, PGD attack can
    easily find adversarial examples due to the model's smooth change w.r.t. input.
    \textbf{(c)} In the process of training \kwta ResNet18,
    the loss change w.r.t. model weights is largely smooth.
    Thus, the training is not harmed by \kwta's discontinuities.
    }
    \label{fig:dis}
    \vspace{-3mm}
\end{figure*}

\paraspace
\paragraph{Linear region density.}
Next, we gain some insight on the distribution of those linear regions.
This is of our interest because
if the linear regions are densely distributed, a small 
$\Delta\bm{x}$ perturbation from any data point $\bm{x}$ will likely cross the boundary 
of the linear region where $\bm{x}$ locates.
Whenever boundary crossing occurs, the gradient becomes undefined
(see \figref{dis}-a).

\begin{wrapfigure}[4]{r}{0.215\textwidth}
  \centering
  \vspace{-4mm}
  \includegraphics[width=0.215\textwidth]{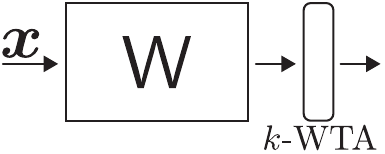}
\end{wrapfigure}
For the purpose of analysis, consider an input $\bm{x}$ passing through 
a layer followed by a \kwta activation (see adjacent figure). The output 
from the activation is $\phi_k(\sfW\bm{x}+\bb)$.
We would like to understand, when $\bm{x}$ is  
changed into $\bm{x}'$, how likely the activation pattern of $\phi_k$ will change.
First, notice that if $\bm{x}'$ and $\bm{x}$ satisfy $\bm{x}'=c\cdot\bm{x}$ with some $c>0$,
the activation pattern remains unchanged.
Therefore, we introduce a notation $\sfd(\bm{x},\bm{x}')$
that measures the ``perpendicular'' distance between $\bm{x}$ and $\bm{x}'$, 
one that satisfies $\bm{x}' = c\cdot\left(\bm{x} +\sfd(\bm{x},\bm{x}')\bm{x}_{\perp}\right)$
for some scalar $c$, where $\bm{x}_{\perp}$ is a unit vector perpendicular
to $\bm{x}$ and on the plane spanned by $\bm{x}$ and $\bm{x}'$.
With this notion, and if the elements in $\sfW$ is initialized by sampling
from $\mathcal{N}(0,\frac{1}{l})$ and $\bb$ is initialized as zero,
we find the following property:
\begin{theorem}[Dense discontinuities]\label{thm:dense}
    Given any input $\bm{x}\in\mathbb{R}^m$ and some $\beta$, and 
    $\forall \bm{x}'\in\mathbb{R}^m$ such that $\frac{\sfd^2(\bm{x},\bm{x}')}{\|\bm{x}\|_2^2}\ge\beta$,
    if the following condition
    \begin{align*}
    l\geq \Omega\left( \left(\frac{m}{\gamma} \cdot \frac{1}{\beta}\right)\cdot \log\left(\frac{m}{\gamma} \cdot \frac{1}{\beta}\right) \right)
    \end{align*}
    is satisfied, then 
    with a probability at least $1-\cdot 2^{-m}$,
    we have $\mathcal{A}(\sfW\bm{x} + \bb)\not=\mathcal{A}(\sfW\bm{x}' + \bb)$. 
\end{theorem}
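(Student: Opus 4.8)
The plan is to reduce the statement to a clean question about order statistics of independent Gaussians and then produce $\Omega(m)$ nearly-boundary coordinates that independently flip their winner/loser status under the perturbation. First I would use the freedom in the definition of $\sfd$ to write $\bm{x}' = c(\bm{x} + \sfd(\bm{x},\bm{x}')\,\bm{x}_\perp)$ with $c>0$, $\bm{x}_\perp\perp\bm{x}$, $\|\bm{x}_\perp\|_2 = 1$, and set $d\coloneqq\sfd(\bm{x},\bm{x}')$, so that $d^2/\|\bm{x}\|_2^2\ge\beta$. Writing $u_i\coloneqq\langle\sfW_i,\bm{x}\rangle$ and $w_i\coloneqq\langle\sfW_i,\bm{x}_\perp\rangle$ for the $i$-th row $\sfW_i$, the Gaussian initialization makes the $u_i,w_i$ jointly Gaussian and, crucially, $\mathrm{Cov}(u_i,w_i)=\frac{1}{l}\langle\bm{x},\bm{x}_\perp\rangle=0$; hence $\{u_i\}_{i\in[l]}$ and $\{w_i\}_{i\in[l]}$ are mutually independent families with $u_i\sim\mathcal{N}(0,\|\bm{x}\|_2^2/l)$ and $w_i\sim\mathcal{N}(0,1/l)$. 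Since $\bb=0$ and $\mathcal{A}$ is invariant under positive scaling, $\mathcal{A}(\sfW\bm{x}')=\mathcal{A}(\bm{u}+d\bm{w})$, so it suffices to show the top-$k$ index sets of $\bm{u}$ and of $\bm{u}+d\bm{w}$ differ (with $k=\lfloor\gamma l\rfloor$), with probability at least $1-2^{-m}$.

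Second, I would isolate a simple sufficient condition for a pattern change: if there is a pair $(i,j)$ with $i$ in the top-$k$ of $\bm{u}$ and $j$ outside it, yet $u_i+dw_i<u_j+dw_j$, then $j$ outranks $i$ in $\bm{u}+d\bm{w}$ although $i$ is a winner and $j$ a loser of $\bm{u}$, so the two top-$k$ sets cannot coincide (ties occur with probability zero). It thus suffices to exhibit one such crossing pair. To bound its probability I would condition on $\bm{u}$; by the independence above, $\bm{w}$ remains i.i.d.\ $\mathcal{N}(0,1/l)$. Let $\tau$ be the $k$-th largest value of $\bm{u}$ and form $R$ disjoint boundary pairs $(i_a,j_a)$ by matching the $a$-th smallest winner with the $a$-th largest loser, with order gap $g_a\coloneqq u_{i_a}-u_{j_a}\ge 0$. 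The crossing event for pair $a$ is $\{d(w_{j_a}-w_{i_a})>g_a\}$, of conditional probability $\Phi\!\bigl(-g_a\sqrt{l}/(d\sqrt{2})\bigr)$; because distinct pairs use disjoint coordinates of $\bm{w}$, these events are mutually independent, so the probability that none crosses is $\prod_{a=1}^{R}\bigl(1-\Phi(-g_a\sqrt{l}/(d\sqrt{2}))\bigr)$.

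Third, I would turn this product into the claimed bound through a concentration argument on the order statistics of $\bm{u}$ near $\tau$. A pair crosses with probability bounded below by an absolute constant whenever $g_a\lesssim d/\sqrt{l}$, so I need, whp, at least $R=\Omega(m)$ boundary pairs with gaps of this size. The number of coordinates of $\bm{u}$ in a band of width $\Theta(d/\sqrt{l})$ about $\tau$ is essentially a Binomial with mean of order $l f(\tau)\cdot d/\sqrt{l}=\sqrt{l}\,f(\tau)\,d$, where $f(\tau)=\varphi(z_{1-\gamma})\sqrt{l}/\|\bm{x}\|_2$ is the density of $u_i$ at its $(1-\gamma)$-quantile and $z_{1-\gamma}$ is the standard-normal quantile. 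Substituting $d\ge\sqrt{\beta}\,\|\bm{x}\|_2$ and the tail estimate $\varphi(z_{1-\gamma})\approx\gamma\sqrt{2\ln(1/\gamma)}$ relates this mean to $\gamma$, $\beta$, and $l$; requiring it to exceed $\Omega(m)$ while demanding the Binomial concentrate with failure probability at most $2^{-m}$ yields a sufficient condition on $l$ of the stated form, the $m/(\gamma\beta)$ scaling and the logarithmic factor emerging from a conservative accounting of the Gaussian-tail relation between $\gamma$ and $z_{1-\gamma}$ and of the binomial deviation bound. The main obstacle is precisely this step: the threshold $\tau$ is itself random, so the gaps $g_a$ must be controlled near it with exponentially small failure probability, and I would need a uniform handle on the empirical quantiles of $\bm{u}$ (e.g.\ a DKW-type or binomial-tail bound on how many samples lie above shifted levels) rather than the heuristic spacing estimate. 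The one-crossing-pair criterion of the second step is what lets me avoid the harder task of tracking how the entire threshold moves under the perturbation.
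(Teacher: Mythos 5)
Your core mechanism---rotating so that $u_i=\langle \sfW_i,\bm{x}\rangle$ and $w_i=\langle \sfW_i,\bm{x}_\perp\rangle$ form independent Gaussian families, then exhibiting a winner/loser pair of $\bm{u}$ whose order flips in $\bm{u}+d\bm{w}$---is the same one the paper uses in its key lemma (Lemma~\ref{lem:main_lemma}), and your matched-boundary-pair counting would give a correct proof of the \emph{fixed}-$\bm{x}'$ statement once you patch the issue you flag yourself: the paper deals with the random threshold exactly the way you suggest, replacing empirical quantiles by deterministic Gaussian quantiles $R_1'<R_1<R_2<R_2'$ and Chernoff bounds, so that with high probability every coordinate above $R_2$ is a winner, every coordinate below $R_1$ is a loser, and both bands $[R_2,R_2']$ and $[R_1',R_1]$ contain $\Omega(\varepsilon\gamma l)$ coordinates.

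The genuine gap is the quantifier order. The statement the paper actually proves (Theorem~\ref{thm:formal1}, the formal version of Theorem~\ref{thm:dense}) asserts that with probability $1-\delta\cdot 2^{-m}$ over the draw of $\sfW$, the activation pattern differs \emph{simultaneously for all} $\bm{x}'$ with $\sfd^2(\bm{x},\bm{x}')/\|\bm{x}\|_2^2\ge\beta$; this uniform version is what the adversarial-robustness application needs, since the attacker chooses $\bm{x}'$ after $\sfW$ is fixed. Your argument is per-$\bm{x}'$, and it does not upgrade to the uniform statement by a union bound: your crossing events are bare inequalities whose slack is of order $d/\sqrt{l}$ (the scale of $w_{j_a}-w_{i_a}$), which vanishes as $l$ grows, so a crossing established at one direction cannot be transferred to nearby directions in a net. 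The paper's proof is engineered around precisely this point: its key lemma produces a crossing with a margin $\frac{\sqrt{1-\alpha^2}}{24\alpha}\cdot\sqrt{2\pi}$ that is \emph{constant in $l$} (a winner in $[R_2,R_2']$ whose perpendicular component is very negative, paired with a loser in $[R_1',R_1]$ whose perpendicular component is nonnegative), and then combines (i) the row-norm bound $\|\sfW_i\|_2\le 10\sqrt{ml}$, (ii) an $\varepsilon$-net over directions at the critical angle (Lemma~\ref{lem:new_eps_net}) chosen fine enough that the constant margin absorbs the net error $\|\sfW_i\|_2\,\varepsilon$, and (iii) convexity of linear regions (Lemmas~\ref{lem:region_is_convex} and~\ref{lem:small_implies_large}) to pass from perturbations at the critical angle to all larger ones. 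None of these ingredients appears in your plan; in particular, without a margin the net argument cannot even be started, and it is the union bound over the net---not the Gaussian-tail accounting or binomial deviations you conjecture---that consumes the $\log$ factor in the condition on $l$.
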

Here $l$ is the width of the layer, and $\gamma$ is again
the sparsity ratio in \kwta. 
This theorem informs us that 
the larger the layer width $l$ is, the smaller $\beta$---and thus the smaller 
perpendicular perturbation distance $\sfd(\bm{x},\bm{x}')$---is needed
to trigger a change of the activation pattern,
that is, as the layer width increases, the piecewise linear regions become finer
(see \appref{proofs} for proof and more discussion).
This property also echos a similar trend in ReLU networks, as pointed out by~\cite{raghu2017expressive}.

\paraspace
\paragraph{Why is the \kwta network trainable?}
While \kwta networks are highly discontinuous as revealed by \thmref{dense}
and our experiments (\figref{dis}-a), in practice we experience no difficulty on training these networks. 
Our next theorem sheds some light on 
the reason behind the training success.
\vspace{-2.5mm}
\begin{theorem}\label{thm:the_second_main_thm}
    Consider $N$ data points $\bm{x}_1,\bm{x}_2,\cdots,\bm{x}_N\in\mathbb{R}^m$.
    Suppose $\forall i\not= j, \frac{\bm{x}_i}{\|\bm{x}_i\|_2}\not =  \frac{\bm{x}_j}{\|\bm{x}_j\|_2}$.
 If $l$ is sufficiently large, then with a high probability, we have
 $\forall i\not =j,\mathcal{A}(\sfW\bm{x}_i + \bb)\cap \mathcal{A}(\sfW\bm{x}_j + \bb)=\varnothing$.
\end{theorem}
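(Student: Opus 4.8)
The plan is to prove the all-pairs statement by a union bound over the $\binom{N}{2}$ pairs, so it suffices to fix $i\ne j$ and show $\Pr\!\left[\mathcal{A}(\sfW\bm{x}_i+\bb)\cap\mathcal{A}(\sfW\bm{x}_j+\bb)\ne\varnothing\right]$ is tiny. Since $\bb=0$ at initialization and replacing $\bm{x}_i$ by $c\,\bm{x}_i$ with $c>0$ rescales every coordinate of $\sfW\bm{x}_i$ identically, the activation pattern depends only on the direction $\hat{\bm{x}}_i=\bm{x}_i/\|\bm{x}_i\|_2$; I may therefore assume each $\bm{x}_i$ is a unit vector, and the hypothesis becomes $\rho_{ij}:=\langle\hat{\bm{x}}_i,\hat{\bm{x}}_j\rangle<1$. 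For each coordinate $p\in[l]$ set $U_p=(\sfW\hat{\bm{x}}_i)_p$ and $V_p=(\sfW\hat{\bm{x}}_j)_p$. Because the entries of $\sfW$ are i.i.d.\ $\mathcal{N}(0,1/l)$, across $p$ the pairs $(U_p,V_p)$ are i.i.d.\ mean-zero bivariate Gaussians with variances $1/l$ and correlation $\rho_{ij}$, and ties occur with probability zero so $\mathcal{A}$ is well defined. The geometric content is that a row of $\sfW$ cannot be extremely aligned with two distinct directions at once, so the two sets of ``most aligned'' coordinates should be disjoint.

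First I would localize the top-$k$ threshold. Fix a small $\epsilon>0$ and let $\tau$ solve $\Pr[\mathcal{N}(0,1/l)>\tau]=(1+\epsilon)\gamma$, where $\gamma=k/l$. The number of coordinates with $U_p>\tau$ is $\mathrm{Bin}(l,(1+\epsilon)\gamma)$ with mean $(1+\epsilon)k$, so by a Chernoff bound at least $k$ of them exceed $\tau$ except with probability $e^{-\Omega(\epsilon^2 k)}$. On that event the $k$-th largest value of $\{U_p\}$ is itself $>\tau$, hence every index of $\mathcal{A}(\sfW\hat{\bm{x}}_i)$ satisfies $U_p>\tau$, i.e.\ $\mathcal{A}(\sfW\hat{\bm{x}}_i)\subseteq\{p:U_p>\tau\}$, and symmetrically $\mathcal{A}(\sfW\hat{\bm{x}}_j)\subseteq\{p:V_p>\tau\}$. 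Thus, off a bad event of probability $e^{-\Omega(\epsilon^2 k)}$,
\[
\mathcal{A}(\sfW\hat{\bm{x}}_i)\cap\mathcal{A}(\sfW\hat{\bm{x}}_j)\ \subseteq\ \{\,p\in[l]:U_p>\tau\ \text{and}\ V_p>\tau\,\}.
\]

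The right-hand side is a collection of independent coordinate events, so by the first-moment method the probability it is nonempty is at most $l\cdot Q$, where $Q=\Pr[U_1>\tau,\,V_1>\tau]$ is the upper-right orthant probability of a standardized bivariate normal at level $z=\sqrt{l}\,\tau$ with correlation $\rho_{ij}$. Decomposing $\sqrt{l}\,V_1=\rho_{ij}\sqrt{l}\,U_1+\sqrt{1-\rho_{ij}^2}\,\xi$ with $\xi$ an independent standard normal and bounding the tail yields $Q\le\exp\!\big(-z^2/(1+\rho_{ij})\big)$, while the defining relation $\bar\Phi(z)=(1+\epsilon)\gamma$ gives $z^2=(2+o(1))\log(1/\gamma)$. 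Hence
\[
l\,Q\ \le\ l\,\gamma^{\,2/(1+\rho_{ij})}\ =\ k^{\,2/(1+\rho_{ij})}\,l^{-(1-\rho_{ij})/(1+\rho_{ij})},
\]
which tends to $0$ as $l\to\infty$ whenever $\rho_{ij}<1$. Combining the two bad-event probabilities with $l\,Q$ and taking a union bound over all pairs gives the theorem, provided $l$ is large enough in terms of $N$, $\gamma$, and $\max_{i\ne j}\rho_{ij}$.

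The hard part will be the final estimate: the argument closes only when $l\,\gamma^{2/(1+\rho_{ij})}\to0$, which forces the threshold $z$ to grow --- so $\gamma=k/l$ must be small (e.g.\ $k$ held fixed as $l$ grows) --- and forces the directions to stay bounded away from coincident. As $\max_{i\ne j}\rho_{ij}\to1$ the exponent $(1-\rho_{ij})/(1+\rho_{ij})$ vanishes and the required width $l$ blows up, so the quantitative meaning of ``sufficiently large $l$'' rests on a lower bound for the minimum pairwise angle of the data directions. Turning the heuristic $z^2\approx2\log(1/\gamma)$ and the orthant bound into rigorous inequalities simultaneously with the random-threshold localization is the delicate step; everything else is routine concentration.
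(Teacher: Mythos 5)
Your route is genuinely different from the paper's, and its skeleton is sound: the paper's proof (Theorem~\ref{thm:disjoint}) conditions on row norms via $\chi^2$ quantiles and then applies a \emph{deterministic} geometric lemma (a row whose direction has correlation $\ge\beta$ with $\hat{\bm{x}}_i$ must have correlation $\le(1-\varepsilon_\alpha)\beta$ with $\hat{\bm{x}}_j$, Lemma~\ref{lem:angle_difference}), so that on the good event the patterns are disjoint deterministically; you instead use an absolute threshold $\tau$ and a first-moment bound on the bivariate-Gaussian orthant probability. Your reduction to unit vectors, the inclusion $\mathcal{A}(\sfW\hat{\bm{x}}_i)\subseteq\{p:U_p>\tau\}$ once at least $k$ coordinates exceed $\tau$, and the orthant bound $Q\le\exp\left(-z^2/(1+\rho_{ij})\right)$ (e.g.\ via $U_1+V_1$ being centered Gaussian with variance $2(1+\rho_{ij})/l$) are all correct, and if completed this argument would be somewhat leaner than the paper's, avoiding both the $\chi^2$ shell lemma and the $\varepsilon$-net cap-counting.

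However, there is a genuine gap in the probability accounting of your localization step. The theorem's regime (made explicit in the paper's formal version) fixes $k\ge1$ and $\delta\in(0,1)$ and lets $l\to\infty$; you yourself observe that your first-moment term forces $\gamma=k/l\to0$, e.g.\ $k$ held fixed. But with $k$ fixed and your ``small $\epsilon>0$'' fixed, the Chernoff failure probability $e^{-\Omega(\epsilon^2k)}$ for the event ``at least $k$ coordinates exceed $\tau$'' is a \emph{constant} bounded away from $0$ --- it does not decay as $l$ grows --- so the total failure probability cannot be driven below a prescribed $\delta$, let alone survive the union bound over $\binom{N}{2}$ pairs. The fix is to take $\epsilon$ large rather than small: choose $\epsilon$ so that the binomial mean is $(1+\epsilon)k=\Theta\left(k\log(N/\delta)\right)$; then $\Pr\left[\mathrm{Bin}(l,(1+\epsilon)\gamma)<k\right]\le e^{-\Omega(k\log(N/\delta))}\le\delta/\mathrm{poly}(N)$ even for $k=1$, while the first-moment term becomes $l\left((1+\epsilon)k/l\right)^{2/(1+\rho_{ij})}$, which still vanishes as $l\to\infty$ for fixed $k$, $\epsilon$, and $\rho_{ij}<1$. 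This inflation by a $\log(N/\delta)$ factor is exactly the device in the paper's proof, which arranges the expected number of rows with $W_t\bm{x}_i>\beta'R$ to be at least $100k\log(N/\delta)$ before applying Chernoff. With that one change (plus the routine two-sided Gaussian tail estimate giving $z^2\ge(2-o(1))\log(1/\gamma)$, which you already flag as the delicate quantitative step), your argument closes.
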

\vspace{-1mm}
This theorem is more formally stated in \thmref{disjoint} in \appref{proofs}
together with a proof there.
Intuitively speaking, it states that if the network is sufficiently wide,
then for any $i\ne j$, activation pattern of input data $\bm{x}_i$ 
is almost separated from that of $\bm{x}_j$.
Thus, the weights for predicting $\bm{x}_i$'s and $\bm{x}_j$'s labels can be optimized almost independently,
without changing their individual activation patterns.
In practice, the activation patterns of $\bm{x}_i$ and $\bm{x}_j$ are not fully separated
but weakly correlated. During the optimization, the activation pattern of a
data point $\bm{x}_i$ may change, but the chance is relatively low---a similar 
behavior has also been found in ReLU networks~\citep{ll18,du2018gradient,als18,als19,sy19}.

Further, notice that the training loss takes a summation over all training data points.
This means a weight update would change only a small set of activation patterns (since the chance
of having the pattern changed is low);
the discontinuous change on the loss value, 
after taking the summation, will be negligible (see \figref{dis}-c).
Thus, the discontinuities in \kwta is not harmful to network training.




\vspace{-0.05in}
\section{Experimental Results}\label{sec:exp}
\vspace{-0.2mm}
\vspace{-0.05in}
We evaluate the robustness of \kwta networks under adversarial attacks. 
Our evaluation considers multiple training methods on different network architectures
(see details below).
When reporting statistics, we use 
$A_{rob}$ to indicate the robust accuracy under adversarial attacks
applied to the test dataset,
and $A_{std}$ to indicate the accuracy on the clean test data.
We use \kwta-$\gamma$ to represent \kwta activation with sparsity ratio $\gamma$.


\vspace{-0.05in}
\subsection{Robustness under White-box Attacks}
\vspace{-0.05in}
The rationale behind \kwta activation is to destroy network gradients---information needed in white-box attacks. We therefore evaluate \kwta networks 
under multiple recently proposed white-box attack methods, including
\textit{Projected Gradient Descent} (PGD)~\citep{madry2017towards},
\textit{Deepfool}~\citep{moosavi2016deepfool},
C\&W attack~\citep{carlini2017towards}, and
\textit{Momentum Iterative Method} (MIM)~\citep{dong2018boosting}.
Since \kwta activation can be used in almost any training method, be it adversarial training or not,
we also consider multiple training methods, 
including natural (non-adversarial) training, adversarial training (AT)~\citep{madry2017towards}, 
TRADES~\citep{zhang2019theoretically} and free adversarial training (FAT)~\citep{shafahi2019adversarial}.

\begin{table}[t]
\vspace{-1mm}
\caption{Adversarial robustness on CIFAR-10 and SVHN datasets. 
$A_{rob}$ in the last column denotes the empirical worst-case robustness 
among different attacks (columns) for each network optimized by different training methods (row). 
The \textbf{bold} numbers indicate the best $A_{rob}$ robustness achieved on 
ReLU and \kwta networks by each training method.
    \label{tab:cifar_svhn_main}}
\centering

\begin{tabular}{ll|ccccccc}

\multicolumn{9}{c}{\textbf{CIFAR-10}} \\ \bottomrule
  Training& Activation & $A_{std}$  & PGD & C\&W & Deepfool & MIM & BB & $A_{rob}$ \\ \hline
\multirow{3}{*}{Natural} & ReLU &   92.9\%  & 0.0\% & 0.0\% & 1.5\% & 0.0\% & 18.9\% & 0.0\% \\
&\kwta-0.1   & 89.3\%  & 13.3\% & 27.9\% & 55.6\%& 13.1\% & 62.6\% &\textbf{13.1\%} \\ 
&\kwta-0.2   & 91.7\%  &  4.2\%&  6.2\% & 47.8\% & 3.9\% & 66.8\% &4.2\% \\ \hline
\multirow{3}{*}{AT} & ReLU  & 83.5\%  & 46.3\% & 43.6\% & 46.8\% & 45.9\% & 71.0\% & 43.6\% \\
&\kwta-0.1 & 78.9\%  & 51.4\% & 64.4\% & 70.4\% & 50.7\% &73.4\% & \textbf{50.7\%}  \\ 
&\kwta-0.2  & 81.4\%  & 48.4\% & 52.7\% & 66.1\% &  47.4\% & 73.5\% & 47.4\% \\ \hline
\multirow{3}{*}{TRADES} & ReLU & 79.7\% & 49.8\% & 52.3\% & 57.6\% & 49.9\%  &  70.6\%& 49.8\% \\
&\kwta-0.1  & 76.6\% & 55.0\% & 62.2\% & 66.0\% & 57.5\% & 72.3\% & \textbf{55.0\%}  \\ 
&\kwta-0.2  & 80.4\% & 51.5\% & 57.7\% & 63.9\% & 53.4\% & 74.7\% & 51.5\%  \\ \hline
\multirow{3}{*}{FAT} & ReLU  & 82.6\% & 42.7\% & 44.4\% & 49.7\% & 41.6\% & 73.4\% & 41.6\% \\
&\kwta-0.1  & 78.4\%  & 51.7\% & 66.3\% & 72.4\% & 49.1\% & 72.3\% & \textbf{49.1\%} \\
&\kwta-0.2  & 82.8\% & 48.4\% & 60.5\% & 67.2\% & 46.7\% &  76.8\% &  46.7\% \\ \hline
\toprule
\end{tabular}

\begin{tabular}{ll|ccccccc}

\multicolumn{9}{c}{\textbf{SVHN}} \\ \bottomrule
  Training& Activation & $A_{std}$  & PGD & C\&W & Deepfool & MIM & BB & $A_{rob}$ \\ \hline
\multirow{3}{*}{Natural} & ReLU &   95.1\%  & 0.0\% & 0.0\% & 2.5\% & 0.0\% & 14.7\% & 0.0\%  \\
&\kwta-0.1   & 92.6\%  &  10.2\% & 19.5\% & 88.7\%& 11.6\% & 51.4\% & \textbf{10.2\%} \\ 
&\kwta-0.2   & 93.8\%  &   4.3\%&  8.0\%&   86.8\%  & 8.3\% & 56.7\% & 4.3\% \\ \hline
\multirow{3}{*}{AT} & ReLU  & 84.2\%  & 44.5\% & 42.7\% & 70.3\% & 48.4\%  & 77.7\%& 42.7\% \\
&\kwta-0.1 & 79.9\%  & 62.2\%   & 65.7\% & 71.5\%  & 56.9\% & 76.1\%   &\textbf{56.9\%}\\ 
&\kwta-0.2  & 82.4\%  &  53.2\% & 63.6\% & 77.4\% & 52.3\% & 74.2\% & 52.3\%  \\ \hline
\multirow{3}{*}{TRADES} & ReLU & 84.7\% &  47.4\% & 51.6\% & 76.9\% &49.6\% & 76.5\% & 47.4\% \\
&\kwta-0.1  & 81.6\% & 61.3\% & 77.4\% & 79.4\% & 58.3\% &  78.1\% & \textbf{58.3\%} \\ 
&\kwta-0.2  & 85.4\% & 56.7\% & 59.2\% & 71.6\% & 54.5\% &  79.3\% & 54.5\% \\ \hline
\multirow{3}{*}{FAT} & ReLU  & 85.9\% & 40.8\% & 46.2\% & 76.1\% & 39.9\% & 76.9\% & 40.8\% \\
&\kwta-0.1  & 85.5\%  & 57.7\% & 70.0\% & 77.0\% & 62.8\% & 75.6\% & \textbf{57.7\%}  \\
&\kwta-0.2  & 86.8\% & 54.3\% & 64.3\% & 74.7\% & 55.2\% &  74.4\% &54.3\% \\ \hline
\toprule
\end{tabular}
\vspace{-5mm}
\end{table}

In addition, we evaluate the robustness under transfer-based Black-box (BB) attacks~\citep{papernot2017practical}.
The black-box threat model requires no knowledge about network architecture and parameters.
Thus, we use a pre-trained VGG19 network~\citep{simonyan2014very} as the source
model to generate adversarial examples using PGD. As demonstrated by \citet{su2018robustness}, 
VGG networks have the strongest transferability among different architectures. 

In each setup, we compare the robust accuracy of \kwta networks with standard ReLU networks
on three datasets, CIFAR-10, SVHN, and MNIST. Results on the former two are reported
in \tabref{cifar_svhn_main}, while the latter is reported in \appref{mnist}.
We use ResNet-18 for CIFAR-10 and SVHN. The perturbation range
is 0.031 (CIFAR-10) and 0.047 (SVHN) for pixels ranging in $[0,1]$. 
More detailed training and attacking settings are reported in \appref{settings}.  



\textbf{The main takeaway} from these experiments (in \tabref{cifar_svhn_main}) 
is that \kwta is able to universally improve the white-box robustness, regardless of the training methods.
The \kwta robustness under black-box attacks is not always significantly better than ReLU networks.
But black-box attacks, due to the lack of network information, are generally much harder
than white-box attacks. In this sense, white-box attacks make the networks more vulnerable,
and \kwta is able to improve a network's worst-case robustness. This improvement is not tied to
any specific training method, achieved with
no significant overhead, just by a simple replacement of ReLU with \kwta.


\citet{obfuscated-gradients} showed that gradient-based defenses 
may render the network more vulnerable under black-box attacks than 
under gradient-based white-box attacks.
However, we have not observed this behavior in \kwta networks. 
Even under the strongest black-box attack, i.e., by generating adversarial examples from
an independently trained copy of the target network, gradient-based attacks are still
stronger (with higher successful rate) than black-box attacks (see \appref{add}).  

Additional experiments include: \textbf{1)} tests under transfer attacks across 
two independently trained \kwta networks
and across \kwta and ReLU networks,
\textbf{2)} evaluation of \kwta performance on different network architectures,
and \textbf{3)} comparison of \kwta performance with the LWTA (recall \secref{kwta_sub}) performance. See \appref{add} for details.

\vspace{-0.07in}
\subsection{Varying sparsity ratio $\gamma$ and model architecture.}
\vspace{-0.06in}
We further evaluate our method on various network architectures with different sparsity ratios $\gamma$.
\figref{gamma_change} shows the standard test accuracies and robust accuracies 
against PGD attacks while $\gamma$ decreases. 
To test on different network architectures,
we apply \kwta to ResNet18, DenseNet121 and Wide ResNet (WRN-22-12).
In each case, starting from $\gamma=0.2$, we decrease $\gamma$ using incremental fine-tuning.
We then evaluate the robust accuracy on CIFAR dataset, 
taking 20-iteration PGD attacks with a perturbation range $\epsilon=0.31$ for
pixels ranging in $[0, 1]$.

We find that when $\gamma$ is larger than $\sim 0.1$, reducing $\gamma$ 
has little effect on the standard accuracy, but increases the robust accuracy. 
When $\gamma$ is smaller than $\sim 0.1$, reducing $\gamma$ drastically lowers both the standard
and robust accuracies.
The peaks in the $A_{rob}$ curves (\figref{gamma_change}-right) are consistent with our theoretical
understanding: 
\thmref{dense} suggests that when $l$ is fixed, a smaller $\gamma$ 
tends to sparsify the linear region boundaries, 
exposing more gradients to the attacker.
Meanwhile, as also discussed in \secref{theory}, a smaller $\gamma$ leads to a
larger discontinuity jump and thus tends to improve the robustness.


\begin{figure*}[t]
	\centering
	\vspace{-1mm}
	\includegraphics[width=0.88\textwidth]{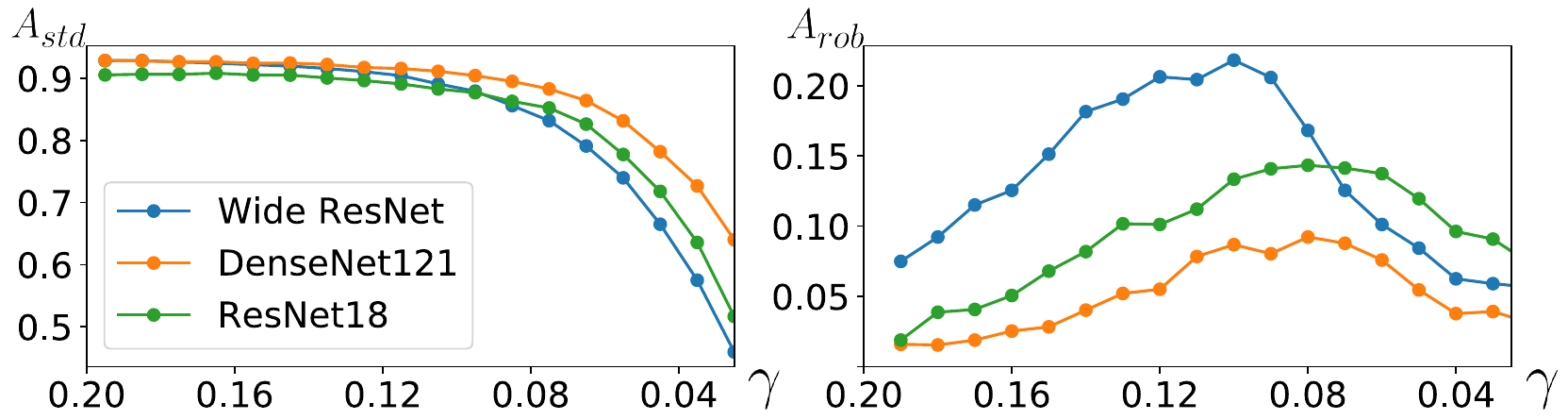}
	\vspace{-3mm}
	\caption{Robustness changing w.r.t. $\gamma$ on CIFAR. 
    When $\gamma$ decreases, the standard test accuracy (left) starts to drop after a certain point.
    The robust accuracy (right) first increases then decreases.
	}
    \label{fig:gamma_change}
	\vspace{-2mm}
\end{figure*}

\vspace{-0.05in}
\subsection{Loss Landscape in Gradient-Based Attacks}\label{sec:loss_land}
\vspace{-0.05in}
\begin{figure*}[t]
	\centering
	\includegraphics[width=0.96\textwidth]{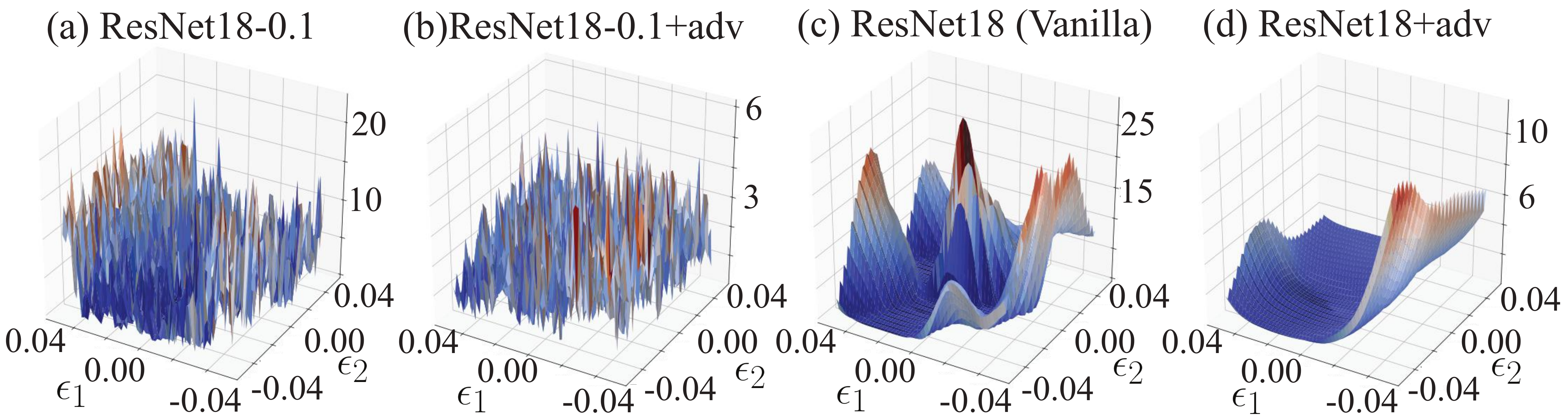}
	\vspace{-2mm}
	\caption{Gradient-based attack's loss landscapes in $k$-WTA \textbf{(a, b)} 
        and conventional ReLU models \textbf{(c, d)}.
            {(a,b)} \kwta Models have much more non-convex and non-smooth 
            landscapes.
	Also, the model optimized by adversarial training (b) has a lower absolute value of loss.
	}
    \label{fig:loss}
    \vspace{-4mm}
\end{figure*}
We now empirically unveil why \kwta is able to improve the network's robustness 
(in addition to our theoretical analysis in \secref{theory}).
Here we visualize the attacker's loss landscape in gradient-based attacks
in order to reveal the landscape change caused by \kwta.
Similar to the analysis in~\citet{tramer2017ensemble}, 
we plot the attack loss of a model with respect to its input
on points $\bm{x}^\prime = \bm{x} + \epsilon_1 \bm{g}_1 + \epsilon_2 \bm{g}_2$,
where $\bm{x}$ is a test sample from CIFAR test set, 
$\bm{g}_1$ is the direction of the loss gradient with respect to the input,
$\bm{g}_2$ is another random direction, $\epsilon_1$ and $\epsilon_2$ 
sweep in the range of $[-0.04, 0.04]$, with 50 samples each. 
This results in a 3D landscape plot with 2500 data points (\figref{loss}).

As shown~in \figref{loss}, \kwta models (with $\gamma=0.1$) have a highly
non-convex and non-smooth loss landscape. Thus, the estimated gradient 
is hardly useful for adversarial searches. 
This explains why $k$-WTA models can effectively resist gradient-based attacks.
In contrast, ReLU models have a much smoother loss surface, 
from which adversarial examples can be easily found using gradient descent.

Inspecting the range of loss values in \figref{loss},
we find that    
adversarial training tends to 
compress the loss landscape's dynamic range
in both the gradient
direction and the other random direction, 
making the dynamic range smaller than that of the models without adversarial training.
This phenomenon has been observed in ReLU networks~\citep{madry2017towards,tramer2017ensemble}. 
Interestingly, $k$-WTA models manifest a similar behavior (\figref{loss}-a,b).
Moreover, we find that in \kwta models a larger $\gamma$ 
leads to a smoother loss surface than a smaller $\gamma$ (see 
\appref{surf} for more details).

\vspace{-0.05in}
\vspace{-0.25mm}
\section{Conclusion}
\vspace{-0.25mm}
\vspace{-0.05in}
This paper proposes to replace widely used activation functions with 
the \kwta activation
for improving the neural network's robustness against 
adversarial attacks. This is the only change we advocate. 
The underlying idea is to embrace the discontinuities introduced by \kwta functions to 
make the search for adversarial examples more challenging. 
Our method comes almost for free,
harmless to network training, and
readily useful in the current paradigm of neural networks.

\bibliographystyle{iclr2020_conference}
\bibliography{ref}

\newpage
\appendix
\begin{center}
\Large
\textbf{Supplementary Document}\\ 
\smallskip
\textbf{Enhancing Adversarial Defense by $k$-Winners-Take-All}
\medskip
\end{center}

\section{Other Related Work}\label{sec:related}
In this section, we briefly review the key ideas of attacking neural network models
and existing defense methods based on adversarial training.

\paragraph{Attack methods.}
Recent years have seen adversarial attack studied extensively.
The proposed attack methods fall under two general categories, 
\emph{white-box} and \emph{black-box} attacks.

The white-box threat model assumes that the attacker knows
the model's structure and parameters fully.
This means that the attacker can exploit the model's gradient (with respect to the input)
to find adversarial examples.
A baseline of such attacks is the
\textit{Fast Gradient Sign Method} (FGSM)~\citep{goodfellow2014explaining},
which constructs the adversarial example $\bm{x}^\prime$ of a given labeled data $(\bm{x}, y)$
using a gradient-based rule:
\begin{equation}
    \bm{x}^\prime = \bm{x} + \epsilon \text{sign}(\nabla_{\bm{x}} L(f(\bm{x}), y)),
\end{equation}
where $f(x)$ denotes the neural network model's output,
$L(\cdot)$ is the loss function provided $f(x)$ and input label $y$, 
and $\epsilon$ is the perturbation range for the allowed adversarial example.

Extending FGSM, \textit{Projected Gradient Descent} (PGD)~\citep{kurakin2016adversarial} 
utilizes local first-order gradient of the network in a multi-step fashion,
and is considered the ``strongest'' first-order adversary~\citep{madry2017towards}.
In each step of PGD, the adversary example is updated by a FGSM rule, namely,
\begin{equation}
    \bm{x}^\prime_{n+1} = \Pi_{\bm{x^\prime} \in \Delta_{\epsilon}} \bm{x}^\prime_{n} + \epsilon \text{sign}(\nabla_{\bm{x}} L(f(\bm{x}^\prime_{n}), y)),
\end{equation}
where $\bm{x}^\prime_{n}$ is the adversary examples after $n$ steps
and $\Pi_{\bm{x} \in \Delta_{\epsilon}}(\bm{x}^\prime_n)$ 
projects $\bm{x}^\prime_{n}$ back into an allowed perturbation range $\Delta_{\epsilon}$
(such as an $\epsilon$ ball of $\bm{x}$ under certain distance measure).
Other attacks include Deepfool~\citep{moosavi2016deepfool}, C\&W~\citep{carlini2017towards} and momentum-based attack~\citep{dong2018boosting}.
Those methods are all using first-order gradient information to construct adversarial samples.



The black-box threat model is a strict subset of the white-box threat model. It assumes that 
the attacker has no information about the
model’s architecture or parameters. Some black-box attack model allows the attacker 
to query the victim neural network to gather (or reverse-engineer) information.
By far the most successful black-box attack is transfer attack~\citep{papernot2017practical, tramer2017ensemble}.
The idea is to first construct adversarial examples on an adversarially trained network and then attack
the black-box network model use these samples. There also exist some
gradient-free black-box attack methods, such as boundary
attack~\citep{brendel2017decision, chen2019boundary}, one-pixel
attack~\citep{su2019one} and local search attack~\citep{narodytska2016simple}.
Those methods rely on repeatedly evaluating the model and are not as
effective as gradient-based white-box attacks.

\paragraph{Adversarial training.}
Adversarial training
\citep{goodfellow2014explaining,madry2017towards,kurakin2016adversarial,huang2015learning}
is by far the most successful method against adversarial attacks.  
It trains the network model with adversarial images generated during the training
time.  
~\citet{madry2017towards} showed that adversarial training 
in essence solves the following min-max optimization problem:
\begin{equation}
    \min_{f} \mathbb{E}\{ \max_{\bm{x}^\prime \in \Delta_{\epsilon}} L(f(\bm{x}^\prime), y) \},
\end{equation}
where $\Delta_{\epsilon}$ is the set of allowed perturbations of training samples, and $y$ denotes
the true label of each training sample. 
Recent works that achieve state-of-the-art adversarial
robustness rely on adversarial training ~\citep{zhang2019theoretically,
xie2018feature}.  However, adversarial training is notoriously slow because it
requires finding adversarial samples on-the-fly at each training epoch. 
Its prohibitive cost makes adversarial training difficult to scale 
to large datasets such as ImageNet \citep{deng2009imagenet} unless 
enormous computation resources are available.  
Recently, \citet{shafahi2019adversarial} revise the adversarial training algorithm to make it
has similar training time as regular training, 
while keep the standard and robust accuracy comparable to standard adversarial training.

\paragraph{Regularization.}
Another type of defense is based on regularizing the neural network, and many works
of this type are combined with adversarial training. For example,
\textit{feature denoising}~\citep{xie2018feature}
adds several denoise blocks to the network structure and trains the network with adversarial training.
Zhang et al.~\citep{zhang2019theoretically} 
explicitly added a regularization term to balance the trade-off between standard accuracy and robustness,
obtaining state-of-the-art robust accuracy on CIFAR. 

Some other regularization-based methods require no adversarial training.
For example,
\citet{inputgradient} proposed to regularize the gradient of the
model with respect to its input; ~\citet{Zheng_2016_CVPR} generated adversarial samples
by adding random Gaussian noise to input data. 
However, these methods are shown to be brittle under stronger iterative
gradient-based attacks such as PGD~\citep{zhang2019theoretically}. 
In contrast, as demonstrated in our experiments, 
our method without using adversarial training is able to greatly improve 
robustness under PGD and other attacks.

\section{Discontinuity Jump of $\sfW\phi_k({x})$}\label{sec:jmp}
Consider a gradual and smooth change of the vector $\bm{x}$.
For the ease of illustration, let us assume all the values in $\bm{x}$ are distinct.
Because every element in $\bm{x}$ changes smoothly, when the activation
pattern $\mathcal{A}(\bm{x})$ changes, the $k$-th largest and $k+1$-th largest value
in $\bm{x}$ must swap: the previously $k$-th largest value is removed from the 
activation pattern, while the previously $k+1$-th largest value is added
in the activation pattern.
Let $i$ and $j$ denote the indices of the two values, that is,
$x_i$ is previously the $k$-th largest and $x_j$ is previously the $k+1$-th largest.
When this swap happens, $x_i$ and $x_j$ must be infinitesimally close to each other,
and we use $x^*$ to indicate their common value.

This swap affects the computation of $\sfW\phi_k(\bm{x})$. Before the swap happens,
$x_i$ is in the activation pattern but $x_j$ is not, therefore $\sfW_i$ takes effect
but $\sfW_j$ does not. After the swap, $\sfW_j$ takes effect while $\sfW_j$ is suppressed.
Therefore, the discontinuity jump due to this swap is $(\sfW_j - \sfW_i)x^*$.

When $\sfW$ is determined, the magnitude of the jump depends on $x^*$.
Recall that $x^*$ is the $k$-th largest value in $\bm{x}$ when the swap happens.
Thus, it depends on $k$ and in turn the sparsity ratio $\gamma$: the smaller the $\gamma$
is, the smaller $k$ is effectively used (for a fixed vector length).
As a result, the $k$-th largest value becomes larger---when $k=1$, the largest value of
$\bm{x}$ is used as $x^*$.

\section{Theoretical Proofs}\label{sec:proofs}

In this section, we will prove Theorem~\ref{thm:dense} and Theorem~\ref{thm:the_second_main_thm}. 
The formal version of the two theorems are Theorem~\ref{thm:formal1} and Theorem~\ref{lem:uniqueness_lemma} respectively.

\paragraph{Notation.} We use $[n]$ to denote the set $\{1,2,\cdots, n\}$. 
We use $\mathbf{1}(\mathcal{E})$ to indicate an indicator variable. 
If the event $\mathcal{E}$ happens, the value of $\mathbf{1}(\mathcal{E})$ is $1$.
Otherwise the value of $\mathbf{1}(\mathcal{E})$ is $0$.
For a weight matrix $W$, we use $W_i$ to denote the $i$-th row of $W$. 
For a bias vector $b$, we use $b_i$ to denote the $i$-th entry of $b$.

In this section, we show some behaviors of the $k$-WTA activation function. 
Recall that an $n$-layer neural network $f(x)$ with $k$-WTA activation function can be seen as the following:
\begin{align*}
f(x)=W^{(1)} \cdot \phi_k(W^{(2)} \cdot \phi_k ( \cdots  \phi_k(W^{(n)} x + b^{(n)}) ) + b^{(2)}) + b^{(1)}
\end{align*}
where $W^{(i)}$ is the weight matrix, $b^{(i)}$ is the bias vector of the $i$-th layer, and $\phi(\cdot)$ is the $k$-WTA activation function, i.e., for an arbitrary vector $y$, $\phi_k(y)$ is defined as the following:
\begin{align*}
\phi_k(y)_j=\left\{\begin{array}{ll}y_j, & \text{if $y_j$ is one of the top-$k$ largest values,} \\ 0, & \text{otherwise.}\end{array}\right.
\end{align*}
For simplicity of the notation, if $k$ is clear in the context, we will just use $\phi(y)$ for short.
Notice that if there is a tie in the above definition, we assume the entry with smaller index has larger value.
For a vector $y\in\mathbb{R}^l$, we define the activation pattern $\mathcal{A}(y)\subseteq[l]$ as 
\begin{align*}
\mathcal{A}(y)=\{i\in [l]\mid \text{$y_i$ is one of the top-$k$ largest values}\}.
\end{align*}

Notice that if the activation pattern $\mathcal{A}(y)$ is different from $\mathcal{A}(y')$, then $W\cdot \phi(y)$ and $W\cdot \phi(y')$ will be in different linear region. 
Actually, $W\cdot \phi(y)$ may even represent a discontinuous function.
In the next section, we will show that when the network is much wider, the function may be more discontinuous with respect to the input.

\subsection{Discontinuity with Respect to the Input} 

We only consider the activation pattern of the output of one layer. 
We consider the behavior of the network after the initialization of the weight matrix and the bias vector.
By initialization, the entries of the weight matrix $W$ are i.i.d. random Gaussian variables, and the bias vector is zero. 
We can show that if the weight matrix is sufficiently wide, then for any vector $x$, with high probability, for all vector $x'$ satisfying that the "perpendicular'' distance between $x$ and $x'$ is larger than a small threshold, the activation patterns of $Wx$ and $Wx'$ are different.

Notice that the scaling of $W$ does not change the activation pattern of $Wx$ for any $x$, we can thus assume that each entry of $W$ is a random variable with standard Gaussian distribution $N(0,1)$.

Before we prove Theorem~\ref{thm:formal1}, let us prove several useful lemmas.
The following several lemmas does not depend on the randomness of the weight matrix.
\begin{lemma}[Inputs with the same activation pattern form a convex set]\label{lem:region_is_convex}
Given an arbitrary weight matrix $W\in\mathbb{R}^{l\times m}$ and an arbitrary bias vector $b\in\mathbb{R}^l$, for any $x\in\mathbb{R}^m$, the set of all the vectors $x'\in\mathbb{R}^m$ satisfying $\mathcal{A}(Wx'+b)=\mathcal{A}(Wx + b)$ is convex, i.e., the set
\begin{align*}
\{x'\in\mathbb{R}^m \mid \mathcal{A}(Wx + b)=\mathcal{A}(Wx' + b) \}
\end{align*}
is convex.
\end{lemma}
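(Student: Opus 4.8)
The plan is to fix the target pattern $S=\mathcal{A}(Wx+b)\subseteq[l]$ (so $|S|=k$) and to exhibit the set in question as a finite intersection of half-spaces. Since every half-space is convex and an intersection of convex sets is convex, this immediately yields the claim. The whole argument is elementary and uses no randomness in $W$: it relies only on the fact that each coordinate of $Wx'+b$ is an affine function of $x'$.

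First I would make the tie-breaking rule precise. For a fixed vector $y\in\mathbb{R}^l$, define a relation $i\succ j$ on indices to mean $y_i>y_j$, or $y_i=y_j$ together with $i<j$. This is exactly the lexicographic order on the pairs $(y_i,-i)$, hence a strict total order on $[l]$, so the top-$k$ indices are unambiguously defined and coincide with $\mathcal{A}(y)$. Consequently $\mathcal{A}(y)=S$ holds if and only if every coordinate of $S$ dominates every coordinate outside $S$, that is $y_i\succ y_j$ for all $i\in S$ and $j\in[l]\setminus S$.

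Next I would translate each pairwise comparison into an affine inequality in $x'$. For $i\in S$ and $j\notin S$, set $h_{ij}(x'):=(Wx'+b)_i-(Wx'+b)_j=(W_i-W_j)x'+(b_i-b_j)$, an affine function of $x'$. Applying the definition of $\succ$ to $y=Wx'+b$, the condition $(Wx'+b)_i\succ(Wx'+b)_j$ becomes $h_{ij}(x')\ge 0$ when $i<j$ (a tie is resolved in favor of the smaller index $i$) and $h_{ij}(x')>0$ when $i>j$ (in a tie the larger index $j$ wins, so strict inequality is required). Each such constraint defines a closed or open half-space and is therefore convex. Combining this with the characterization above,
\begin{align*}
\{x'\in\mathbb{R}^m \mid \mathcal{A}(Wx'+b)=S\}=\bigcap_{i\in S,\ j\in[l]\setminus S}\{x'\in\mathbb{R}^m \mid (Wx'+b)_i\succ(Wx'+b)_j\},
\end{align*}
a finite intersection of convex sets, which is convex. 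Since $S=\mathcal{A}(Wx+b)$, this is precisely the set in the lemma.

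The only real subtlety — and the step I would be most careful about — is the bookkeeping around the tie-breaking convention: verifying that the borderline (equality) cases land on the correct side, i.e.\ closed versus open half-spaces, so that the displayed intersection captures the pattern $S$ exactly and neither omits nor adds boundary points. Everything else is the routine observation that comparing two affine functions carves out a half-space.
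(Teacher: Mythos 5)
Your proof is correct and follows essentially the same route as the paper's: both express the set as a finite intersection of (open or closed) half-spaces obtained from the pairwise comparisons $(W_i - W_j)x' + (b_i - b_j) \geq (\text{or} >)\ 0$ for $i$ in the pattern and $j$ outside it. Your explicit bookkeeping of the tie-breaking order (closed half-space when $i<j$, open when $i>j$) simply fills in the detail the paper compresses into its ``$\geq$ (or $>$)'' notation, and you also state the equality (not just containment) with the intersection, which the paper leaves implicit.
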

\begin{proof}

If $\mathcal{A}(Wx'+b)=\mathcal{A}(Wx+b)$, then $x'$ should satisfy:
\begin{align*}
\forall i\in\mathcal{A}(Wx+b),j\in[l]\setminus\mathcal{A}(Wx+b), W_i x' + b_i \geq (\text{or }>)\ W_j x' + b_j.
\end{align*}
Notice that the inequality $W_i x' + b_i \geq (\text{or }>)\ W_j x' + b_j$ denotes a half hyperplane $(W_i-W_j) x' + (b_i-b_j)\geq (\text{or }>)\ 0$.
Thus, the set $\{x'\in\mathbb{R}^m \mid \mathcal{A}(Wx + b)=\mathcal{A}(Wx' + b) \}$ is convex since it is an intersection of half hyperplanes.
\end{proof}

\begin{lemma}[Different patterns of input points with small angle imply different patterns of input points with large angle]\label{lem:small_implies_large}
Let $\alpha\in(0,1)$.
Given an arbitrary weight matrix $W\in\mathbb{R}^{l\times m}$, a bias vector $b=0$, and a vector $x\in\mathbb{R}^m$ with $\|x\|_2=1$, if every vector $x'\in\mathbb{R}^m$ with $\|x'\|_2=1$ and $\langle x,x'\rangle= \alpha$ satisfies $\mathcal{A}(Wx+b)\not=\mathcal{A}(Wx'+b)$, then for any $x''\in\mathbb{R}^m$ with $\|x''\|_2=1$ and $\langle x,x''\rangle < \alpha$, it satisfies $\mathcal{A}(Wx+b)\not=\mathcal{A}(Wx''+b)$.
\end{lemma}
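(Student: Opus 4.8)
The plan is to exploit the scale-invariance of the activation pattern (coming from $b=0$) together with the convexity established in Lemma~\ref{lem:region_is_convex}, thereby reducing the claim to an elementary statement about convex cones. Since $b=0$, the map $x'\mapsto\mathcal{A}(Wx')$ is invariant under positive scaling, so the region $R\coloneqq\{x'\in\mathbb{R}^m : \mathcal{A}(Wx')=\mathcal{A}(Wx)\}$ is a \emph{convex cone}: it is convex by Lemma~\ref{lem:region_is_convex}, and it is a cone because each defining constraint is of the homogeneous form $(W_i-W_j)x'\ge 0$ (or $>0$, depending on the tie-break). I would argue by contraposition: assume some unit vector $x''$ with $\langle x,x''\rangle=\alpha''<\alpha$ lies in $R$, and manufacture a \emph{unit} vector $w$ with $\langle x,w\rangle=\alpha$ that also lies in $R$, directly contradicting the hypothesis.

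The main step is an interpolation argument. Set $z(t)=(1-t)x+tx''$ for $t\in[0,1]$. Since $x,x''\in R$ and $R$ is convex, the whole segment lies in $R$; since $R$ is a cone, the normalized point $\hat z(t)=z(t)/\|z(t)\|_2$ also lies in $R$ whenever $z(t)\neq 0$. I would then track the scalar $\psi(t)=\langle x,\hat z(t)\rangle=\langle x,z(t)\rangle/\|z(t)\|_2$, whose numerator equals $1-t(1-\alpha'')$. We have $\psi(0)=1$ and $\psi(1)=\alpha''<\alpha$, so as long as $z(t)\neq 0$ along the segment, $\psi$ is continuous and the intermediate value theorem yields a $t^\ast$ with $\psi(t^\ast)=\alpha$; then $w=\hat z(t^\ast)$ is a unit vector in $R$ at inner product exactly $\alpha$, the contradiction sought. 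Geometrically this is just the statement that, within the $2$-plane $\mathrm{span}(x,x'')$, the convex cone $R$ meets the unit circle in an arc of angular width at most $\pi$ (or the full circle), so the arc joining $x$ to $x''$ stays inside $R$ and must sweep through the direction at inner product $\alpha$.

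The one delicate point — and what I expect to be the main obstacle — is the degenerate antipodal configuration, namely $x''=-x$ (so $\alpha''=-1$), which is the only way $z(t)$ can vanish and which breaks the continuity of $\psi$. I would handle it separately: if $R$ contains \emph{any} unit direction $v\notin\{x,-x\}$, then running the segment/IVT argument on whichever of the pairs $\{x,v\}$ or $\{v,-x\}$ straddles inner product $\alpha$ again produces a point of $R$ at inner product $\alpha$, a contradiction. The only surviving possibility is $R\cap\mathbb{S}^{m-1}=\{x,-x\}$, i.e.\ $R=\mathbb{R}x$; but $R=\mathbb{R}x$ forces every constraint $(W_i-W_j)x=0$ to be active, meaning all coordinates of $Wx$ tie, in which case the region of the pattern $\mathcal{A}(Wx)$ is in fact full-dimensional rather than a single line. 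This contradiction rules out $R=\mathbb{R}x$ and closes the degenerate case, leaving the clean convex-cone/IVT argument as the main line of the proof.
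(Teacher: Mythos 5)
Your main argument is exactly the paper's proof, just run in the contrapositive: the paper draws the segment from $x$ to $x''$, uses the intermediate value theorem to find a point $x^*$ on it whose normalization $x'$ satisfies $\langle x,x'\rangle=\alpha$, notes that $\mathcal{A}(Wx^*)=\mathcal{A}(Wx')\neq\mathcal{A}(Wx)$ by positive scale-invariance (from $b=0$) and the hypothesis, and concludes from the convexity of Lemma~\ref{lem:region_is_convex} that $x''$ cannot share the pattern of $x$. Up to the direction of the argument, your cone/IVT reasoning is the same, and it is correct whenever $x''\neq -x$ (so the segment misses the origin) --- which is also the only case the paper's proof actually covers.

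Where you go beyond the paper is the antipodal case $x''=-x$, and you are right that it is genuinely problematic: the paper's proof silently fails there, since every nonzero point of the segment normalizes to $\pm x$ and never attains inner product $\alpha$. But your fix breaks at its last step. The claim that ``all coordinates of $Wx$ tie $\Rightarrow$ the region of the pattern $\mathcal{A}(Wx)$ is full-dimensional'' is false. Take $m=2$, $l=4$, $k=2$, $x=(1,0)^{\top}$, $b=0$, and
\begin{align*}
W=\begin{pmatrix} 1 & 1\\ 1 & 0\\ 1 & 0\\ 1 & 1 \end{pmatrix},
\qquad\text{so that}\qquad
Wx'=\left(x'_1+x'_2,\; x'_1,\; x'_1,\; x'_1+x'_2\right)^{\top}.
\end{align*}
Then $Wx=(1,1,1,1)^{\top}$, all entries tie, and the paper's index tie-breaking gives $\mathcal{A}(Wx)=\{1,2\}$; but the pattern is $\{1,4\}$ whenever $x'_2>0$ and $\{2,3\}$ whenever $x'_2<0$, so the region $R=\{x'\mid \mathcal{A}(Wx')=\mathcal{A}(Wx)\}$ is exactly the line $\mathbb{R}x$, not full-dimensional. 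Worse, this example shows the degenerate case cannot be closed at all: the lemma's hypothesis holds here (every unit $x'$ with $\langle x,x'\rangle=\alpha\in(0,1)$ has $x'_2\neq 0$, hence a different pattern), yet $\mathcal{A}(W(-x))=\{1,2\}=\mathcal{A}(Wx)$, so the conclusion fails at $x''=-x$. In other words, for an arbitrary weight matrix the statement is simply false in this tie-degenerate configuration, so any ``proof'' of that case must contain an error --- yours is the full-dimensionality claim. The honest resolution is to add the (generically true) assumption that the entries of $Wx$ are distinct: then $\mathcal{A}(W(-x))$ is the index set of the bottom-$k$ entries of $Wx$, which differs from the top-$k$ whenever $k<l$, and your main argument handles every other $x''$. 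This costs nothing downstream, since the paper only applies the lemma to Gaussian $W$, where ties occur with probability zero.
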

\begin{proof}
We draw a line between $x$ and $x''$. There must be a point $x^*\in\mathbb{R}^m$ on the line and $\langle x,x' \rangle = \alpha$, where $x'=x^*/\|x^*\|_2$.
Since $b=0$, we have $\mathcal{A}(Wx^*+b)=\mathcal{A}(Wx'+b)\not=\mathcal{A}(Wx+b)$.
Since $x^*$ is on the line between $x$ and $x''$, we have $\mathcal{A}(Wx''+b)\not=\mathcal{A}(Wx+b)$ by convexity (see Lemma~\ref{lem:region_is_convex}).
\end{proof}

\begin{lemma}[A sufficient condition for different patterns]\label{lem:sufficient_different}
Consider two vectors $y\in\mathbb{R}^l$ and $y'\in\mathbb{R}^l$.
If $\exists i\in\mathcal{A}(y),j\in[l]\setminus\mathcal{A}(y)$ such that $y'_i<y'_j$, then $\mathcal{A}(y)\not=\mathcal{A}(y')$.
\end{lemma}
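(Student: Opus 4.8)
The plan is to argue by contraposition, reducing everything to the defining property of the top-$k$ selection. I would assume $\mathcal{A}(y)=\mathcal{A}(y')$ and show that this assumption is incompatible with the existence of a pair $(i,j)$ of the stated form, which gives the lemma directly.

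The single structural fact I would isolate is a monotonicity property of the activation pattern under the total order used to define it, namely the order that ranks entries by value and breaks ties in favor of smaller indices. The claim is: for any vector $z\in\mathbb{R}^l$, if $i\in\mathcal{A}(z)$ while $j\in[l]\setminus\mathcal{A}(z)$, then $i$ precedes $j$ in this order, which forces $z_i\ge z_j$ (more precisely, either $z_i>z_j$, or $z_i=z_j$ with $i<j$). This is immediate from the definition of $\mathcal{A}(z)$ as the indices of the $k$ largest entries: a kept index cannot rank below a discarded one.

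Now suppose toward the contrapositive that $\mathcal{A}(y)=\mathcal{A}(y')$. The hypothesized indices satisfy $i\in\mathcal{A}(y)$ and $j\in[l]\setminus\mathcal{A}(y)$, so the assumed equality transfers these memberships to $y'$, giving $i\in\mathcal{A}(y')$ and $j\notin\mathcal{A}(y')$. Applying the monotonicity fact to $z=y'$ then yields $y'_i\ge y'_j$, which directly contradicts the hypothesis $y'_i<y'_j$. Therefore $\mathcal{A}(y)\neq\mathcal{A}(y')$, as claimed.

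Since the argument is a one-line consequence of the top-$k$ definition, there is no genuine analytic obstacle here; the only subtlety worth flagging is the tie-breaking convention. The lemma deliberately states a \emph{strict} inequality $y'_i<y'_j$, and it is exactly this strictness that makes the conclusion insensitive to how ties are resolved: membership of $i$ in $\mathcal{A}(y')$ together with exclusion of $j$ can only ever guarantee $y'_i\ge y'_j$, so a strictly reversed inequality is impossible regardless of the tie rule. I would therefore be careful to state the monotonicity fact with $\ge$ rather than $>$, so that the contradiction against the strict hypothesis is clean and does not itself rely on distinctness of the entries of $y'$.
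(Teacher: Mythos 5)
Your proof is correct and follows essentially the same route as the paper's: assume $\mathcal{A}(y)=\mathcal{A}(y')$, transfer the memberships of $i$ and $j$ to $y'$, and contradict the strict inequality via the defining property of top-$k$ selection. Your explicit handling of the tie-breaking convention is a slight refinement of presentation, not a different argument.
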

\begin{proof}
Suppose $\mathcal{A}(y)=\mathcal{A}(y')$. 
We have $i\in\mathcal{A}(y')$. It means that $y'_i$ is one of the top-$k$ largest values among all entries of $y'$.
Thus $y'_j$ is also one of the top-$k$ largest values, and $j$ should be in $\mathcal{A}(y')$ which leads to a contradiction.
\end{proof}

In the remaining parts, we will assume that each entry of the weight matrix $W\in\mathbb{R}^{l\times m}$ is a standard random Gaussian variable.

\begin{lemma}[Upper bound of the entires of $W$]\label{lem:ub_of_W}
Consider a matrix $W\in\mathbb{R}^{l\times m}$ where each entry is a random variable with standard Gaussian distribution $N(0,1)$.
With probability at least $0.99$, $\forall i\in[l]$, $\|W_i\|_2\leq 10\sqrt{ml}$. 
\end{lemma}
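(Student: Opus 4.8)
The plan is to control each row norm separately and then take a union bound over the $l$ rows. Fix a row index $i\in[l]$. Since the entries $W_{i1},\dots,W_{im}$ are i.i.d.\ $N(0,1)$, the squared row norm $\|W_i\|_2^2=\sum_{j=1}^m W_{ij}^2$ is a $\chi^2$ random variable with $m$ degrees of freedom, so $\E\!\left[\|W_i\|_2^2\right]=m$. The claimed bound $\|W_i\|_2\le 10\sqrt{ml}$ is equivalent to $\|W_i\|_2^2\le 100\,ml$, which is a factor of $100l$ above the mean. Because the target threshold sits so far above the expectation, a crude first-moment argument already suffices and no sharp $\chi^2$ tail bound is needed.

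First I would apply Markov's inequality to the nonnegative random variable $\|W_i\|_2^2$:
\begin{align*}
\Pr\!\left[\|W_i\|_2^2 > 100\,ml\right]\le \frac{\E\!\left[\|W_i\|_2^2\right]}{100\,ml}=\frac{m}{100\,ml}=\frac{1}{100\,l}.
\end{align*}
Then a union bound over the $l$ rows gives
\begin{align*}
\Pr\!\left[\exists\, i\in[l]:\ \|W_i\|_2 > 10\sqrt{ml}\right]\le \sum_{i=1}^{l}\Pr\!\left[\|W_i\|_2^2 > 100\,ml\right]\le l\cdot\frac{1}{100\,l}=\frac{1}{100}.
\end{align*}
Taking complements yields that with probability at least $0.99$ every row satisfies $\|W_i\|_2\le 10\sqrt{ml}$, as required. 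Note that the $\sqrt{l}$ factor in the stated bound is precisely what absorbs the $l$-fold union bound, so the constants work out cleanly.

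There is essentially no serious obstacle here: the only design choice is how strong a single-row tail estimate to invoke. If one instead wanted a bound of the form $\|W_i\|_2\le O(\sqrt{m+\log l})$ (tight up to constants), I would replace Markov with a standard sub-exponential / Laurent--Massart $\chi^2$ concentration inequality of the form $\Pr\!\left[\|W_i\|_2^2\ge m+2\sqrt{mt}+2t\right]\le e^{-t}$ with $t=\Theta(\log l)$, again followed by a union bound. But since the downstream arguments only need the loose $10\sqrt{ml}$ ceiling, the first-moment bound above is the simplest sufficient route, and I would present it as such.
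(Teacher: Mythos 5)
Your proof is correct and follows essentially the same route as the paper: Markov's inequality applied to $\|W_i\|_2^2$ (with $\E[\|W_i\|_2^2]=m$) to get a per-row failure probability of $0.01/l$, followed by a union bound over the $l$ rows. The additional remarks about sharper $\chi^2$ concentration are fine but, as you note, unnecessary here.
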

\begin{proof}
Consider a fixed $i\in[l].$
We have $\E[\|W_i\|_2^2] = m$.
By Markov's inequality, we have $\Pr[\|W_i\|_2^2>100ml]\leq 0.01/l$.
By taking union bound over all $i\in[l]$, with probability at least $0.99$, we have $\forall i\in[l],\|W_i\|_2\leq 10\sqrt{ml}$.
\end{proof}

\begin{lemma}[Two vectors may have different activation patterns with a good probability]\label{lem:main_lemma}
Consider a matrix $W\in\mathbb{R}^{l\times m}$ where each entry is a random variable with standard Gaussian distribution $N(0,1)$. 
Let $\gamma\in(0,0.48)$ be the sparsity ratio of the activation, i.e., $\gamma=k/l$.
For any two vectors $x, x'\in\mathbb{R}^m$ with $\|x\|_2=\|x'\|_2=1$ and $\langle x,x'\rangle =\alpha$ for some arbitrary $\alpha\in (0.5,1)$, 
with probability at least $1-2^{-\Theta((1/\alpha^2-1)\gamma l)}$, $\mathcal{A}(Wx)\not=\mathcal{A}(Wx')$ and $\exists i\in\mathcal{A}(Wx),j\in[l]\setminus \mathcal{A}(Wx)$ such that 
\begin{align*}
W_i x' < W_jx' -\frac{\sqrt{1-\alpha^2}}{24\alpha}\cdot \sqrt{2\pi}.
\end{align*}
\end{lemma}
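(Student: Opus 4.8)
The plan is to reduce the statement to a collection of independent scalar Gaussians and then exhibit a ``boundary inversion'' with the required margin. Since $\|x\|_2=\|x'\|_2=1$ and $\langle x,x'\rangle=\alpha<1$, I would first decompose $x'=\alpha x+\sqrt{1-\alpha^2}\,x_\perp$, where $x_\perp=(x'-\alpha x)/\sqrt{1-\alpha^2}$ is a unit vector orthogonal to $x$. Writing $a_i:=W_i x$ and $c_i:=W_i x_\perp$, the orthonormality of $\{x,x_\perp\}$ together with the i.i.d.\ standard Gaussian entries of $W$ makes all the $a_i$ and $c_i$ mutually independent $N(0,1)$ variables, and for every row $W_i x'=\alpha a_i+\sqrt{1-\alpha^2}\,c_i$. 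Moreover $\mathcal{A}(Wx)$ is exactly the index set of the top-$k$ entries of $(a_1,\dots,a_l)$. By Lemma~\ref{lem:sufficient_different}, to obtain $\mathcal{A}(Wx)\neq\mathcal{A}(Wx')$ it suffices to find $i\in\mathcal{A}(Wx)$ and $j\notin\mathcal{A}(Wx)$ with $W_i x'<W_j x'$; I will in fact produce such a pair with the quantitative gap $W_i x'<W_j x'-\tfrac{\sqrt{1-\alpha^2}}{24\alpha}\sqrt{2\pi}$, so both assertions of the lemma follow simultaneously.

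The next step is to condition on the family $\{a_i\}$, which fixes $\mathcal{A}(Wx)$ and the threshold $t$ (the $k$-th largest $a$-value), while leaving $\{c_i\}$ fresh and independent. The key design choice is a band half-width $w:=\min\{\sqrt{1/\alpha^2-1},\,1\}$, chosen so that the parallel perturbation $\alpha\,(a_i-a_j)$ across a band of width $2w$ is at most $2\sqrt{1-\alpha^2}$, i.e.\ of the same order as the perpendicular spread $\sqrt{1-\alpha^2}$. Define $S_+:=\{i:\,a_i\in[t,t+w]\}$ and $S_-:=\{i:\,a_i\in[t-w,t)\}$. By the definition of $t$, every index in $S_+$ automatically lies in $\mathcal{A}(Wx)$ and every index in $S_-$ automatically lies outside it, so the classification is correct with no further argument. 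It then remains to show that $|S_+|$ and $|S_-|$ are large and that within them the independent $c$-values create the inversion.

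For the cardinalities, I would show $|S_+|,|S_-|\ge\Omega(\gamma w l)$ with failure probability $2^{-\Omega(\gamma w l)}$. A cheap Chernoff bound first places $t$ within a constant window of the population quantile $\tau=\Phi^{-1}(1-\gamma)$, except with probability $2^{-\Omega(\gamma l)}$; since $\gamma\in(0,0.48)$ keeps $\tau$ in a region where the Gaussian density satisfies $\phi(\tau)\ge\Omega(\gamma)$ (via Mills-ratio bounds), a fixed sub-band of width $w$ near $\tau$ has expected occupancy $\Omega(\gamma w l)$. Transferring this to the band around the \emph{random} $t$ is done by a grid argument: if $|S_+|$ were small, some grid point $v$ spaced $w/2$ apart would have an almost-empty interval $(v,v+w/2]$, an event of probability $2^{-\Omega(\gamma w l)}$ for each $v$, and a union bound over the $O(1/w)$ grid points preserves this exponent. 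Using that $\{c_i\}$ is independent of the conditioning, the probability that no $i\in S_+$ has $c_i\le-\theta$ is $(1-\Phi(-\theta))^{|S_+|}=2^{-\Omega(\gamma w l)}$ for the constant $\theta=2$, and likewise no $j\in S_-$ has $c_j\ge\theta$; as $S_+$ and $S_-$ are disjoint these are independent. For any such pair, $a_i-a_j\le 2w$ gives $W_j x'-W_i x'\ge -2\alpha w+2\theta\sqrt{1-\alpha^2}\ge 2(\theta-1)\sqrt{1-\alpha^2}$, which exceeds $\tfrac{\sqrt{1-\alpha^2}}{24\alpha}\sqrt{2\pi}$ because $\theta-1\ge\tfrac{\sqrt{2\pi}}{48\alpha}$. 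Finally $\gamma w l\ge\Omega((1/\alpha^2-1)\gamma l)$ (the two agree up to constants when $\alpha$ is bounded away from $1$, and $w=\sqrt{1/\alpha^2-1}\ge 1/\alpha^2-1$ in the critical regime $\alpha\to1$), so a union bound over all failure events yields the claimed probability $1-2^{-\Theta((1/\alpha^2-1)\gamma l)}$.

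The step I expect to be the main obstacle is the cardinality bound on $S_+$ and $S_-$ around the random threshold $t$. Because $w$ shrinks as $\alpha\to1$, one is counting points in a thin strip around an empirical quantile, and a naive attempt to pin $t$ to within $w$ of $\tau$ would cost $2^{-\Omega(\gamma w^2 l)}$ and destroy the exponent; the grid/spacing argument above is what recovers the correct $2^{-\Omega(\gamma w l)}$ rate by only ever asking a fixed-threshold band to be non-empty (a constant-relative-deviation event) rather than asking the quantile itself to be sharply localized. A secondary technical point is the uniform lower bound $\phi(\tau)\ge\Omega(\gamma)$ over $\gamma\in(0,0.48)$, which supplies the $\gamma$ factor in the exponent and is exactly where the hypothesis $\gamma<0.48$ is used.
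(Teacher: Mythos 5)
Your reduction to two independent Gaussian columns ($a_i=W_ix$, $c_i=W_ix_\perp$, $W_ix'=\alpha a_i+\sqrt{1-\alpha^2}\,c_i$) is exactly the paper's first move (there done via an orthogonal matrix $Q$), and your endgame is also the same: invoke Lemma~\ref{lem:sufficient_different} by exhibiting a top-$k$ index whose perpendicular component is pushed down and a non-top-$k$ index whose perpendicular component is pushed up, both sitting in thin bands adjacent to the cutoff so that the parallel term $\alpha(a_i-a_j)$ cannot cancel the perpendicular gap; your margin arithmetic ($\theta=2$, $\alpha w\le\sqrt{1-\alpha^2}$, $2(\theta-1)\ge\sqrt{2\pi}/(48\alpha)$) and the final exponent comparison $\gamma w l\ge\Omega\bigl((1/\alpha^2-1)\gamma l\bigr)$ are correct. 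Where you genuinely depart from the paper is in how the populated bands are located. The paper never touches the empirical cutoff: it fixes \emph{population} quantiles $R_1'<R_1<R_2<R_2'$ at tail levels $(1\pm\varepsilon)\gamma,(1\pm2\varepsilon)\gamma$, uses Chernoff counting events to guarantee that everything above $R_2$ is in the top-$k$ and everything below $R_1$ is not, and controls the band width in threshold space by the density--tail inequality $\phi(s)\ge\Pr[X\ge s]/\sqrt{2\pi}$ (its Claim~\ref{cla:rangeofR}). You instead center the bands at the random cutoff $t$, which forces you to localize $t$ and then to control occupancy uniformly over its possible positions via a grid/union bound. That is a workable alternative, but it is precisely where your write-up has a flaw.

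The flawed step is the assertion that Mills-ratio bounds give $\phi\ge\Omega(\gamma)$ \emph{throughout a constant-width window around} $\tau=\Phi^{-1}(1-\gamma)$, which is what your grid argument needs to give every grid interval expected occupancy $\Omega(\gamma wl)$. This is true at $\tau$ and below it (indeed $\phi(\tau)\ge\gamma/\sqrt{2\pi}$), but it is false above $\tau$ when $\gamma$ is small: since $\phi(\tau+u)=\phi(\tau)e^{-u\tau-u^2/2}$ and $\tau\approx\sqrt{2\ln(1/\gamma)}$ is unbounded as $\gamma\to0$, the density one unit above $\tau$ is roughly $\gamma\tau e^{-\tau}\ll\gamma$, not $\Omega(\gamma)$. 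So the upper band $S_+$ (the only one you need above the cutoff) cannot be handled by a pointwise density bound over the localization window. The occupancy claim itself can be rescued, but by a different argument: for any grid point $v$ at most the $\gamma/2$-quantile, $\Pr[v<X\le v+w/2]=\Pr[X>v]\bigl(1-\Pr[X>v+w/2]/\Pr[X>v]\bigr)\ge\Pr[X>v]\bigl(1-e^{-vw/2}\bigr)$, and a two-case analysis ($v\ge1$ versus $v<1$) gives $\Omega(\gamma w)$ uniformly over $\gamma\in(0,0.48)$ --- i.e., the mass of a thin interval near the cutoff is comparable to the whole tail beyond it, which is the same density--tail phenomenon the paper exploits. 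Two smaller caveats: (i) your union bound runs over $O(1/w)=O(\alpha/\sqrt{1-\alpha^2})$ grid points, an $\alpha$-dependent prefactor that is only absorbed into $2^{-\Omega(\gamma wl)}$ when the exponent dominates $\log(1/w)$ (harmless in the regime where the lemma says anything, and in its application in Theorem~\ref{thm:main}, but worth stating); the paper's fixed-quantile construction has no such prefactor. (ii) You should note that ties among the $a_i$ occur with probability zero, so $S_+\subseteq\mathcal{A}(Wx)$ and $S_-\cap\mathcal{A}(Wx)=\varnothing$ hold almost surely. With the occupancy step repaired as above, your proof goes through and in fact yields a slightly stronger exponent $\Omega(\gamma wl)$ with $w=\min\{\sqrt{1/\alpha^2-1},1\}$ than the stated $\Omega((1/\alpha^2-1)\gamma l)$.
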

\begin{proof}
Consider arbitrary two vectors $x,x'\in\mathbb{R}^m$ with $\|x\|_2=\|x'\|_2=1$ and $\langle x,x'\rangle = \alpha$.
We can find an orthogonal matrix $Q\in\mathbb{R}^{m\times m}$ such that $\tilde{x}:=Qx=(1,0,0,\cdots,0)^{\top}\in\mathbb{R}^m$ and $\tilde{x}':=Qx'=(\alpha,\sqrt{1-\alpha^2},0,0,\cdots,0)^{\top}\in\mathbb{R}^m$.
Let $\tilde{W}=WQ^{\top}$.
Then we have $\tilde{W}\tilde{x}=Wx$ and $\tilde{W}\tilde{x}'=Wx'$.
Thus, we only need to analyze the activation patterns of $\tilde{W}\tilde{x}$ and $\tilde{W}\tilde{x}'$.
Since $Q^{\top}$ is an orthogonal matrix and each entry of $W$ is an i.i.d. random variable with standard Gaussian distribution $N(0,1)$, $\tilde{W}=WQ^{\top}$ is also a random matrix where each entry is an i.i.d. random variable with standard Gaussian distribution $N(0,1)$.
Let the entries in the first column of $\tilde{W}$ be $X_1,X_2,\cdots,X_l$ and let the entries in the second column of $\tilde{W}$ be $Y_1,Y_2,\cdots,Y_l$.
Then we have
\begin{align}\label{eq:WxWxprime}
\begin{array}{cc}
Wx=\tilde{W}\tilde{x}=\left(\begin{matrix} X_1\\ X_2\\ \cdots \\ X_l\end{matrix}\right), 
&
Wx'=\tilde{W}\tilde{x'}=\left(\begin{matrix} \alpha X_1 + \sqrt{1-\alpha^2} Y_1\\ \alpha X_2 + \sqrt{1-\alpha^2} Y_2\\ \cdots \\ \alpha X_l + \sqrt{1-\alpha^2} Y_l\end{matrix}\right).
\end{array}
\end{align}
We set $\varepsilon = \sqrt{1-\alpha^2}/(96\alpha)$ and define $R_1'<R_1<R_2<R_2'$ as follows:
\begin{align}
&\Pr_{X\sim N(0,1)}[X\geq R_2']=(1-2\varepsilon)\gamma, \label{eq:R2prime}\\
&\Pr_{X\sim N(0,1)}[X\geq R_2] = (1-\varepsilon)\gamma, \label{eq:R2}\\
&\Pr_{X\sim N(0,1)}[X\geq R_1] = (1+\varepsilon)\gamma, \label{eq:R1}\\
&\Pr_{X\sim N(0,1)}[X\geq R_1'] = (1+2\varepsilon)\gamma. \label{eq:R1prime}
\end{align}
Since $\gamma<0.48$ and $\varepsilon\leq 0.02$, we have $(1+2\varepsilon)\gamma < 0.5$.
It implies $0<R_1'<R_1<R_2<R_2'$.
\begin{claim}\label{cla:rangeofR}
\begin{align*}
R_2'-R_1'\leq 8\varepsilon \sqrt{2\pi}.
\end{align*}
\end{claim}
\begin{proof}
By Equation~\eqref{eq:R2prime} and Equation~\eqref{eq:R1prime},
\begin{align*}
\Pr_{X\sim N(0,1)}[R_1'\leq X \leq R_2' ] = 4\varepsilon\gamma.
\end{align*}
Due to the density function of standard Gaussian distribution, we have
\begin{align*}
\frac{1}{\sqrt{2\pi}} \int_{R_1'}^{R_2'} e^{-t^2/2} \mathrm{d}t = \Pr_{X\sim N(0,1)}[R_1'\leq X \leq R_2' ] = 4\varepsilon \gamma.
\end{align*}
Since $R_2'\geq R_1'\geq 0$, we have $\forall t\in[R_1', R_2'],\ e^{-t^2/2}\geq e^{-R_2'^2/2}$.
Thus,
\begin{align*}
\frac{1}{\sqrt{2\pi}} \cdot e^{-R_2'^2/2} (R_2'-R_1')=\frac{1}{\sqrt{2\pi}} \cdot e^{-R_2'^2/2} \int_{R_1'}^{R_2'} 1 \mathrm{d}t \leq \frac{1}{\sqrt{2\pi}} \int_{R_1'}^{R_2'} e^{-t^2/2} \mathrm{d}t = 4\varepsilon \gamma.
\end{align*}
By the tail bound of Gaussian distribution, we have
\begin{align*}
\Pr_{X\sim N(0,1)}[ X \geq R_2'] \leq e^{-R_2'^2/2}.
\end{align*}
By combining with Equation~\eqref{eq:R2prime}, we have
\begin{align*}
&(1-2\varepsilon)\gamma \cdot \frac{1}{\sqrt{2\pi}} (R_2'-R_1')\\
=~&\Pr_{X\sim N(0,1)}[ X \geq R_2'] \cdot \frac{1}{\sqrt{2\pi}} (R_2'-R_1') \\
\leq~ &  e^{-R_2'^2/2}\cdot \frac{1}{\sqrt{2\pi}}  (R_2'-R_1')\\
\leq~ & 4\varepsilon \gamma,
\end{align*}
which implies 
\begin{align*}
R_2'-R_1' \leq \frac{4\varepsilon}{1-2\varepsilon} \sqrt{2\pi}\leq 8\varepsilon \sqrt{2\pi},
\end{align*}
where the last inequality follows from $1-2\varepsilon\geq 0.5$.
\end{proof}

\begin{claim}
\begin{align}
&\Pr_{X_1,X_2,\cdots,X_l}\left[\sum_{i=1}^l \mathbf{1}(X_i\geq R_2) \geq (1-\varepsilon/2)\gamma l\right]\leq e^{-\varepsilon^2\gamma l/24}\label{eq:top_k_range}\\
&\Pr_{X_1,X_2,\cdots,X_l}\left[\sum_{i=1}^l \mathbf{1}(X_i\geq R_1) \leq (1+\varepsilon/2)\gamma l\right]\leq e^{-\varepsilon^2\gamma l/18}\label{eq:non_top_k_range}\\
&\Pr_{X_1,X_2,\cdots,X_l}\left[\sum_{i=1}^l \mathbf{1}(R_2'\geq X_i\geq R_2) \leq \varepsilon\gamma l /2\right] \leq e^{-\varepsilon\gamma l/8}\label{eq:certain_top_k}\\
&\Pr_{X_1,X_2,\cdots,X_l}\left[\sum_{i=1}^l \mathbf{1}(R_1\geq X_i\geq R_1') \leq \varepsilon\gamma l /2\right] \leq e^{-\varepsilon\gamma l/8}\label{eq:certain_non_top_k}
\end{align}
\end{claim}
\begin{proof}
For $i\in[l]$, we have $\E[\mathbf{1}(X_i\geq R_2)]=\Pr[X_i\geq R_2]=(1-\varepsilon)\gamma $ by Equation~\eqref{eq:R2}.
By Chernoff bound, we have 
\begin{align*}
\Pr\left[\sum_{i=1}^l \mathbf{1}(X_i\geq R_2)\geq (1+\varepsilon/2)\cdot(1-\varepsilon)\gamma l\right]\leq e^{-(\varepsilon/2)^2(1-2\varepsilon)\gamma l/3}.
\end{align*}
Since $\varepsilon\leq 0.02$,
\begin{align*}
\Pr\left[\sum_{i=1}^l \mathbf{1}(X_i\geq R_2) \geq (1-\varepsilon/2)\gamma l\right]\leq e^{-\varepsilon^2\gamma l/24}.
\end{align*}
We have $\E[\mathbf{1}(X_i\geq R_1)]=\Pr[X_i\geq R_1]=(1+\varepsilon)\gamma $ by Equation~\eqref{eq:R1}.
By Chernoff bound, we have
\begin{align*}
\Pr\left[\sum_{i=1}^l \mathbf{1}(X_i\geq R_1) \leq (1-\varepsilon/3)\cdot (1+\varepsilon)\gamma l\right]\leq e^{-(\varepsilon/3)^2(1+\varepsilon)\gamma l /2}.
\end{align*}
Thus,
\begin{align*}
\Pr\left[\sum_{i=1}^l \mathbf{1}(X_i\geq R_i)\leq (1+\varepsilon/2)\gamma l\right]\leq e^{-\varepsilon^2\gamma l /18}
\end{align*}
We have $\E\left[\mathbf{1}(R_2'\geq X_i\geq R_2)\right]=\Pr[R_2'\geq X_i\geq R_2]=\varepsilon \gamma$ by Equation~\eqref{eq:R2prime} and Equation~\eqref{eq:R2}.
By Chernoff bound, we have
\begin{align*}
\Pr\left[\sum_{i=1}^l \mathbf{1}(R_2'\geq X_i\geq R_2)\leq 1/2\cdot \varepsilon \gamma l\right]\leq e^{-\varepsilon\gamma l/8}
\end{align*}
Similarly, we have $\E[\mathbf{1}(R_1\geq X_i\geq R_1')]=\Pr[R_1\geq X_i\geq R_1']=\varepsilon\gamma$ by Equation~\eqref{eq:R1} and Equation~\eqref{eq:R1prime}.
By chernoff bound, we have
\begin{align*}
\Pr_{X_1,X_2,\cdots,X_l}\left[\sum_{i=1}^l \mathbf{1}(R_1\geq X_i\geq R_1') \leq 1/2\cdot \varepsilon\gamma l \right] \leq e^{-\varepsilon\gamma l/8}
\end{align*} 
\end{proof}
Equation~\eqref{eq:top_k_range} says that, with high probability, $\forall i\in[l]$ with $X_i\geq R_2$, it has $i\in \mathcal{A}(Wx)$.
Equation~\eqref{eq:non_top_k_range} says that, with high probability, $\forall i\in[l]$ with $X_i\leq R_1$, it has $i\not\in \mathcal{A}(Wx)$.
Equation~\eqref{eq:certain_non_top_k} (Equation~\eqref{eq:certain_top_k}) says that, with high probability, there are many $i\in[l]$ such that $W_ix\in [R_1',R_1]$ ($W_ix\in[R_2,R_2']$).

Let $\mathcal{E}=\mathcal{E}_1 \wedge \mathcal{E}_2\wedge \mathcal{E}_3 \wedge \mathcal{E}_4$, where
\begin{itemize}
\item $\mathcal{E}_1$: $\sum_{i=1}^l \mathbf{1}(X_i\geq R_2) \leq (1-\varepsilon/2)\gamma l$,
\item $\mathcal{E}_2$: $\sum_{i=1}^l \mathbf{1}(X_i\geq R_1) \geq  (1+\varepsilon/2)\gamma l$,
\item $\mathcal{E}_3$: $\sum_{i=1}^l \mathbf{1}(R_1\geq X_i\geq R_1') \geq \varepsilon\gamma l /2$,
\item $\mathcal{E}_4$: $\sum_{i=1}^l \mathbf{1}(R_2'\geq X_i\geq R_2) \geq \varepsilon\gamma l /2$.
\end{itemize}

According to Equation~\eqref{eq:top_k_range}, Equation~\eqref{eq:non_top_k_range}, Equation~\eqref{eq:certain_top_k} and Equation~\eqref{eq:certain_non_top_k},
the probability that $\mathcal{E}$ happens is at least 
\begin{align}\label{eq:prob_E}
1-4e^{-\varepsilon^2\gamma l/24}
\end{align} 
by union bound over $\bar{\mathcal{E}_1},\bar{\mathcal{E}_2},\bar{\mathcal{E}_3},\bar{\mathcal{E}_4}$.

\begin{claim}\label{cla:top_k_decrease}
Condition on $\mathcal{E}$, the probability that $\exists i\in [l]$ with $X_i \in [R_2,R_2']$ such that $Y_i<-\alpha/\sqrt{1-\alpha^2}\cdot 16\varepsilon \sqrt{2\pi}$ is at least 
\begin{align*}
1-\left(16\varepsilon\cdot \frac{\alpha}{\sqrt{1-\alpha^2}}+\frac{1}{2}\right)^{\varepsilon\gamma l / 2}.
\end{align*}
\end{claim}
\begin{proof}
For a fixed $i\in [l]$, 
\begin{align*}
\Pr\left[Y_i\geq -\alpha/\sqrt{1-\alpha^2}\cdot 16\varepsilon\sqrt{2\pi}\right]&= \int_{-\alpha/\sqrt{1-\alpha^2}\cdot 16\varepsilon\sqrt{2\pi}}^0 \frac{1}{\sqrt{2\pi}} e^{-t^2/2} \mathrm{d}t + \frac{1}{2} \\
&\leq \frac{1}{\sqrt{2\pi}} \cdot \alpha/\sqrt{1-\alpha^2}\cdot 16\varepsilon\sqrt{2\pi} + \frac{1}{2}\\
& = 16\varepsilon \cdot \frac{\alpha}{\sqrt{1-\alpha^2}} + \frac{1}{2}.
\end{align*}
Thus, according to event $\mathcal{E}_4$, we have
\begin{align*}
&\Pr\left[\forall i\text{ with }X_i\in [R_2,R_2'], Y_i\geq -\alpha/\sqrt{1-\alpha^2}\cdot 16\varepsilon\sqrt{2\pi} \mid \mathcal{E}\right]
\leq \left(16\varepsilon\cdot \frac{\alpha}{\sqrt{1-\alpha^2}}+\frac{1}{2}\right)^{\varepsilon\gamma l / 2}.
\end{align*}
\end{proof}
\begin{claim}
Condition on $\mathcal{E}$, the probability that $\exists i\in[l]$ with $X_i\in[R_1',R_1]$ such that $Y_i\geq 0$ is at least 
$
1-\left(1/2\right)^{\varepsilon\gamma l /2}.
$
\end{claim}\label{cla:non_top_k_increase}
\begin{proof}
For a fixed $i\in[l]$,
$
\Pr[Y_i\leq 0] = 1/2.
$
Thus, according to event $\mathcal{E}_3$, we have 
\begin{align*}
\Pr\left[\forall i\text{ with }X_i\in[R_1',R_1],Y_i\leq 0\mid \mathcal{E}\right]\leq (1/2)^{\varepsilon\gamma l/2}.
\end{align*}
\end{proof}
Condition on that $\mathcal{E}$ happens. 
Because of $\mathcal{E}_1$, if $X_i\geq R_2$, $X_i$ must be one of the top-$k$ largest values.
Due to Equation~\eqref{eq:WxWxprime}, we have $X_i=W_ix$.
Thus, if $X_i\geq R_2$, $i\in\mathcal{A}(Wx)$.
By Claim~\ref{cla:top_k_decrease}, with probability at least 
\begin{align}\label{eq:prob_decrease}
1-\left(16\varepsilon\cdot \frac{\alpha}{\sqrt{1-\alpha^2}}+\frac{1}{2}\right)^{\varepsilon\gamma l/2},
\end{align}
there is $i\in \mathcal{A}(Wx)$ such that 
\begin{align}
W_ix'&=\alpha X_i+\sqrt{1-\alpha^2} Y_i\notag\\
&\leq \alpha X_i + \sqrt{1-\alpha^2} \cdot \left(-\frac{\alpha}{\sqrt{1-\alpha^2}}\cdot 16\varepsilon \sqrt{2\pi}\right)\notag\\
&= \alpha (X_i-16\varepsilon\sqrt{2\pi})\notag\\
&\leq \alpha (R_2'-16\varepsilon\sqrt{2\pi}), \label{eq:Wxprime_decrease}
\end{align}
where the first step follows from Equation~\eqref{eq:WxWxprime}, the second step follows from $Y_i\leq -\alpha/\sqrt{1-\alpha^2}\cdot 16\varepsilon \sqrt{2\pi}$, and the last step follows from $X_i\in[R_2,R_2']$.

Because of $\mathcal{E}_2$ if $X_j\leq R_1$, $X_j$ should not be one of the top-$k$ largest values.
Due to Equation~\eqref{eq:WxWxprime}, we have $X_j=W_jx$. 
Thus, if $X_j\leq R_1$, $j\not\in\mathcal{A}(Wx)$.
By Claim~\ref{cla:non_top_k_increase}, with probability at least 
\begin{align}\label{eq:prob_increase}
1-\left(1/2\right)^{\varepsilon\gamma l/2},
\end{align}
there is $j\not\in\mathcal{A}(Wx)$ such that
\begin{align}
W_jx'&=\alpha X_j + \sqrt{1-\alpha^2} Y_j\geq \alpha X_j\geq \alpha R'_1,\label{eq:Wxprime_increase}
\end{align}
where the first step follows from Equation~\eqref{eq:WxWxprime}, the second step follows from $Y_j\geq 0$, and the last step follows from $X_j\in[R_1',R_1]$.

By Equation~\eqref{eq:Wxprime_increase} and Equation~\eqref{eq:Wxprime_decrease}, $\exists i\in\mathcal{A}(Wx),j\in[l]\setminus \mathcal{A}(Wx)$,
\begin{align*}
W_ix'&\leq \alpha(R_2'-16\varepsilon\sqrt{2\pi})\leq \alpha(R_1'-8\varepsilon\sqrt{2\pi})\leq W_jx'-8\alpha\varepsilon\sqrt{2\pi}\\
&\leq W_jx'-4\varepsilon\sqrt{2\pi}=W_jx'-\frac{\sqrt{1-\alpha^2}}{24\alpha}\cdot \sqrt{2\pi},
\end{align*}
where the second step follows from Claim~\ref{cla:rangeofR}, the forth step follows from $\alpha \geq 0.5$, and the last step follows from $\varepsilon=\sqrt{1-\alpha^2}/(96\alpha)$.
By Lemma~\ref{lem:sufficient_different}, we can conclude $\mathcal{A}(Wx)\not=\mathcal{A}(Wx')$.
By Equation~\eqref{eq:prob_E}, Equation~\eqref{eq:prob_decrease}, Equation~\eqref{eq:prob_increase}, and union bound, the overall probability is at least
\begin{align*}
&1-\left(4e^{-\varepsilon^2\gamma l /24}+\left(16\varepsilon\cdot\frac{\alpha}{\sqrt{1-\alpha^2}}+\frac{1}{2}\right)^{\varepsilon\gamma l /2}+\left(\frac12\right)^{\varepsilon\gamma l/2}\right)\\
\geq~& 1-\left(4e^{-\varepsilon^2\gamma l /24}+\left(\frac{2}{3}\right)^{\varepsilon\gamma l /2}+\left(\frac12\right)^{\varepsilon\gamma l/2}\right)\\
\geq~& 1- 6\cdot \left(\frac{2}{3}\right)^{\varepsilon^2\gamma l /24}\\
\geq~& 1 - 2^{-\Theta\left(\left(\frac{1}{\alpha^2}-1\right)\gamma l\right) },
\end{align*}
where the first and the last step follows from $\varepsilon=\sqrt{1-\alpha^2}/(96\alpha)$
\end{proof}

Next, we will use a tool called $\varepsilon$-net.
\begin{definition}[$\varepsilon$-Net]
For a given set $\mathcal{S}$, if there is a set $\mathcal{N}\subseteq \mathcal{S}$ such that $\forall x\in\mathcal{S}$ there exists a vector $y\in\mathcal{N}$ such that $\|x-y\|_2\leq \varepsilon$, then $\mathcal{N}$ is an $\varepsilon$-net of $\mathcal{S}$.
\end{definition}
There is a standard upper bound of the size of an $\varepsilon$-net of a unit norm ball.
\begin{lemma}[\cite{wojtaszczyk1996banach} II.E, 10]\label{lem:standard_eps_net}
Given a matrix $U\in\mathbb{R}^{m\times d}$, let $\mathcal{S}=\{Uy\mid \|Uy\|_2=1\}$. 
For $\varepsilon\in(0,1)$, there is an $\varepsilon$-net $\mathcal{N}$ of $\mathcal{S}$ with $|\mathcal{N}|\leq (1+1/\varepsilon)^d$.
\end{lemma}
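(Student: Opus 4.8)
The plan is to prove this by the classical maximal-packing (greedy) construction combined with a volume-comparison bound, after first reducing the ambient geometry to a genuine Euclidean sphere. First I would observe that $\mathcal{S}=\{Uy\mid\|Uy\|_2=1\}$ is precisely the unit sphere of the column space $V=\mathrm{col}(U)\subseteq\mathbb{R}^m$, whose dimension is $r=\mathrm{rank}(U)\le d$. Fixing an orthonormal basis of $V$ gives a linear isometry $V\cong\mathbb{R}^r$ that carries $\mathcal{S}$ to the standard unit sphere $S^{r-1}$. Since an isometry maps an $\varepsilon$-net to an $\varepsilon$-net, it suffices to produce an $\varepsilon$-net of $S^{r-1}$ of size at most $(1+1/\varepsilon)^r$, and then upgrade the exponent via $(1+1/\varepsilon)^r\le(1+1/\varepsilon)^d$, which holds because $r\le d$ and $1+1/\varepsilon>1$. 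This reduction also cleanly disposes of any degeneracy of $U$ (non-square, non-full-rank), since only its rank enters.

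Next I would build the net greedily. Let $\mathcal{N}\subseteq S^{r-1}$ be a maximal $\varepsilon$-separated set, i.e.\ a set satisfying $\|x-x'\|_2>\varepsilon$ for all distinct $x,x'\in\mathcal{N}$ and maximal with respect to this property (existence by Zorn's lemma, finiteness to follow from the volume bound below). The key point is that maximality forces $\mathcal{N}$ to be an $\varepsilon$-net automatically: if some $z\in S^{r-1}$ satisfied $\|z-x\|_2>\varepsilon$ for every $x\in\mathcal{N}$, then $\mathcal{N}\cup\{z\}$ would still be $\varepsilon$-separated, contradicting maximality. This is the standard device converting a \emph{packing} into a \emph{covering} at no cost.

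The cardinality bound then comes from a volume/packing argument. Because the points of $\mathcal{N}$ are pairwise more than $\varepsilon$ apart, the open balls $B(x,\varepsilon/2)$ for $x\in\mathcal{N}$ are pairwise disjoint; and since each center lies on the unit sphere, every such ball is contained in the origin-centered ball of radius $1+\varepsilon/2$. Comparing volumes (the balls are mutual translates, so their common volume factors out) gives $|\mathcal{N}|\cdot(\varepsilon/2)^r\le(1+\varepsilon/2)^r$, hence $|\mathcal{N}|$ finite and bounded by a quantity of the form $(1+2/\varepsilon)^r$. I expect the \emph{main obstacle} to be extracting the sharp constant stated here: the naive enclosing-ball comparison yields the factor $2/\varepsilon$, whereas the claimed $1/\varepsilon$ requires the tighter bookkeeping of \cite{wojtaszczyk1996banach} (for instance, confining the disjoint balls to the thin spherical shell $\{z:1-\varepsilon/2\le\|z\|_2\le1+\varepsilon/2\}$ rather than the whole ball of radius $1+\varepsilon/2$). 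For our application any bound of the form $(1+O(1)/\varepsilon)^d$ is in fact sufficient, so I would either invoke the cited reference for the precise constant or simply carry the weaker explicit constant through, since it does not affect the downstream \emph{net} argument.
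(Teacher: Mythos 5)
First, note that the paper itself offers no proof of this lemma at all: it is imported as a black box from \cite{wojtaszczyk1996banach}, so there is no in-paper argument to compare yours against. Your proposal is the standard proof of such covering statements, and most of it is correct: the reduction of $\mathcal{S}$ to the unit sphere $S^{r-1}$ of $\mathrm{col}(U)$ with $r=\mathrm{rank}(U)\le d$ is sound, a maximal $\varepsilon$-separated subset of $S^{r-1}$ is automatically an $\varepsilon$-net by maximality, and the volume comparison (disjoint balls $B(x,\varepsilon/2)$ contained in the ball of radius $1+\varepsilon/2$) correctly yields $|\mathcal{N}|\le(1+2/\varepsilon)^r\le(1+2/\varepsilon)^d$.

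The one genuine gap is exactly the one you flag yourself: this argument produces the constant $2/\varepsilon$, not the stated $1/\varepsilon$, and your proposed repair does not close that gap. Confining the disjoint balls to the shell $\{z:1-\varepsilon/2\le\|z\|_2\le1+\varepsilon/2\}$ replaces the bound $(1+2/\varepsilon)^r$ by $(2/\varepsilon+1)^r-(2/\varepsilon-1)^r$, which in general still exceeds $(1+1/\varepsilon)^r$; for instance $r=2$, $\varepsilon=1$ gives $8$ versus the claimed $4$, so the shell idea cannot be invoked as a route to the stated constant. In fact no elementary volume argument seems to deliver the constant $1$, and the commonly quoted form of Wojtaszczyk's lemma is $(1+2/\varepsilon)^d$; the constant in the paper's statement is best regarded as a (harmless) misquotation rather than a target you are obliged to hit. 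Your closing observation is the right one and is worth making precise: every downstream use tolerates the weaker constant. With $(1+2/\varepsilon)^{m-1}$ in Lemma~\ref{lem:new_eps_net}, the net size in Theorem~\ref{thm:main} becomes $\bigl(1+\frac{1440\alpha\sqrt{ml}}{\sqrt{2\pi(1-\alpha^2)}}\bigr)^m$, and since $1440/\sqrt{2\pi}<576$ and $\sqrt{ml}/\sqrt{1-\alpha^2}\ge 1$, this is still at most $\bigl(1000\sqrt{ml}/\sqrt{1-\alpha^2}\bigr)^m$, so the subsequent probability chain and the $1-\delta\cdot 2^{-m}$ conclusion hold verbatim; likewise the counts entering Theorem~\ref{thm:disjoint} and the Remark change only by constants that are absorbed into ``$l$ sufficiently large.'' In short: right approach, correct proof of a slightly weaker bound, but the stated constant is neither proved by your argument nor reachable by your suggested refinement.
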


Now we can extend above lemma to the following.
\begin{lemma}[$\varepsilon$-Net for the set of points with a certain angle]\label{lem:new_eps_net}
Given a vector $x\in\mathbb{R}^m$ with $\|x\|_2=1$ and a parameter $\alpha\in(-1,1)$, let $\mathcal{S}=\{x'\in\mathbb{R}^m\mid \|x'\|_2=1,\langle x,x'\rangle=\alpha\}$.
For $\varepsilon\in(0,1)$, there is an $\varepsilon$-net $\mathcal{N}$ of $\mathcal{S}$ with $|\mathcal{N}|\leq (1+1/\varepsilon)^{m-1}$.
\end{lemma}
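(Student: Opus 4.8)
The plan is to realize $\mathcal{S}$ as a rescaled copy of the unit sphere inside the $(m-1)$-dimensional hyperplane orthogonal to $x$, and then invoke the standard net bound of Lemma~\ref{lem:standard_eps_net} in that hyperplane. First I would record the decomposition: every $x'\in\mathcal{S}$ can be written uniquely as $x' = \alpha x + \sqrt{1-\alpha^2}\,v$, where $v$ is a unit vector lying in the subspace $x^{\perp}=\{u\in\mathbb{R}^m\mid \langle x,u\rangle=0\}$. Indeed, setting $v=(x'-\alpha x)/\sqrt{1-\alpha^2}$ one checks $\langle x,v\rangle=0$ and $\|v\|_2=1$ using $\|x'\|_2=\|x\|_2=1$ and $\langle x,x'\rangle=\alpha$. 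Conversely, any such $v$ produces a point of $\mathcal{S}$. Thus the map $v\mapsto \alpha x+\sqrt{1-\alpha^2}\,v$ is a bijection from the unit sphere of $x^{\perp}$ onto $\mathcal{S}$ that scales every Euclidean distance by exactly $\sqrt{1-\alpha^2}$.

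Next I would apply Lemma~\ref{lem:standard_eps_net} to the unit sphere of $x^{\perp}$. Choosing $U\in\mathbb{R}^{m\times(m-1)}$ whose columns form an orthonormal basis of $x^{\perp}$, the set $\{Uy\mid \|Uy\|_2=1\}$ is precisely this unit sphere, which is parametrized by $d=m-1$ coordinates; the lemma then furnishes an $\varepsilon$-net $\mathcal{N}_0$ of it with $|\mathcal{N}_0|\leq(1+1/\varepsilon)^{m-1}$.

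Finally I would push the net forward. Define $\mathcal{N}=\{\alpha x+\sqrt{1-\alpha^2}\,w\mid w\in\mathcal{N}_0\}$. By the decomposition above $\mathcal{N}\subseteq\mathcal{S}$ and $|\mathcal{N}|=|\mathcal{N}_0|\leq(1+1/\varepsilon)^{m-1}$. For the covering property, take any $x'=\alpha x+\sqrt{1-\alpha^2}\,v\in\mathcal{S}$, pick $w\in\mathcal{N}_0$ with $\|v-w\|_2\leq\varepsilon$, and set $y=\alpha x+\sqrt{1-\alpha^2}\,w\in\mathcal{N}$; then $\|x'-y\|_2=\sqrt{1-\alpha^2}\,\|v-w\|_2\leq\sqrt{1-\alpha^2}\,\varepsilon\leq\varepsilon$, where the last step uses $\sqrt{1-\alpha^2}\leq1$. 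Hence $\mathcal{N}$ is the desired $\varepsilon$-net.

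There is no genuinely hard step here; the only points requiring care are that the net elements must themselves lie in $\mathcal{S}$ (which forces us to transport $\mathcal{N}_0$ through the affine map $v\mapsto\alpha x+\sqrt{1-\alpha^2}\,v$ rather than simply reusing it), and that we exploit the contraction factor $\sqrt{1-\alpha^2}\leq1$ so the covering radius stays at $\varepsilon$ instead of being inflated. The essential gain—the exponent $m-1$ rather than $m$—comes for free from working entirely inside the hyperplane $x^{\perp}$.
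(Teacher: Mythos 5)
Your proof is correct and follows essentially the same route as the paper's: parametrize $\mathcal{S}$ as $\alpha x + \sqrt{1-\alpha^2}\cdot(\text{unit sphere of } x^{\perp})$ via a matrix $U$ with orthonormal columns orthogonal to $x$, apply Lemma~\ref{lem:standard_eps_net} with $d=m-1$, and push the resulting net forward through this map, using that distances scale by $\sqrt{1-\alpha^2}\leq 1$. Your write-up is in fact slightly more careful than the paper's (you verify the bijection explicitly and state the contraction step), but the argument is the same.
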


\begin{proof}
Let $U\in\mathbb{R}^{m\times (m-1)}$ have orthonormal columns and $Ux = 0$.
Then $\mathcal{S}$ can be represented as
\begin{align*}
\mathcal{S}=\{\alpha\cdot x + \sqrt{1-\alpha^2}\cdot Uy\mid y\in\mathbb{R}^{m-1},\|Uy\|_2=1\}.
\end{align*}
Let
\begin{align*} 
\mathcal{S}'=\{Uy\mid y\in\mathbb{R}^{m-1}, \|Uy\|_2=1\}.
\end{align*}
According to Lemma~\ref{lem:standard_eps_net}, there is an $\varepsilon$-net $\mathcal{N}'$ of $\mathcal{S}'$ with size $|\mathcal{N}'|\leq (1+1/\varepsilon)^{m-1}$.
We construct $\mathcal{N}$ as following:
\begin{align*}
\mathcal{N}=\{\alpha\cdot x+\sqrt{1-\alpha^2}\cdot z\mid z\in\mathcal{N}'\}.
\end{align*}
It is obvious that $|\mathcal{N}|=|\mathcal{N}'|\leq (1+1/\varepsilon)^{m-1}$.
Next, we will show that $\mathcal{N}$ is indeed an $\varepsilon$-net of $\mathcal{S}$.
Let $x'$ be an arbitrary vector from $\mathcal{S}$.
Let $x'=\alpha\cdot x + \sqrt{1-\alpha^2} \cdot z$ for some $z\in\mathcal{S}'$.
There is a vector $(\alpha\cdot x + \sqrt{1-\alpha^2} \cdot z')\in\mathcal{N}$ such that $z'\in\mathcal{N}'$ and $\|z-z'\|_2\leq \varepsilon$. 
Thus, we have
\begin{align*}
\|x'-(\alpha\cdot x + \sqrt{1-\alpha^2} \cdot z')\|_2 = \sqrt{1-\alpha^2}\|z-z'\|_2\leq \varepsilon.
\end{align*}
\end{proof}

\begin{theorem}[Rotating a vector a little bit may change the activation pattern]\label{thm:main}
Consider a weight matrix $W\in\mathbb{R}^{l\times m}$ where each entry is an i.i.d. sample drawn from the Gaussian distribution $N(0,1/l)$.
Let $\gamma\in(0,0.48)$ be the sparsity ratio of the activation function, i.e., $\gamma=k/l$. 
With probability at least $0.99$, it has $\forall i\in[l],\|W_i\|_2\leq 10\sqrt{m}$.
Condition on that $\forall i\in[l],\|W_i\|_2\leq 10\sqrt{m}$ happens, then,
for any $x\in\mathbb{R}^m$ and $\alpha\in(0.5,1)$, if 
\begin{align*}
l\geq C\cdot \left(\frac{m+\log(1/\delta)}{\gamma} \cdot \frac{1}{1-\alpha^2}\right)\cdot \log\left(\frac{m+\log(1/\delta)}{\gamma} \cdot \frac{1}{1-\alpha^2}\right) 
\end{align*}
for a sufficiently large constant $C$, with probability at least $1-\delta\cdot 2^{-m}$, $\forall x'\in\mathbb{R}^m$ with $\frac{\langle x,x'\rangle}{\|x\|_2\|x'\|_2} \leq \alpha$, $\mathcal{A}(Wx)\not=\mathcal{A}(Wx')$.
\end{theorem}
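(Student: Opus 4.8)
The plan is to run a covering/$\varepsilon$-net argument over the candidate directions $x'$, using the quantitative value gap supplied by Lemma~\ref{lem:main_lemma} to pass from finitely many net points to the whole continuum. First, by scale invariance of the activation pattern I may assume $\|x\|_2=\|x'\|_2=1$, and by Lemma~\ref{lem:small_implies_large} it suffices to establish $\mathcal{A}(Wx)\neq\mathcal{A}(Wx')$ uniformly over the slice $\mathcal{S}=\{x':\|x'\|_2=1,\ \langle x,x'\rangle=\alpha\}$; the conclusion for all $x'$ with $\langle x,x'\rangle\le\alpha$ then follows. Since activation patterns are unchanged by positive rescaling of the weight matrix, I work with the standard-Gaussian version $\hat W$ (entries $N(0,1)$), for which Lemma~\ref{lem:main_lemma} and Lemma~\ref{lem:ub_of_W} are stated; the conditioning event $\|W_i\|_2\le 10\sqrt m$ of the theorem is exactly $\|\hat W_i\|_2\le 10\sqrt{ml}$, and $\mathcal{A}(Wx)=\mathcal{A}(\hat W x)$.

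Next I invoke Lemma~\ref{lem:new_eps_net} to obtain a net $\mathcal{N}\subseteq\mathcal{S}$ of size $|\mathcal{N}|\le(1+1/\varepsilon)^{m-1}$. For each fixed net point $\tilde x'\in\mathcal{N}$, Lemma~\ref{lem:main_lemma} yields, with failure probability $2^{-\Theta((1/\alpha^2-1)\gamma l)}$, indices $i\in\mathcal{A}(\hat W x)$ and $j\notin\mathcal{A}(\hat W x)$ with the strict value gap $\hat W_i\tilde x'<\hat W_j\tilde x'-\frac{\sqrt{1-\alpha^2}}{24\alpha}\sqrt{2\pi}$. A union bound over $\mathcal{N}$ shows that, except with probability $|\mathcal{N}|\cdot 2^{-\Theta((1/\alpha^2-1)\gamma l)}$, this gap holds simultaneously at every net point (the set $\mathcal{A}(\hat W x)$ is common to all of them, so the bookkeeping is clean).

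The crux is transferring the gap from net points to an arbitrary $x'\in\mathcal{S}$. Taking the nearest $\tilde x'$ with $\|x'-\tilde x'\|_2\le\varepsilon$ and writing $\hat W_i x'-\hat W_j x'=(\hat W_i\tilde x'-\hat W_j\tilde x')+(\hat W_i-\hat W_j)(x'-\tilde x')$, the perturbation term is bounded by $(\|\hat W_i\|_2+\|\hat W_j\|_2)\varepsilon\le 20\sqrt{ml}\,\varepsilon$ using the conditioning event. Choosing $\varepsilon=\Theta(\sqrt{1-\alpha^2}/\sqrt{ml})$ makes this strictly smaller than the gap, so $\hat W_i x'<\hat W_j x'$; since $i\in\mathcal{A}(\hat W x)$ and $j\notin\mathcal{A}(\hat W x)$, Lemma~\ref{lem:sufficient_different} gives $\mathcal{A}(Wx)\neq\mathcal{A}(Wx')$. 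It remains to solve the union bound: with this $\varepsilon$ one has $\log(1+1/\varepsilon)=\Theta(\log(ml/(1-\alpha^2)))$, so requiring $\Theta((1/\alpha^2-1)\gamma l)\ge(m-1)\log(1+1/\varepsilon)+m+\log(1/\delta)$ and using $1/\alpha^2-1=\Theta(1-\alpha^2)$ for $\alpha\in(0.5,1)$ reproduces the stated lower bound on $l$ (the self-referential $\log l$ is absorbed since $l$ is polynomial in the other parameters, and conditioning on the probability-$0.99$ row-norm event inflates the conditional failure probability by only a constant factor).

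The main obstacle I anticipate is precisely this transfer step: a bare ``different pattern with high probability'' statement cannot be union-bounded over a continuum, so everything hinges on Lemma~\ref{lem:main_lemma} delivering a margin in the actual inner-product values rather than merely a pattern mismatch, and on balancing the net resolution $\varepsilon$ against that margin and the row-norm control from the conditioning event. Making the three quantities $\varepsilon$, the gap $\frac{\sqrt{1-\alpha^2}}{24\alpha}\sqrt{2\pi}$, and the row norms $10\sqrt{ml}$ line up, and verifying that the resulting constraint on $l$ collapses to the claimed bound, is where the care is needed; the remaining ingredients (convexity, the angle reduction, and the net size) are routine applications of the preceding lemmas.
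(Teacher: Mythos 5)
Your proposal is correct and follows essentially the same route as the paper's own proof: reduce to the unit-norm slice via scale invariance and Lemma~\ref{lem:small_implies_large}, build the $\varepsilon$-net of Lemma~\ref{lem:new_eps_net} with $\varepsilon=\Theta\bigl(\sqrt{1-\alpha^2}/\sqrt{ml}\bigr)$, union-bound the margin guarantee of Lemma~\ref{lem:main_lemma} over the net, transfer the gap to arbitrary points using the row-norm bound from Lemma~\ref{lem:ub_of_W}, and conclude via Lemma~\ref{lem:sufficient_different}. The parameter balancing you describe (net resolution versus the $\frac{\sqrt{1-\alpha^2}}{24\alpha}\sqrt{2\pi}$ margin, and $1/\alpha^2-1=\Theta(1-\alpha^2)$ on $(0.5,1)$) is exactly what the paper does.
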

\begin{proof}
Notice that the scale of $W$ does not affect the activation pattern of $Wx$ for any $x\in\mathbb{R}^m$.
Thus, we assume that each entry of $W$ is a standard Gaussian random variable in the remaining proof, and we will instead condition on $\forall i\in[l],\|W_i\|_2\leq 10\sqrt{ml}$.
 The scale of $x$ or $x'$ will not affect $\frac{\langle x,x'\rangle}{\|x\|_2\|x'\|_2}$. 
It will not affect the activation pattern either.
Thus, we assume $\|x\|_2=\|x'\|_2=1$.

By Lemma~\ref{lem:ub_of_W}, with probability at least $0.99$, we have $\forall i\in[l],\|W_i\|_2\leq 10\sqrt{ml}$. 

Let
\begin{align*} 
\mathcal{S}=\{y\in\mathbb{R}^m \mid \|y\|_2=1,\langle x,y\rangle=\alpha\}.
\end{align*}
Set 
\begin{align*}
\varepsilon=\frac{\sqrt{2\pi(1-\alpha^2)}}{720\alpha\sqrt{ml}}.
\end{align*}
By Lemma~\ref{lem:new_eps_net}, there is an $\varepsilon$-net $\mathcal{N}$ of $\mathcal{S}$ such that 
\begin{align*}
|\mathcal{N}|\leq \left(1+\frac{720\alpha\sqrt{ml}}{\sqrt{2\pi(1-\alpha^2)}}\right)^m.
\end{align*}
By Lemma~\ref{lem:main_lemma}, for any $y\in \mathcal{N}$, with probability at least 
\begin{align*}
1-2^{-\Theta((1/\alpha^2-1)\gamma l)},
\end{align*}
$\exists i\in\mathcal{A}(Wx),j\in[l]\setminus \mathcal{A}(Wx)$ such that 
\begin{align*}
W_iy<W_jy-\frac{\sqrt{1-\alpha^2}}{24\alpha}\cdot \sqrt{2\pi}.
\end{align*}
By taking union bound over all $y\in\mathcal{N}$, with probability at least
\begin{align*}
&1-|\mathcal{N}|\cdot 2^{-\Theta((1/\alpha^2-1)\gamma l)}\\
\geq~&
1- \left(1+\frac{720\alpha\sqrt{ml}}{\sqrt{2\pi(1-\alpha^2)}}\right)^m2^{-\Theta\left(\left(\frac{1}{\alpha^2}-1\right)\gamma l\right)}\\
\geq~&1- \left(1000\cdot\frac{\sqrt{ml}}{\sqrt{1-\alpha^2}}\right)^m2^{-\Theta\left(\left(\frac{1}{\alpha^2}-1\right)\gamma l\right)}\\
\geq~& 1- \left(1000\cdot\frac{\sqrt{ml}}{\sqrt{1-\alpha^2}}\right)^m2^{-C'\cdot\left(\frac{1}{\alpha^2}-1\right)\gamma \cdot \frac{m+\log(1/\delta)}{\gamma} \cdot \frac{\alpha^2}{1-\alpha^2}\cdot \log\left(\frac{ml}{1-\alpha^2}\right)}& \text{// }C'\text{ is a sufficiently large constant}\\
=~&1- \left(1000\cdot\frac{\sqrt{ml}}{\sqrt{1-\alpha^2}}\right)^m2^{-C'\ \cdot (m+\log(1/\delta) ) \cdot  \log\left(\frac{ml}{1-\alpha^2}\right)}\\
\geq~& 1-\delta\cdot 2^{-m},
\end{align*}
the following event $\mathcal{E}'$ happens:
$\forall y\in\mathcal{N},\exists i\in\mathcal{A}(Wx),j\in[l]\setminus \mathcal{A}(Wx)$ such that 
\begin{align*}
W_iy<W_jy-\frac{\sqrt{1-\alpha^2}}{24\alpha}\cdot \sqrt{2\pi}.
\end{align*}
In the remaining of the proof, we will condition on the event $\mathcal{E}'$.
Consider $y'\in\mathcal{S}$.
Since $\mathcal{N}$ is an $\varepsilon$-net of $\mathcal{S}$,
we can always find a $y\in\mathcal{N}$ such that 
\begin{align*}
\|y-y'\|_2\leq \varepsilon =\frac{\sqrt{2\pi(1-\alpha^2)}}{720\alpha\sqrt{ml}}. 
\end{align*}
Since event $\mathcal{E}'$ happens, we can find $i\in\mathcal{A}(Wx)$ and $j\in[l]\setminus \mathcal{A}(Wx)$ such that 
\begin{align*}
W_iy<W_jy-\frac{\sqrt{1-\alpha^2}}{24\alpha}\cdot \sqrt{2\pi}.
\end{align*}
Then, we have
\begin{align*}
W_iy'& = W_iy + W_i(y'-y)\\
&< W_jy- \frac{\sqrt{1-\alpha^2}}{24\alpha}\cdot \sqrt{2\pi} + \|W_i\|_2\|y'-y\|_2\\
&\leq W_jy - \frac{\sqrt{1-\alpha^2}}{24\alpha}\cdot \sqrt{2\pi} + 10\sqrt{ml} \cdot \frac{\sqrt{2\pi(1-\alpha^2)}}{720\alpha\sqrt{ml}}\\
&=W_j y - \frac{\sqrt{1-\alpha^2}}{36\alpha}\cdot \sqrt{2\pi}\\
&=W_j y' + W_j(y-y') - \frac{\sqrt{1-\alpha^2}}{36\alpha}\cdot \sqrt{2\pi}\\
&\leq W_j y' + \|W_j\|_2\|y-y'\|_2- \frac{\sqrt{1-\alpha^2}}{36\alpha}\cdot \sqrt{2\pi}\\
&\leq W_j y' + 10\sqrt{ml} \cdot \frac{\sqrt{2\pi(1-\alpha^2)}}{720\alpha\sqrt{ml}} - \frac{\sqrt{1-\alpha^2}}{36\alpha}\cdot \sqrt{2\pi}\\
&\leq W_jy' - \frac{\sqrt{1-\alpha^2}}{72\alpha}\cdot \sqrt{2\pi},
\end{align*}
where the second step follows from $W_iy<W_jy-\sqrt{1-\alpha^2}/(24\alpha)\cdot\sqrt{2\pi}$ and $W_i(y'-y)\leq \|W_i\|_2\|y'-y\|_2$, the third step follows from $\|W_i\|_2\leq 10\sqrt{ml}$ and $\|y'-y\|_2\leq \sqrt{2\pi(1-\alpha^2)}/(720\alpha\sqrt{ml})$, the sixth step follows from $W_j(y-y')\leq \|W_j\|_2\|y-y'\|_2$, and the seventh step follows from $\|W_i\|_2\leq 10\sqrt{ml}$ and $\|y'-y\|_2\leq \sqrt{2\pi(1-\alpha^2)}/(720\alpha\sqrt{ml})$.

By Lemma~\ref{lem:sufficient_different}, we know that $\mathcal{A}(Wx)\not=\mathcal{A}(Wy')$. 
Thus, $\forall y'\in\mathbb{R}^{m}$ with $\|y'\|_2=1$ and $\langle x,y'\rangle=\alpha$, we have $\mathcal{A}(Wx)\not=\mathcal{A}(Wy')$ conditioned on $\mathcal{E}'$.
By Lemma~\ref{lem:small_implies_large}, we can conclude that $\forall x'\in\mathbb{R}^m$ with $\|x'\|_2=1$ and $\langle x,x'\rangle\leq\alpha$, we have $\mathcal{A}(Wx)\not=\mathcal{A}(Wx')$ conditioned on $\mathcal{E}'$.
\end{proof}

\begin{theorem}[A formal version of Theorem~\ref{thm:dense}]\label{thm:formal1}
Consider a weight matrix $W\in\mathbb{R}^{l\times m}$ where each entry is an i.i.d. sample drawn from the Gaussian distribution $N(0,1/l)$.
Let $\gamma\in(0,0.48)$ be the sparsity ratio of the activation function, i.e., $\gamma=k/l$. 
With probability at least $0.99$, it has $\forall i\in[l],\|W_i\|_2\leq 10\sqrt{m}$.
Condition on that $\forall i\in[l],\|W_i\|_2\leq 10\sqrt{m}$ happens, then,
for any $x\in\mathbb{R}^m$, if 
\begin{align*}
l\geq C\cdot \left(\frac{m+\log(1/\delta)}{\gamma} \cdot \frac{1}{\beta}\right)\cdot \log\left(\frac{m+\log(1/\delta)}{\gamma} \cdot \frac{1}{\beta}\right) 
\end{align*}
for some $\beta\in(0,1)$ and a sufficiently large constant $C$, with probability at least $1-\delta\cdot 2^{-m}$, $\forall x'\in\mathbb{R}^m$ with $\|\Delta x\|_2^2/\|x\|_2^2\geq \beta$, $\mathcal{A}(Wx)\not=\mathcal{A}(Wx')$, where $x'=c\cdot (x + \Delta x)$ for some scaler $c$, and $\Delta x$ is perpendicular to $x$.
\end{theorem}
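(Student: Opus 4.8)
The plan is to obtain \thmref{formal1} directly from \thmref{main} by a change of parameter, since the two statements share \emph{exactly} the same random-matrix setup, the same conditioning event $\forall i,\ \|W_i\|_2\le 10\sqrt m$, and the same failure probability $\delta\cdot 2^{-m}$. The only difference is how ``closeness'' of $x'$ to $x$ is measured: \thmref{main} constrains $x'$ through the cosine $\frac{\langle x,x'\rangle}{\|x\|_2\|x'\|_2}\le\alpha$, whereas \thmref{formal1} constrains it through the perpendicular-displacement ratio $\|\Delta x\|_2^2/\|x\|_2^2\ge\beta$ with $x'=c\cdot(x+\Delta x)$ and $\Delta x\perp x$. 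So the whole task reduces to matching these two parametrizations and checking that the width hypothesis is preserved.

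First I would compute the cosine for the decomposed $x'$. Because $\Delta x\perp x$, we have $\langle x,x+\Delta x\rangle=\|x\|_2^2$ and, by the Pythagorean identity, $\|x+\Delta x\|_2^2=\|x\|_2^2+\|\Delta x\|_2^2$. Writing $t:=\|\Delta x\|_2^2/\|x\|_2^2$, a one-line computation gives
\[
\frac{\langle x,x'\rangle}{\|x\|_2\,\|x'\|_2}=\operatorname{sign}(c)\cdot\frac{1}{\sqrt{1+t}} .
\]
I would then set $\alpha=1/\sqrt{1+\beta}$. For every admissible $x'$ we have $t\ge\beta$, so $\frac{1}{\sqrt{1+t}}\le\frac{1}{\sqrt{1+\beta}}=\alpha$; if $c>0$ this shows the cosine is $\le\alpha$, while if $c<0$ the cosine is nonpositive and hence trivially $\le\alpha$ (the family $x'=c(x+\Delta x)$ has $c\neq0$, so $x'\neq 0$). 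Thus every $x'$ satisfying the $\beta$-condition lies inside the region $\{\,\cos\le\alpha\,\}$ handled by \thmref{main}, and the conclusion $\mathcal{A}(Wx)\neq\mathcal{A}(Wx')$ simply transfers from the larger region to this subset.

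It then remains to verify two bookkeeping facts. Since $\beta\in(0,1)$ we get $\alpha=1/\sqrt{1+\beta}\in(1/\sqrt2,1)\subset(0.5,1)$, so $\alpha$ falls in the range required by \thmref{main}. For the width requirement, $\frac{1}{1-\alpha^2}=\frac{1+\beta}{\beta}\in(1/\beta,\,2/\beta)$, i.e.\ $\frac{1}{1-\alpha^2}=\Theta(1/\beta)$. Writing $B=\frac{m+\log(1/\delta)}{\gamma}\cdot\frac1\beta$ and $A=\frac{m+\log(1/\delta)}{\gamma}\cdot\frac{1}{1-\alpha^2}$, we have $A\le 2B$ and hence $A\log A\le 4B\log B$ for $B\ge 2$. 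Therefore, taking the constant $C$ in \thmref{formal1} a fixed factor larger than the one in \thmref{main} makes the hypothesis $l\ge C\,B\log B$ imply $l\ge C_{\mathrm{main}}\,A\log A$. Invoking \thmref{main} with this $\alpha$ then yields the claim with the stated probability.

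The argument is essentially a reparametrization, so there is no genuinely hard step. The only points that need care are the sign of $c$ when passing from the $\beta$-condition to the cosine bound, and the absorption of the constant factor coming from $\frac{1}{1-\alpha^2}=\Theta(1/\beta)$ into $C$; both are routine once the identity $\cos=\operatorname{sign}(c)/\sqrt{1+t}$ is in hand.
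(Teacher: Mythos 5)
Your proposal is correct and follows essentially the same route as the paper's own proof: both reduce Theorem~\ref{thm:formal1} to Theorem~\ref{thm:main} by computing the cosine $\langle x,x'\rangle/(\|x\|_2\|x'\|_2)$ under the orthogonal decomposition $x'=c(x+\Delta x)$, handling the nonpositive-cosine (negative $c$) case trivially, and observing that $\frac{1}{1-\alpha^2}=\frac{1+\beta}{\beta}=\Theta(1/\beta)$ so the width hypothesis carries over after enlarging the constant $C$. Your write-up is in fact slightly more explicit than the paper's in verifying $\alpha=1/\sqrt{1+\beta}\in(0.5,1)$ and the $A\log A$ versus $B\log B$ bookkeeping, but the underlying argument is identical.
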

\begin{proof}
If $\langle x,x'\rangle\leq0$, then the statement follows from Theorem~\ref{thm:main} directly.	
In the following, we consider the case $\langle x,x'\rangle> 0$.
If $\|\Delta x\|_2/\|x\|_2^2\geq \beta$,
\begin{align*}
&\frac{\langle x,x'\rangle^2}{\|x\|_2^2\|x'\|_2^2}\\
=&\frac{c^2\|x\|_2^4}{\|x\|_2^2(c^2(\|x\|_2^2+\|\Delta x\|_2^2))}
=\frac{\|x\|_2^2}{\|x\|_2^2+\|\Delta x\|_2^2}\\
\leq& \frac{\|x\|_2^2}{\|x\|_2^2+\beta\|x\|_2^2} \leq \frac{1}{1+\beta}.
\end{align*}
Thus, we have the bounds:
\begin{align*}
\frac{1}{1-\frac{\langle x,x'\rangle^2}{\|x\|_2^2\|x'\|_2^2}} \leq \frac{1}{\beta} + 1\leq O\left(\frac{1}{\beta}\right).
\end{align*}
By Theorem~\ref{thm:main}, we conclude the proof.
\end{proof}

\begin{example}
Suppose that the training data contains $N$ points $x_1,x_2,\cdots,x_N\in\mathbb{R}^m$ ($m\geq \Omega(\log N)$), where each entry of $x_i$ for $i\in[N]$ is an i.i.d. Bernoulli random variable, i.e., each entry is $1$ with some probability  $p\in(100\log(N)/m,0.5)$ and $0$ otherwise.
Consider a weight matrix $W\in\mathbb{R}^{l\times m}$ where each entry is an i.i.d. sample drawn from the Gaussian distribution $N(0,1/l)$. 
Let $\gamma\in(0,0.48)$ be the sparsity ratio of the activation function, i.e., $\gamma = k/l$. 
If $l\geq \Omega(m/\gamma\cdot\log(m/\gamma))$,
then with probability at least $0.9$, $\forall i,j\in[N]$, the activation pattern of $Wx_i$ and $Wx_j$ are different, i.e., $\mathcal{A}(Wx_i)\not=\mathcal{A}(Wx_j)$.
\end{example}

\begin{proof}
Firstly, let us bound $\|x_i\|_2$.
We have $\E[\|x_i\|_2^2]=\E\left[\sum_{t=1}^m x_{i,t}\right]=pm$.
By Bernstein inequality, we have
\begin{align*}
\Pr\left[\left|\sum_{t=1}^m x_{i,t} - pm\right|>\frac{1}{10}pm\right] \leq 2 e^{-\frac{ (pm/10)^2/2}{pm+\frac13 \cdot \frac1{10} pm}}
\leq 0.01/N.
\end{align*}
Thus, by taking union bound over all $i\in[N]$, with probability at least $0.99$, $\forall i\in[N]$, $\sqrt{0.9pm}\leq \|x_i\|_2\leq \sqrt{1.1pm}$.

Next we consider $\langle x_i,x_j\rangle$. 
Notice that $\E[\langle x_i,x_j\rangle]=\E\left[\sum_{t=1}^m x_{i,t}x_{j,t}\right]=p^2m$.
There are two cases.
\paragraph{Case 1 ($p^2m>20\log N$).}
By Bernstein inequality, we have
\begin{align*}
\Pr\left[\left|\langle x_i,x_j\rangle - p^2m\right|>\frac{1}{2}p^2m\right]\leq 2e^{-\frac{(p^2m/2)^2/2}{p^2m+\frac13 \frac12p^2m}}=2e^{-\frac{3}{28}p^2m}\leq 0.01/N^2.
\end{align*}
By taking union bound over all pairs of $i,j$, with probability at least $0.99$, $\forall i\not =j$, $\langle x_i,x_j\rangle\leq \frac{3}{2}p^2m$.
Since $\|x_i\|_2,\|x_j\|_2\geq \sqrt{0.9pm}$, we have
\begin{align*}
\frac{\langle x_i,x_j\rangle}{\|x_i\|_2\|x_j\|_2}\leq  \frac{3p^2m/2}{0.9pm}=\frac{5}{3}p\leq \frac{5}{6}.
\end{align*}

\paragraph{Case 2 ($p^2m\leq 20\log N$).} By Bernstein inequality, we have
\begin{align*}
\Pr\left[\left|\langle x_i,x_j\rangle - p^2 m\right|> 10 \log N \right]\leq 2e^{-\frac{(10\log N)^2/2}{p^2m+\frac{1}{3} \cdot 10\log N}}\leq 0.01/N^2.
\end{align*}
By taking union bound over all pairs of $i,j$, with probability at least $0.99$, $\forall i\not = j,\langle x_i,x_j \rangle\leq 10\log N$, Since $\|x_i\|_2,\|x_j\|_2\geq \sqrt{0.9pm}\geq \sqrt{90\log N}$, we have
\begin{align*}
\frac{\langle x_i,x_j\rangle}{\|x_i\|_2\|x_j\|_2}\leq \frac{10\log N}{90\log N}= \frac{1}{9}.
\end{align*}

Thus, with probability at least $0.98$, we have $\forall i\not =j$, $\langle x_i, x_j\rangle/(\|x_i\|_2\|x_j\|_2)\leq 5/6$.
By Theorem~\ref{thm:main}, with probability at least $0.99$, $\forall q\in[l],\|W_q\|_2\leq 10\sqrt{m}$.
Condition on this event, and since $\forall i\not=j$ we have $\langle x_i,x_j\rangle/(\|x_i\|_2\|x_j\|_2)\leq 5/6$, by Theorem~\ref{thm:main} again and union bound over all $i\in[N]$, with probability at least $0.99$,
$\forall i\not =j,\mathcal{A}(Wx_i)\not=\mathcal{A}(Wx_j)$.
\end{proof}

\subsection{Disjointness of Activation Patterns of Different Input Points}

Let $X_1,X_2,\cdots,X_m$ be i.i.d. random variables drawn from the standard Gaussian distribution $N(0,1)$.
Let $Z=\sum_{i=1}^m X_i^2$. 
We use the notation $\chi^2_m$ to denote the distribution of $Z$. If $m$ is clear in the context, we just use $\chi^2$ for short.

\begin{lemma}[A property of $\chi^2$ distribution]\label{lem:R_exists}
Let $Z$ be a random variable with $\chi^2_m$ $m$ $(m\geq 2)$  distribution.
Given arbitrary $\varepsilon,\eta\in(0,1)$, if $R$ is sufficiently large then
\begin{align*}
\Pr[Z\geq (1+\varepsilon)R]/\Pr[(1+\varepsilon)R\geq Z\geq R] \leq \eta.
\end{align*}
\end{lemma}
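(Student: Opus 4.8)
The plan is to work directly with the probability density of the $\chi^2_m$ distribution, namely $f(t)=\frac{1}{2^{m/2}\Gamma(m/2)}\,t^{m/2-1}e^{-t/2}$ for $t\ge 0$, and to compare the tail mass $\Pr[Z\ge(1+\varepsilon)R]=\int_{(1+\varepsilon)R}^{\infty}f(t)\,dt$ against the slab mass $\Pr[(1+\varepsilon)R\ge Z\ge R]=\int_{R}^{(1+\varepsilon)R}f(t)\,dt$. The key structural fact I would exploit is that $f$ is eventually monotone decreasing: since $f'(t)=\tfrac12 f(t)\bigl(\tfrac{m-2}{t}-1\bigr)$, the density strictly decreases for $t>m-2$. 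Hence, as soon as $R\ge m$, $f$ is decreasing throughout $[R,\infty)$, and this single observation lets me lower bound the slab and upper bound the tail in terms of the same value $f((1+\varepsilon)R)$.

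First I would lower bound the slab mass. Because $f$ is decreasing on $[R,(1+\varepsilon)R]$, its minimum value there is $f((1+\varepsilon)R)$, so $\int_{R}^{(1+\varepsilon)R}f(t)\,dt\ge \varepsilon R\cdot f((1+\varepsilon)R)$, the width $\varepsilon R$ times the minimum height. Next I would upper bound the tail by showing the density decays geometrically past any large threshold $s=(1+\varepsilon)R$. Writing $t=s+u$ with $u\ge 0$, the density ratio is $f(s+u)/f(s)=(1+u/s)^{m/2-1}e^{-u/2}$. Bounding $(1+u/s)^{m/2-1}\le e^{(m/2-1)u/s}$ and requiring $s\ge 2m$ so that $(m/2-1)/s\le \tfrac14$, I obtain $f(s+u)/f(s)\le e^{-u/4}$. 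Integrating over $u\in[0,\infty)$ then gives $\Pr[Z\ge s]\le 4f(s)=4f((1+\varepsilon)R)$.

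Combining the two bounds, for $R\ge 2m$ the ratio in question is at most $\frac{4f((1+\varepsilon)R)}{\varepsilon R\,f((1+\varepsilon)R)}=\frac{4}{\varepsilon R}$, which drops below $\eta$ as soon as $R\ge 4/(\varepsilon\eta)$. Thus taking $R\ge\max\{2m,\,4/(\varepsilon\eta)\}$ proves the claim. The one place requiring care is the polynomial prefactor $(1+u/s)^{m/2-1}$: I must ensure the threshold $s$ is large relative to $m$ so that this growth is swamped by the exponential $e^{-u/2}$, and this is the only step where the hypothesis that $R$ is sufficiently large (and its implicit dependence on the dimension $m$) genuinely enters. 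Everything else reduces to the monotonicity of $f$ and an elementary integral.
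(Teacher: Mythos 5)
Your proof is correct, and it takes a genuinely different (and arguably cleaner) route than the paper's. The paper also works directly with the $\chi^2_m$ density, but it handles the polynomial factor by sandwiching it between exponentials: for $t\geq R$ large it uses $1\leq t^{m/2-1}\leq e^{\varepsilon t/8}$, so the slab mass is lower bounded by $\int_R^{(1+\varepsilon)R}e^{-t/2}\,dt$ and the tail by $\int_{(1+\varepsilon)R}^\infty e^{-(1-\varepsilon/4)t/2}\,dt$, yielding a ratio bound of the form $\tfrac{16}{9}e^{-\varepsilon R/4}$ under implicit largeness conditions on $R$ (e.g.\ $e^{\varepsilon R/8}\geq R^{m/2-1}$). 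You instead exploit monotonicity of the density beyond $t=m-2$ to get the rectangle lower bound $\varepsilon R\cdot f\bigl((1+\varepsilon)R\bigr)$ on the slab, and the pointwise ratio bound $f(s+u)\leq f(s)e^{-u/4}$ (valid once $s\geq 2m$) to get the tail bound $4f\bigl((1+\varepsilon)R\bigr)$; the common factor $f\bigl((1+\varepsilon)R\bigr)$ cancels, leaving $4/(\varepsilon R)$. What each buys: your argument gives fully explicit, simple thresholds, $R\geq\max\{2m,\,4/(\varepsilon\eta)\}$, with no implicit conditions to verify; the paper's argument gives a ratio decaying exponentially in $R$, hence a threshold only logarithmic in $1/\eta$, but with messier conditions on $R$. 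Since the lemma is invoked downstream only in the qualitative form ``$R$ sufficiently large implies ratio $\leq\eta$,'' either bound suffices, and your polynomial decay $4/(\varepsilon R)$ loses nothing for the application.
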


\begin{proof}

Let $R$ be a sufficiently large number such that:
\begin{itemize}
\item $e^{\varepsilon R/2}\geq \frac{4}{\varepsilon}$.
\item $e^{\varepsilon R/8}\geq R^{m/2-1} $.
\item $e^{\varepsilon R/4}\geq \frac{16}{9}\cdot \frac1\eta$.
\end{itemize}
Let $\xi=\varepsilon/4$.
By the density function of $\chi^2$ distribution, we have
\begin{align*}
\Pr[R\leq Z\leq (1+\varepsilon)R] = \frac{1}{2^{m/2}\Gamma(m/2)}\int_{R}^{(1+\varepsilon)R} t^{m/2-1}e^{-t/2} \mathrm{d}t,
\end{align*}
and
\begin{align*}
\Pr[ Z\geq (1+\varepsilon)R] = \frac{1}{2^{m/2}\Gamma(m/2)}\int_{(1+\varepsilon)R}^{\infty} t^{m/2-1}e^{-t/2} \mathrm{d}t,
\end{align*}
where $\Gamma(\cdot)$ is the Gamma function, and for integer $m/2$, $\Gamma(m/2) = (m/2-1)(m/2-2)\cdots\cdot 2\cdot 1=(m/2-1)!$.
By our choice of $R$, we have
\begin{align*}
\Pr[R\leq Z\leq (1+\varepsilon)R]&\geq \frac{1}{2^{m/2}\Gamma(m/2)} \int_{R}^{(1+\varepsilon)R} e^{-t/2}\mathrm{d}t\\
&= \frac{1}{2^{m/2}\Gamma(m/2)} \cdot 2\left(e^{-R/2}-e^{-(1+\varepsilon)R/2}\right)\\
&\geq \frac{1}{2^{m/2}\Gamma(m/2)} \cdot 2(1-\xi)\cdot e^{-R/2},
\end{align*}
where the first step follows from $\forall t\geq R$, $t^{m/2-1}\geq 1$, and the third step follows from 
\begin{align*}
\frac{e^{-(1+\varepsilon)R/2}}{e^{-R/2}} = e^{-\varepsilon R/2} \leq \xi.
\end{align*}
We also have:
\begin{align*}
\Pr[Z\geq (1+\varepsilon)R]&\leq \frac{1}{2^{m/2}\Gamma(m/2)} \int_{(1+\varepsilon)R}^{+\infty} e^{-(1-\xi)t/2} \mathrm{d}t\\
&=\frac{1}{2^{m/2}\Gamma(m/2)}\cdot \frac{2}{1-\xi}\cdot e^{-(1-\xi)(1+\varepsilon)R/2} \\
&\leq \frac{1}{2^{m/2}\Gamma(m/2)}\cdot \frac{2}{1-\xi}\cdot e^{-(1+\varepsilon/2)R/2},
\end{align*}
where the first step follos from $\forall t\geq R,$ $t^{m/2-1}\leq e^{\xi t/2}$, and the third step follows from $(1-\xi)(1+\varepsilon)\geq (1+\varepsilon/2)$.

Thus, we have 
\begin{align*}
\frac{\Pr[Z\geq (1+\varepsilon) R]}{\Pr[(1+\varepsilon)R\geq Z\geq R]} \leq \frac{1}{(1-\xi)^2} e^{-\varepsilon R /4}\leq \frac{16}{9} e^{-\varepsilon R /4}\leq \eta.
\end{align*}
\end{proof}

\begin{lemma}\label{lem:angle_difference}
Consider $x,y,z\in\mathbb{R}^{m}$.
If $\frac{\langle x,y\rangle}{\|x\|_2\|y\|_2}\leq \alpha,\frac{\langle x,z\rangle}{\|x\|_2\|z\|_2}\geq \beta$ for some $\alpha,\beta\geq 0$, then $\frac{\langle y,z\rangle}{\|y\|_2\|z\|_2}\leq \alpha+\sqrt{1-\beta^2}$.
Furthermore, if $\beta=\frac{2+\alpha+\sqrt{2-\alpha^2}}{4}$, then $\frac{\langle y,z\rangle}{\|y\|_2\|z\|_2}\leq (1-\varepsilon_{\alpha}) \beta$, where $\varepsilon_{\alpha}\in(0,1)$ only depends on $\alpha$.
\end{lemma}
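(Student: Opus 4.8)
The plan is to translate the three normalized inner products into angles between directions and then invoke the reverse triangle inequality for the geodesic metric on the unit sphere. Writing $\theta_{uv}=\arccos\frac{\langle u,v\rangle}{\|u\|_2\|v\|_2}\in[0,\pi]$ for the angle between the directions of $u$ and $v$, the two hypotheses become $\cos\theta_{xy}\le\alpha$ and $\cos\theta_{xz}\ge\beta$, and the target is an upper bound on $\cos\theta_{yz}$. Since $d(u,v)=\theta_{uv}$ is a genuine metric on directions (the geodesic distance on $S^{m-1}$), the reverse triangle inequality gives $\theta_{yz}\ge|\theta_{xy}-\theta_{xz}|$, and this single geometric fact drives the whole argument.

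For the first claim I would argue as follows. Because cosine is decreasing on $[0,\pi]$ and both $\theta_{yz}$ and $|\theta_{xy}-\theta_{xz}|$ lie in $[0,\pi]$, the inequality above yields $\cos\theta_{yz}\le\cos|\theta_{xy}-\theta_{xz}|=\cos\theta_{xy}\cos\theta_{xz}+\sin\theta_{xy}\sin\theta_{xz}$, where I used evenness of cosine to drop the absolute value so that no case analysis on the ordering of the angles is needed. I then bound term by term: $\cos\theta_{xy}\cos\theta_{xz}\le\alpha$ using $\cos\theta_{xy}\le\alpha$, $0\le\cos\theta_{xz}\le1$, and $\alpha\ge0$; and $\sin\theta_{xy}\sin\theta_{xz}\le\sqrt{1-\beta^2}$ using $\sin\theta_{xy}\le1$ together with $\sin\theta_{xz}=\sqrt{1-\cos^2\theta_{xz}}\le\sqrt{1-\beta^2}$ (valid since $\cos\theta_{xz}\ge\beta\ge0$). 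Summing gives $\cos\theta_{yz}\le\alpha+\sqrt{1-\beta^2}$, which is exactly the first assertion.

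For the furthermore part it suffices to show that, for the prescribed value $\beta=\frac{2+\alpha+\sqrt{2-\alpha^2}}{4}$, the strict inequality $\alpha+\sqrt{1-\beta^2}<\beta$ holds; then setting $\varepsilon_\alpha:=1-\frac{\alpha+\sqrt{1-\beta^2}}{\beta}$ yields $\varepsilon_\alpha\in(0,1)$ depending only on $\alpha$, and the first-part bound becomes $\cos\theta_{yz}\le(1-\varepsilon_\alpha)\beta$. To verify $\alpha+\sqrt{1-\beta^2}<\beta$, I would first confirm $\beta\in(\alpha,1)$ for $\alpha\in[0,1)$ so that $\beta-\alpha>0$ and $\sqrt{1-\beta^2}$ is real, then square the equivalent form $\sqrt{1-\beta^2}<\beta-\alpha$ to reduce it to $2\beta^2-2\alpha\beta+\alpha^2-1>0$. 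Solving this quadratic in $\beta$ shows it holds exactly when $\beta>\frac{\alpha+\sqrt{2-\alpha^2}}{2}$, and substituting the prescribed $\beta$ collapses the requirement to $2-\alpha>\sqrt{2-\alpha^2}$, which upon squaring is precisely $2(\alpha-1)^2>0$.

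The geometric step is essentially free once the angular metric is recognized, so the only genuine work — and the main obstacle — is this final algebraic verification: one must track the sign and range conditions ($\beta\le1$, $\beta>\alpha$, nonnegativity of both sides before each squaring) so that every squaring step is reversible, and observe that the decisive inequality degenerates to equality, $2(\alpha-1)^2=0$, exactly at $\alpha=1$. This is why $\varepsilon_\alpha$ is strictly positive only for $\alpha$ bounded away from $1$, matching the regime $\alpha\in(0.5,1)$ in which the lemma is applied.
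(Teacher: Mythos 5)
Your proof is correct, but it takes a different route from the paper's. The paper works entirely with an explicit orthogonal decomposition: writing $y = a x + y'$ with $y' \perp x$ and $z = b_1 x + b_2\, y'/\|y'\|_2 + z'$ with $z' \perp x, y'$, it bounds $\langle y, z\rangle = a b_1 + b_2 \|y'\|_2 \le \alpha + \sqrt{1-\beta^2}$ directly, with no mention of angles. You instead pass to the angular picture and invoke the reverse triangle inequality for the geodesic metric on $S^{m-1}$, then finish with the cosine subtraction formula and term-wise bounds. The two arguments are the same geometry in different packaging: the standard proof of the spherical triangle inequality you cite \emph{is} essentially the paper's decomposition, so your approach is shorter only modulo a classical fact of comparable depth to the lemma itself --- you should at least flag that this fact needs its own (standard) proof. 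What your route buys is cleaner bookkeeping: monotonicity and evenness of cosine absorb the sign issues that the paper glosses over (its claims $a b_1 \le \alpha$ and $b_2\|y'\|_2 \le \sqrt{1-b_1^2}$ silently assume $a, b_2 \ge 0$, though the negative cases are trivial). On the ``furthermore'' part you are actually more rigorous than the paper, which merely asserts that solving $\beta \ge \alpha + \sqrt{1-\beta^2}$ gives $\beta \ge \frac{\alpha+\sqrt{2-\alpha^2}}{2}$ and that the prescribed midpoint value leaves a gap depending only on $\alpha$; you carry out the verification (checking $\beta \in (\alpha,1)$, reversibility of the squarings, reduction to $2(\alpha-1)^2 > 0$), give an explicit $\varepsilon_\alpha$, and correctly identify the degeneracy at $\alpha = 1$, which the paper's statement leaves implicit.
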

\begin{proof}
Without loss of generality, we suppose $\|x\|_2=\|y\|_2=\|z\|_2=1$.
We can decompose $y$ as $a x+y'$ where $y'$ is perpendicular to $x$.
We can decompose $z$ as $b_1 x + b_2 y'/\|y'\|_2 + z'$ where $z'$ is perpendicular to both $x$ and $y'$.
Then we have:
\begin{align*}
\langle y,z\rangle = ab_1 + b_2 \|y'\|_2 \leq \alpha + \sqrt{1-\beta^2},
\end{align*}
where the last inequality follows from $0\leq b_1\leq 1,a\leq \alpha $, and $b_2\leq \sqrt{1-b_1^2}\leq \sqrt{1-\beta^2}$, $0\leq \|y'\|_2\leq 1$.

By solving $\beta\geq \alpha+\sqrt{1-\beta^2}$, we can get $\beta\geq\frac{\alpha+\sqrt{2-\alpha^2}}{2}$.
Thus, if we set
\begin{align*} 
\beta=\frac{1+\frac{\alpha+\sqrt{2-\alpha^2}}{2}}{2},
\end{align*}
$\beta$ should be strictly larger than $\alpha+\sqrt{1-\beta^2}$, and the gap only depends on $\alpha$.
\end{proof}

\begin{lemma}
Give $x\in\mathbb{R}^m$, let $y\in\mathbb{R}^m$ be a random vector, where each entry of $y$ is an i.i.d. sample drawn from the standard Gaussian distribution $N(0,1)$. Given $\beta\in(0.5,1)$, $\Pr[\langle x,y\rangle/(\|x\|_2\|y\|_2)\geq \beta]\geq 1/(1+1/\sqrt{2(1-\beta)})^m$.
\end{lemma}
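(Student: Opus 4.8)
The plan is to read the target probability as the normalized surface measure of a spherical cap and then to lower bound that measure by a covering argument that directly reuses the $\varepsilon$-net size bound of Lemma~\ref{lem:standard_eps_net}. The conceptual key is to notice that the target quantity $1/(1+1/\sqrt{2(1-\beta)})^m$ is exactly the reciprocal of the net-size bound $(1+1/\varepsilon)^m$ once we match the net radius to $\varepsilon=\sqrt{2(1-\beta)}$.

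First I would reduce to the sphere. Since $\langle x,y\rangle/(\|x\|_2\|y\|_2)$ is invariant under positive scaling of either argument, I may assume $\|x\|_2=1$ and replace $y$ by its direction $\hat y=y/\|y\|_2$. Because the standard Gaussian is rotationally invariant, $\hat y$ is distributed uniformly on the unit sphere $S^{m-1}$, so
\[
\Pr\left[\frac{\langle x,y\rangle}{\|x\|_2\|y\|_2}\geq\beta\right]=\mu\left(\{v\in S^{m-1}\mid \langle x,v\rangle\geq\beta\}\right),
\]
where $\mu$ is the uniform probability measure on $S^{m-1}$; call this cap $C_x$. By rotational symmetry, every cap $C_u=\{v\in S^{m-1}\mid\langle u,v\rangle\geq\beta\}$ with $u\in S^{m-1}$ has the same measure $\mu_0$, so it suffices to prove $\mu_0\geq 1/(1+1/\sqrt{2(1-\beta)})^m$.

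Next I would set $\varepsilon=\sqrt{2(1-\beta)}$, which lies in $(0,1)$ precisely because $\beta\in(0.5,1)$. Applying Lemma~\ref{lem:standard_eps_net} with $U=I_m$ (so that the set there is $\mathcal{S}=S^{m-1}$) produces an $\varepsilon$-net $\mathcal{N}\subseteq S^{m-1}$ with $|\mathcal{N}|\leq(1+1/\varepsilon)^m$. The elementary fact I would invoke is that for unit vectors $v,u$ one has $\|v-u\|_2^2=2-2\langle v,u\rangle$, so $\|v-u\|_2\leq\varepsilon$ is equivalent to $\langle v,u\rangle\geq 1-\varepsilon^2/2=\beta$. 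Hence the net's covering property---every $v\in S^{m-1}$ has some $u\in\mathcal{N}$ within distance $\varepsilon$---says exactly that the caps $\{C_u\}_{u\in\mathcal{N}}$ cover $S^{m-1}$.

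Finally I would close with a union-bound/counting step: since $\bigcup_{u\in\mathcal{N}}C_u=S^{m-1}$, subadditivity gives $1=\mu(S^{m-1})\leq\sum_{u\in\mathcal{N}}\mu(C_u)=|\mathcal{N}|\cdot\mu_0$, whence $\mu_0\geq 1/|\mathcal{N}|\geq 1/(1+1/\varepsilon)^m$, and substituting $\varepsilon$ back yields the claim. There is no deep obstacle once the reduction to a cap measure is made; the only points demanding care are that the net points are themselves \emph{unit} vectors (so each $C_u$ is a genuine cap of the same angular radius, which is what makes all the $\mu(C_u)$ equal to a common $\mu_0$) and the arithmetic identity $1-\varepsilon^2/2=\beta$ that aligns the net radius with the threshold $\beta$.
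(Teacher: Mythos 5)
Your proof is correct and follows essentially the same route as the paper's: reduce to the uniform measure of a spherical cap, take a $\sqrt{2(1-\beta)}$-net from Lemma~\ref{lem:standard_eps_net}, observe that the caps centered at net points cover the sphere, and conclude by counting. You are in fact slightly more explicit than the paper (spelling out the identity $\|v-u\|_2^2=2-2\langle v,u\rangle$ and the subadditivity step), but the argument is the same one.
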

\begin{proof}
Without loss of generality, we can assume $\|x\|_2=1$.
Let $y'=y/\|y\|_2$.
Since each entry of $y$ is an i.i.d. Gaussian variable, $y'$ is a random vector drawn uniformly from a unit sphere.
Notice that if $\langle x,y'\rangle \geq \beta$, then $\|x-y'\|_2\leq \sqrt{2(1-\beta)}$.
Let $C=\{z\in\mathbb{R}^m\mid \|z\|_2=1,\|z-x\|_2\leq \sqrt{2(1-\beta)}\}$ be a cap, and let $\mathcal{S}=\{z\in\mathbb{R}^m \mid \|z\|_2=1 \}$ be the unit sphere.
Then we have
\begin{align*}
\Pr[\langle x,y'\rangle \geq \beta] = \mathrm{area}(C)/\mathrm{area}(\mathcal{S}).
\end{align*}
According to Lemma~\ref{lem:standard_eps_net}, there is an $\sqrt{2(1-\beta)}$-net $\mathcal{N}$ with $|\mathcal{N}|\leq (1+1/\sqrt{2(1-\beta)})^m$.
If we put a cap centered at each point in $\mathcal{N}$, then the whole unit sphere will be covered. 
Thus, we can conclude
\begin{align*}
\Pr[\langle x,y'\rangle \geq \beta]\geq 1/(1+1/\sqrt{2(1-\beta)})^m.
\end{align*}
\end{proof}

\begin{theorem}[A formal version of Theorem~\ref{thm:the_second_main_thm}]\label{lem:uniqueness_lemma}\label{thm:disjoint}
Consider $N$ data points $x_1,x_2,\cdots,x_N\in\mathbb{R}^m$ and a weight matrix $W\in\mathbb{R}^{l\times m}$ where each entry of $W$ is an i.i.d. sample drawn from the Gaussian distribution $N(0,1/l)$.
Suppose $\forall i\not=j\in[N],$ $\langle x_i,x_j\rangle/(\|x_i\|_2\|x_j\|_2)\leq \alpha$ for some $\alpha\in(0.5,1)$. 
Fix $k\geq 1$ and $\delta\in(0,1)$, if $l$ is sufficiently large, then with probability at least $1-\delta$, 
\begin{align*}
\forall i,j\in[N],\mathcal{A}(Wx_i)\cap \mathcal{A}(Wx_j)=\emptyset.
\end{align*}
\end{theorem}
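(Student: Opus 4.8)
The plan is to show that the top-$k$ rows selected for two different inputs are forced to point into disjoint spherical caps, so that no single row can be selected twice. Throughout I would normalize $\|x_i\|_2=1$ and, since rescaling $W$ never changes an activation pattern, treat each row $W_q$ as a standard Gaussian vector in $\mathbb{R}^m$, writing $W_q\cdot x_i=\|W_q\|_2\,\langle u_q,x_i\rangle$ with $u_q=W_q/\|W_q\|_2$.

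First I would fix the separation constant $\beta=\frac{2+\alpha+\sqrt{2-\alpha^2}}{4}\in(0.5,1)$ supplied by Lemma~\ref{lem:angle_difference}. Since $\langle x_i,x_j\rangle\le\alpha$ for $i\ne j$, that lemma guarantees that any unit vector $u$ with $\langle u,x_i\rangle\ge\beta$ satisfies $\langle u,x_j\rangle\le(1-\varepsilon_\alpha)\beta<\beta$, so the caps $C_i:=\{u:\|u\|_2=1,\ \langle u,x_i\rangle\ge\beta\}$ are pairwise disjoint. The whole theorem then reduces to a single claim: with high probability, for every $i$ each row $q\in\mathcal{A}(Wx_i)$ has direction $u_q\in C_i$. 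Granting this, $q\in\mathcal{A}(Wx_i)$ puts $u_q\in C_i$, hence $u_q\notin C_j$, and the same claim applied to $j$ forbids $q\in\mathcal{A}(Wx_j)$; disjointness follows after a union bound over the $\binom{N}{2}$ pairs.

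To prove the cap containment I would argue in two steps. First, lower-bound the selection threshold $\tau_i$ (the $k$-th largest projection): the cap-probability lemma preceding the theorem shows a random direction lands in a cap around $x_i$ with a probability that, multiplied by $l$, exceeds $k$ once $l$ is large enough, so at least $k$ rows are well aligned with $x_i$, and because their norms concentrate near $\sqrt{m}$ they pin $\tau_i$ down to a value $V=\Theta(\sqrt{m})$. Second, convert a projection $W_q\cdot x_i\ge\tau_i\ge V$ into a cosine bound $\langle u_q,x_i\rangle\ge V/\|W_q\|_2$, which is $\ge\beta$ unless the row carries an atypically large norm $\|W_q\|_2>V/\beta$. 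Ruling out such ``large-norm, poorly-aligned'' intruders is where Lemma~\ref{lem:R_exists} becomes essential: its $\chi^2$ upper-tail estimate shows the norms of the few tail rows are tightly concentrated, so a row cannot reach projection $V$ on the strength of its norm alone without also being aligned past $\beta$; a crude cap such as Lemma~\ref{lem:ub_of_W} keeps the worst-case norm finite while this estimate is applied.

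I expect this second step to be the real difficulty, because the top-$k$ operation ranks rows by the product of norm and alignment, and one must separate these two contributions without any uniform upper bound on all $l$ norms (the largest of $l$ chi-square norms is simply too big for a crude bound). The reason the accounting ultimately closes is the orthant estimate underneath it: writing $(W_q\cdot x_i,\,W_q\cdot x_j)$ as a bivariate Gaussian with correlation $\langle x_i,x_j\rangle\le\alpha$, the chance a single row clears both thresholds $V\approx\sqrt{2\ln l}$ is $\exp(-V^2/(1+\alpha))=l^{-2/(1+\alpha)}$, which for $\alpha<1$ is $o(1/l)$, so the expected number of doubly-selected rows is $l^{\,1-2/(1+\alpha)}\to 0$. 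Finally I would assemble the norm-concentration event, the threshold event, and the union bound over pairs and rows, and take $l$ large (exponential in $m$, with $k,N,\alpha,\delta$ held fixed) to force the total failure probability below $\delta$.
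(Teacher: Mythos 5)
Your opening reduction (pairwise-disjoint caps $C_i$ via Lemma~\ref{lem:angle_difference}, then containment of every selected row's direction in its cap) is the same strategy as the paper's, but your quantitative plan for proving containment has a genuine gap. In step (a) you pin the selection threshold at $V=\Theta(\sqrt{m})$, the scale of a \emph{typical} row norm. The theorem's regime, however, is $l\to\infty$ with $m,k,N$ fixed, and at that scale the step-(b) inference fails outright rather than being merely ``difficult'': a row with cosine in, say, $[0.1,0.2]$ and norm $\geq C\sqrt{m}$ occurs with probability bounded below by a constant depending only on $m$, so the expected number of such ``intruders'' is $\Theta(l)$; each of them has projection exceeding any fixed multiple of $\sqrt m$ while lying far outside the cap. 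Thus ``projection $\geq V$ implies cosine $\geq\beta$'' is false at the $\sqrt{m}$ scale, and no concentration statement about tail norms can repair it --- the only repair is to raise the threshold itself to the extreme-norm scale. Moreover, your diagnosis that a uniform norm bound over all $l$ rows is unavailable (``the largest of $l$ chi-square norms is simply too big'') is exactly backwards relative to the paper: the paper's proof \emph{is} a uniform norm bound, with $R$ chosen as the $\frac{\delta}{100l}$-quantile of the norm distribution so that $\max_t\|W_t\|_2\leq(1+\varepsilon)R$ w.h.p., and Lemma~\ref{lem:R_exists} supplies the multiplicative tightness of this bound: $\Pr[\|W_t\|_2\geq R]$ is at least $\eta^{-1}$ times the tail at $(1+\varepsilon)R$, so each cap still receives $k$ rows of norm $\geq R$ (the $(1+2/\sqrt{2(1-\beta')})^m$ factor buried in $\eta$ pays for intersecting with the cap event). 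The threshold then sits at $\beta'R=(1+\varepsilon)\beta R$, and any row clearing it has cosine at least $\beta'R/((1+\varepsilon)R)=\beta$. In short, the paper works because the threshold scale and the norm-bound scale are \emph{matched} at $R=\Theta(\sqrt{\log l})$; in your plan they are mismatched by an unbounded factor $\sqrt{\log l}/\sqrt{m}$.

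Your fallback --- the bivariate orthant estimate --- is, by contrast, a correct and genuinely different route, and arguably cleaner than the paper's: (i) since $(W_qx_i)_{q\in[l]}$ are i.i.d.\ $N(0,1)$, taking $V=\sqrt{(2-\epsilon)\ln l}$ makes the expected number of rows exceeding $V$ grow like $l^{\epsilon/2}/\sqrt{\ln l}\to\infty$, so w.h.p.\ every threshold satisfies $\tau_i\geq V$; (ii) if a row $q$ is selected for both $x_i$ and $x_j$ then $W_qx_i\geq V$ and $W_qx_j\geq V$, hence $W_q(x_i+x_j)\geq 2V$, and since $\|x_i+x_j\|_2^2\leq 2(1+\alpha)$ this has probability at most $e^{-V^2/(1+\alpha)}=l^{-(2-\epsilon)/(1+\alpha)}$; (iii) choosing $\epsilon<1-\alpha$ (possible precisely because $\alpha<1$) makes $lN^2\cdot l^{-(2-\epsilon)/(1+\alpha)}\to 0$, and a union bound over rows and pairs finishes. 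This argument needs none of the cap machinery (Lemmas~\ref{lem:angle_difference}, \ref{lem:R_exists}, or the cap-probability lemma), and its width requirement does not grow exponentially in $m$, unlike the paper's proof (and your primary plan). But in your write-up it appears only as a sketch whose threshold $V\approx\sqrt{2\ln l}$ flatly contradicts the $V=\Theta(\sqrt m)$ of step (a); as submitted, the proposal is not a proof. To make it one, commit to the orthant argument, discard steps (a)--(b), and carry out (i)--(iii) with the union bounds made explicit.
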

\begin{proof}
Notice that the scale of $W$ and $x_1,x_2,\cdots,x_N$ do not affect either $\langle x_i,x_j\rangle/(\|x_i\|_2\|x_j\|_2)$ or the activation pattern. 
Thus, we can assume $\|x_1\|_2=\|x_2\|_2=\cdots=\|x_N\|_2=1$ and each entry of $W$ is an i.i.d. standard Gaussian random variable.

Let $\beta = \frac{2+\alpha+\sqrt{2-\alpha^2}}{4}$ and  $\varepsilon_{\alpha}$ be the same as mentioned in Lemma~\ref{lem:angle_difference}.
Set $\varepsilon$ and $\beta'$ as
\begin{align*}
\begin{array}{cc}
\varepsilon=\frac{\frac1\beta-1}{2}, & \beta'=(1+\varepsilon)\beta.
\end{array}
\end{align*}
Now, set
\begin{align*}
\eta=\frac{\delta/100}{100 k\log (N/\delta)\cdot (1+2/\sqrt{2(1-\beta')})^m},
\end{align*}
and let $R$ satisfies 
\begin{align*}
\Pr_{Z\sim\chi^2_m}[Z\geq (1+\varepsilon)^2 R^2]=\frac{\delta/100}{l}.
\end{align*}
According to Lemma~\ref{lem:R_exists}, if $l$ is sufficiently large, then $R$ is sufficiently large such that
\begin{align*}
\Pr_{Z\sim\chi^2_m}[Z\geq (1+\varepsilon)^2R^2]/\Pr_{Z\sim\chi^2_m}[(1+\varepsilon)^2R^2\geq Z\geq R^2] \leq \eta.
\end{align*}
Notice that for $t\in[l]$, $\|W_t\|_2^2$ is a random variable with $\chi_m^2$ distribution. 
Thus, $\Pr[\|W_t\|_2\geq (1+\varepsilon)R] =\frac{\delta/100}{l}$.
By taking union bound over all $t\in[l]$, with probability at least $1-\delta/100$, $\forall t\in[l]$, $\|W_t\|_2\leq (1+\varepsilon)R$. In the remaining of the proof, we will condition on that $\forall t\in[l],\|W_t\|_2\leq (1+\varepsilon) R$.
Consider $i,j\in[N],t\in[l]$, if $W_tx_i> \beta'R$, then we have
\begin{align*}
\frac{W_t x_i}{\|W_t\|_2}>\frac{\beta'R}{(1+\varepsilon)R}\geq \beta'/(1+\varepsilon)=\beta. 
\end{align*}
Due to Lemma~\ref{lem:angle_difference}, we have
\begin{align*}
\frac{W_tx_j}{\|W_t\|_2} < (1-\varepsilon_{\alpha})\beta.
\end{align*}
Thus, 
\begin{align}\label{eq:other_small}
W_tx_j< (1-\varepsilon_{\alpha})\beta\|W_t\|_2\leq (1-\varepsilon_{\alpha})\beta(1+\varepsilon)R\leq (1-\varepsilon_{\alpha})\beta'R.
\end{align}
Notice that for $i\in[N],t\in[l]$, we have
\begin{align*}
\Pr[W_tx_i>\beta'R]&\geq \Pr[\|W_t\|_2\geq R]\Pr\left[\frac{W_t x_i}{\|W_t\|_2}\geq \beta'\right]\\
& \geq \frac{\delta/100}{l}\cdot \frac{1}{\eta}\cdot \frac{1}{(1+1/\sqrt{2(1-\beta')})^m}\\
& \geq \frac{1}{l}\cdot 100k\log(N/\delta).
\end{align*}
By Chernoff bound, with probability at least $1-\delta/(100N)$, 
\begin{align*}
\sum_{t=1}^l \mathbf{1}(W_tx_i>\beta'R) \geq k.
\end{align*}
By taking union bound over $i\in[N]$, with probability at least $1-\delta/100$, $\forall i\in[N]$, 
 \begin{align*}
 \sum_{t=1}^l \mathbf{1}(W_tx_i>\beta'R) \geq k.
 \end{align*}
This implies that $\forall i\in[N]$, if $t\in\mathcal{A}(Wx_i)$, then $W_tx_i>\beta'R$. 
Due to Equation~\eqref{eq:other_small}, $\forall j\in[N]$, we have $W_tx_j<\beta'R$ which implies that $t\not\in\mathcal{A}(Wx_j)$.
Thus, with probability at least $1-\delta/50\geq 1-\delta$ probability, $\forall i\not=j$, $\mathcal{A}(Wx_i)\cap \mathcal{A}(Wx_j)=\emptyset$.
\end{proof}

\begin{remark}
	Consider any $x_1,x_2,\cdots,x_N\in\mathbb{R}^m$ with $\|x_1\|_2=\|x_2\|_2=\cdots=\|x_N\|_2=1$.
	If $\forall i\not=j\in[N],\langle x_i,x_j\rangle\leq \alpha$ for some $\alpha\in(0.5,1)$, then $|N|\leq (1+2/\sqrt{2(1-\alpha)})^m$.
\end{remark}
\begin{proof}
	Since $\langle x_i,x_j\rangle\leq \alpha$, $\|x_i-x_j\|_2^2=\|x_i\|_2^2+\|x_j\|_2^2-2\langle x_i,x_j\rangle \geq 2-2\alpha$.
	Let $\mathcal{S}$ be the unit sphere, i.e., $\mathcal{S}=\{x\in\mathbb{R}^m\mid \|x\|_2=1\}$.
	Due to Lemma~\ref{lem:standard_eps_net}, there is a $(\sqrt{2(1-\alpha)}/2)$-net $\mathcal{N}$ of $\mathcal{S}$ with size at most $|\mathcal{N}|\leq (1+2/\sqrt{2(1-\alpha)})^m$.
	Consider $x_i,x_j,$ and $y\in\mathcal{N}$.
	By triangle inequality, if $\|x_i-y\|_2<\sqrt{2(1-\alpha)}/2$, then $\|x_j-y\|_2> \sqrt{2(1-\alpha)}/2$ due to $\|x_i-x_j\|_2\geq \sqrt{2(1-\alpha)}$.
	Since $\mathcal{N}$ is a net of $\mathcal{S}$, for each $x_i$, we can find a $y\in\mathcal{N}$ such that $\|x_i-y\|_2<\sqrt{2(1-\alpha)}/2$. 
	Thus, we can conclude $N\leq |\mathcal{N}|\leq (1+2/\sqrt{2(1-\alpha)})^m$.
\end{proof}

\begin{theorem}
Consider $N$ data points $x_1,x_2,\cdots,x_N\in\mathbb{R}^m$ with their corresponding labels $z_1,z_2,\cdots,z_N\in\mathbb{R}$ and a weight matrix $W\in\mathbb{R}^{l\times m}$ where each entry of $W$ is an i.i.d. sample drawn from the Gaussian distribution $N(0,1/l)$.
Suppose $\forall i\not=j\in[N],$ $\langle x_i,x_j\rangle/(\|x_i\|_2\|x_j\|_2)\leq \alpha$ for some $\alpha\in(0.5,1)$. 
Fix $k\geq 1$ and $\delta\in(0,1)$, if $l$ is sufficiently large, then with probability at least $1-\delta$, there exists a vector $v\in\mathbb{R}^{l}$ such that 
\begin{align*}
\forall i\in[N], \langle v, \phi_k(Wx_i)\rangle = z_i. 
\end{align*}
\end{theorem}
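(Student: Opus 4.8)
The plan is to reduce the expressivity statement almost entirely to the disjointness of activation patterns established in Theorem~\ref{thm:disjoint}, and then to solve a completely decoupled system of one-variable linear equations.

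First I would invoke Theorem~\ref{thm:disjoint}: under the very same hypotheses ($\langle x_i,x_j\rangle/(\|x_i\|_2\|x_j\|_2)\le\alpha$ with $\alpha\in(0.5,1)$, $k\ge1$ fixed, and $l$ sufficiently large), with probability at least $1-\delta$ the activation patterns are pairwise disjoint, i.e.\ $\mathcal{A}(Wx_i)\cap\mathcal{A}(Wx_j)=\emptyset$ for all $i\ne j$. Inspecting the proof of that theorem, on exactly the same event one in fact obtains the stronger conclusion that for every $i\in[N]$ and every $t\in\mathcal{A}(Wx_i)$ the corresponding coordinate satisfies $W_t x_i>\beta'R>0$, where $\beta'$ and $R$ are the positive thresholds defined there. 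I will condition on this event for the remainder.

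Next I would expand the inner product. Since $\phi_k(Wx_i)$ is supported exactly on $\mathcal{A}(Wx_i)$ and equals $W_t x_i$ on that support,
\[
\langle v,\phi_k(Wx_i)\rangle=\sum_{t\in\mathcal{A}(Wx_i)} v_t\,(W_t x_i).
\]
Because the supports $\mathcal{A}(Wx_1),\dots,\mathcal{A}(Wx_N)$ are pairwise disjoint subsets of $[l]$, the entry $v_t$ with $t\in\mathcal{A}(Wx_i)$ contributes only to the $i$-th inner product and to no other. This lets me assign the coordinates of $v$ on each support independently. Concretely, for each $i$ I pick one index $t_i\in\mathcal{A}(Wx_i)$ (which exists since $k\ge1$), set $v_{t_i}=z_i/(W_{t_i}x_i)$, and set every remaining coordinate of $v$ to zero. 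This is well defined because $W_{t_i}x_i>0$, and by disjointness it yields $\langle v,\phi_k(Wx_i)\rangle=v_{t_i}(W_{t_i}x_i)=z_i$ for every $i$ simultaneously, with no cross terms interfering.

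The hard part is not in this final argument but is already packaged inside Theorem~\ref{thm:disjoint}: the genuine work is guaranteeing disjoint, size-$k$ supports, which is what forces $l$ to be large. The only additional ingredient needed here is that each support contains a nonzero coordinate so that the single linear equation is solvable, and this is delivered for free by the positivity $W_t x_i>\beta'R>0$ coming out of that proof (alternatively, since $W$ is Gaussian and $x_i\ne0$, each $W_t x_i$ is a continuous random variable and hence almost surely nonzero). Thus the expressivity claim holds on the same $(1-\delta)$-probability event, which completes the argument.
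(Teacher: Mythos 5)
Your proposal is correct and follows essentially the same route as the paper's own proof: invoke the disjointness theorem, pick one index $t_i\in\mathcal{A}(Wx_i)$ per data point, set $v_{t_i}=z_i/(W_{t_i}x_i)$ and all other coordinates to zero, and use disjointness to kill cross terms. Your additional observation that $W_{t_i}x_i>\beta'R>0$ (so the division is well defined) is a small point of care that the paper leaves implicit, but it does not change the argument.
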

\begin{proof}
Due to Theorem~\ref{lem:uniqueness_lemma}, with probability at least $1-\delta$, $\forall i\not =j$, $\mathcal{A}(Wx_i)\cap\mathcal{A}(Wx_j)=\emptyset$.
Let $t_1,t_2,\cdots,t_N\in [l]$ such that $t_i\in\mathcal{A}(Wx_i)$. 
Then $t_i\not\in\mathcal{A}(Wx_j)$ for $j\not =i$.

For each entry $v_t$, if $t=t_i$ for some $i\in[N]$, then set $v_t = z_i/ (W_tx_i)$.
Then for $i\in[N]$, we have
\begin{align*}
\langle v, \phi_k(Wx_i)\rangle
=\sum_{t\in\mathcal{A}(Wx_i)} v_t \cdot W_t x_i  =  z_i/(W_{t_i}x_i) \cdot W_{t_i}x_i= z_i.
\end{align*}
\end{proof}

\section{Additional Experimental Results}
This section presents details of our experiment settings 
and additional results for evaluating
and empirically understanding the robustness of \kwta networks.

\subsection{Experiment Settings}\label{sec:settings}
First, we describe the details of setting up the experiments described in \secref{exp}.
To compare \kwta networks with their ReLU counterparts, 
we replace all ReLU activations in a network with \kwta activation, while 
retaining all other modules (such as BatchNorm, Convolution, and pooling).
To test on different network architectures, including 
ResNet18, DenseNet121, and Wide ResNet,  we use the 
standard implementations that are publicly available\footnote{https://github.com/kuangliu/pytorch-cifar}. 
All experiments are conducted using \textsf{PyTorch} framework. 

\vspace{-2mm}
\paragraph{Training setups.}
We follow the same training procedure on CIFAR-10 and SVHN datasets.
All the ReLU networks are trained with stochastic gradient descent (SGD) method 
with momentum=$0.9$. We use a learning rate 0.1 from the first to 50-th epoch
and 0.01 from 50-th to 80-th epoch.
To compare with ReLU networks,
the k-WTA networks are trained in the same way as ReLU networks. 
All networks are trained with a batch size of 256.

For \kwta networks with a sparsity ratio $\gamma=0.1$, when adversarial training is
not used, we train them incrementally (recall in \secref{train}). 
starting with $\gamma=0.2$ for 50 epochs with SGD (using learning rate=0.1,
momentum=0.9) and then decreasing $\gamma$ by 0.005 every 2 epochs until
$\gamma$ reaches $0.1$. 

When adversarial training is enabled, we use untargeted PGD attack with 8
iterations to construct adversarial examples.
To train networks with TRADES~\citep{zhang2019theoretically}, we use the implementation of its original
paper\footnote{https://github.com/yaodongyu/TRADES} with 
the parameter $1/\lambda=6$,
a value that reportedly leads to the best robustness according to the paper.
To train networks with the free adversarial training method~\citep{shafahi2019adversarial},
we implement the training algorithm by following the original paper. We set the parameter
$m=8$ as suggested in the paper.


\vspace{-2mm}
\paragraph{Attack setups.}
All attacks are evaluated under the $\ell_{\infty}$ metric, 
with perturbation size $\epsilon=$ 0.031 (CIFAR-10) and 0.047 (SVHN) 
for pixels ranging in $[0, 1]$. We use Foolbox~\citep{rauber2017foolbox}, a
third-party toolbox for evaluating adversarial robustness.

We use the following setups for generating adversarial examples in various attack methods: 
For PGD attack, we use 40 iterations with random start, the step size is 0.003.
For C\&W attack, we set the binary search step to 5, maximum number of iterations to 20, learning
rate to 0.01, and initial constant to 0.01. For Deepfool, we use 20 steps and
10 sub-samples in its configuration. For momentum attack, we set the step size
to 0.003 and number of iterations to 20.  All other parameters are set by
Foolbox to be its default values.


\subsection{Efficacy of Incremental Training}\label{sec:inc}
\begin{figure*}[t]
	\centering
	\includegraphics[width=1.0\textwidth]{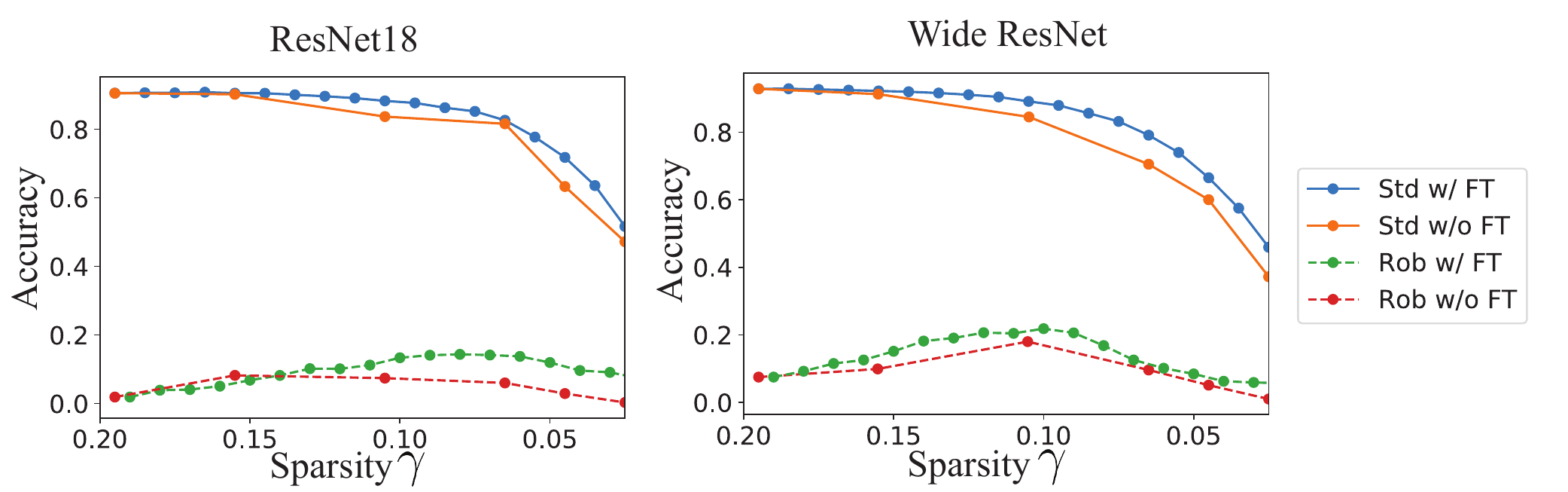}
	\vspace{-5mm}
	\caption{
            \textbf{Efficacy of incremental training.} We sweep through a 
            range of sparsity ratios, and evaluate the standard and robust accuracies
            of two network structures (left: ResNet18 and right: Wide ResNet).
            We compare the performance differences between the regular training (i.e.,
            training without incremental fine-tuning) and the training
            with incremental fine-tuning.
	}\label{fig:ft}
	\vspace{-2mm}
\end{figure*}
We now report additional experiments to demonstrate the efficacy of 
the incremental fine-tuning method (described in \secref{train}).  
As shown in \figref{ft} and described its caption, 
models trained with incremental fine-tuning (denoted
as \textsf{w/ FT} in the plots' legend) performs better in terms of both standard
accuracy (denoted as \textsf{std} in the plots' legend) and robust accuracy (denoted
as \textsf{Rob} in the plots) when the \kwta sparsity $\gamma<0.2$,  suggesting that 
fine-tuning is worthwhile when $\gamma$ is small.

\subsection{Additional results on CIFAR-10}\label{sec:add}
\begin{table}[t]
\centering
\caption{Additional CIFAR-10 results.}\label{tab:cifar_other}
\begin{tabular}{l|l|l|cc}
\bottomrule
Training & Model & Activation & $A_{std}$ & $A_{rob}$     \\ \hline
\multirow{5}{*}{Natural} & \multirow{5}{*}{ResNet-18}  &ReLU & 92.9\% &0.0\% \\
 && LWTA-0.1  & 82.8\% & 3.7\% \\
 && LWTA-0.2  & 84.6\% & 0.9\% \\
 && \kwta-0.1   & 89.3\% & \textbf{13.1\%} \\
 && \kwta-0.2  & 91.7\% &  4.2\%\\ \hline
\multirow{5}{*}{AT} & \multirow{5}{*}{ResNet-18}  &ReLU & 83.5\% &43.6\% \\
 && LWTA-0.1   & 71.4\% & 46.6\% \\
 && LWTA-0.2  & 78.7\% & 43.1\% \\
 && \kwta-0.1   & 78.9\% & \textbf{50.7\%} \\
 && \kwta-0.2  & 81.4\% & 47.4\% \\ \hline
 \multirow{5}{*}{Natural} & \multirow{5}{*}{DenseNet-121}  &ReLU & 93.6\% & 0.0\% \\
 && LWTA-0.1   & 86.1\% & 4.6\% \\
 && LWTA-0.2  & 88.5\% & 1.4\% \\
 && \kwta-0.1   & 90.5\% & \textbf{12.3\%} \\
 && \kwta-0.2  & 93.3\% & 6.2\% \\ \hline
 \multirow{5}{*}{AT} & \multirow{5}{*}{DenseNet-121}  &ReLU & 84.2\% & 46.3\% \\
 && LWTA-0.1   & 74.0\% & 49.1\% \\
 && LWTA-0.2  & 80.2\% & 44.9\% \\
 && \kwta-0.1  & 81.6\% & \textbf{52.4\%}  \\
 && \kwta-0.2  & 83.4\% & 49.6\% \\ \hline
 \multirow{5}{*}{Natural} & \multirow{5}{*}{WideResNet-22-10}  &ReLU &93.4\% & 0.0\% \\
 && LWTA-0.1   & 83.7\% & 4.2\% \\
 && LWTA-0.2  & 86.1\% & 2.8\% \\
 && \kwta-0.1   & 88.6\% & \textbf{18.3\%} \\
 && \kwta-0.2  & 92.7\% & 7.4\% \\ \hline
 \multirow{5}{*}{AT} & \multirow{5}{*}{WideResNet-22-10}  &ReLU &83.3\% &43.1\% \\
 && LWTA-0.1   & 74.2\% & 47.5\% \\
 && LWTA-0.2  & 79.8\% & 44.7\% \\
 && \kwta-0.1   & 78.9\% & \textbf{50.4\%} \\
 && \kwta-0.2  & 82.4\% & 47.1\% \\ \hline
\toprule
\end{tabular}
\vspace{-6mm}
\end{table}

\paragraph{Tests on different network architectures.}
We evaluate the robustness of \kwta on different network architectures, including ResNet-18, DenseNet-121 and WideResNet-22-10. 
The results are reported in \tabref{cifar_other},
where similar to the notation used in \tabref{cifar_svhn_main} of the main text,
$A_{rob}$ is calculated as the worst-case robustness, i.e., under 
the most effective attack among PGD, C\&W, Deepfool and MIM.
The training and attacking settings are same as other experiments described \secref{settings}.  

As shown in \tabref{cifar_other},
while the standard and robustness accuracies, $A_{std}$ and $A_{rob}$, vary
on different network architectures, \kwta networks consistently improves 
the worst-case robustness $A_{rob}$ over ReLU networks, no matter what
the network architecture and training method are used.


\paragraph{Comparison with LWTA.}
We in addition compare \kwta to LWTA activation~\citep{srivastava2013compete,srivastava2014understanding}.
For fair comparisons, 
we use the same sparsity ratio $\gamma$ in both \kwta and LWTA.
As shown in \tabref{cifar_other}, on all network architectures and training methods we tested, 
\kwta networks consistently have better robustness performance than LWTA networks 
(in terms of both $A_{std}$ and $A_{rob}$). 
These results suggest that \kwta is more suitable then LWTA for defending against adversarial attacks.

\vspace{-2mm}
\paragraph{Transfer attack.}
Since a \kwta network is architecturally similar to its ReLU counterpart---with
the only difference being the activation---we evaluate their robustness under
(black-box) transfer attacks across \kwta and ReLU networks.  To this end, we build a ReLU
and a \kwta-0.1 network on ResNet-18, and train both networks with natural
(non-adversarial) training as well as adversarial training.  This gives us four
different models denoted (in \tabref{cifar_transfer}) as ReLU, \kwta-0.1,
ReLU (AT), and \kwta-0.1 (AT).
We then launch transfer attacks across each pair of models.
We also consider by-far the strongest black-box attack (according to~\citet{papernot2017practical}):
for the same model, for example, a \kwta-0.1 network optimized by adversarial training,
we train two independent versions, each with a different random initialization,
and apply the transfer attacks across the two versions.

The results are reported in \tabref{cifar_transfer}, where each row corresponds to
a target (attacked) model, and each column corresponds to a source model from which the
adversarial examples are generated.
On the diagonal line of \tabref{cifar_transfer}, each entry corresponds to the 
robustness under aforementioned transfer attacks across the two versions of the same models.


The results suggest that \textbf{1)} 
it is more difficult to transfer attack \kwta networks 
than ReLU networks using adversarial examples from other models, 
and \textbf{2)} it is also more difficult to use adversarial examples of
a \kwta network to attack other models. In a sense, the adversarial examples of a \kwta
network tend to be ``disjoint'' from the adversarial examples of a ReLU network, despite their 
architectural similarity.
Inspecting the diagonal entries of \tabref{cifar_transfer},
we also find that \kwta networks are more robust than their ReLU counterparts 
under the strongest black-box attack~\citep{papernot2017practical}
(i.e., transfer attacks across two different versions of the same model).



\begin{table}[t]
\centering
\caption{Transferability test on CIFAR-10.
}\label{tab:cifar_transfer}
\begin{tabular}{l|cccc}\bottomrule
\multirow{2}{*}{\textbf{Target Model}}& \multicolumn{4}{c}{\textbf{Source Model}}\\ 
&  ReLU & \kwta-0.1 & ReLU (AT) & \kwta-0.1 (AT)  \\ \hline
ReLU & 4.8\% & 75.5\% & 59.4\% & 84.7\%  \\
\kwta-0.1 & 61.2\% & 71.2\% & 67.8\% & 86.4\% \\
ReLU (AT) & 62.7\% & 80.9\% & 61.6\% & 78.6\% \\
\kwta-0.1 (AT)& 79.2\% & 78.6\% & 69.2\% & 67.2\%  \\ \hline 

\end{tabular}


\vspace{-4mm}
\end{table}

\subsection{MNIST Results}\label{sec:mnist}
On MNIST dataset, we conduct experiments with an adversarial perturbation size $\epsilon$=0.3 for
pixels ranging in $[0, 1]$. We use Stochastic Gradient Descent (SGD) with
learning rate=0.01 and momentum=0.9 to train a 3-layer CNN. The training takes 20 epochs for
all the methods we evaluate.  The robust accuracy are evaluated under PGD
attacks that take 20 iterations with random initialization and a step size of 0.03.

The results are summarized in \tabref{mnist_main}.  
Again, \kwta activation consistently improves robustness under all different training methods.
Even with natural (non-adversarial) training, the resulting \kwta network still has 62.2\% robust accuracy,
significantly outperforming ReLU network. 

\begin{table}[t]
\caption{White-box attack results on MNIST dataset.
    \label{tab:mnist_main}}
\centering
\begin{tabular}{l|lcc?l|lcc}
\bottomrule
Activation & Training & $A_{std}$  & $A_{rob}$  & Activation & Training & $A_{std}$  &  $A_{rob}$ \\ \hline
\multirow{4}{*}{ReLU}& Natural & 99.4\%  & 0.0\%& \multirow{4}{*}{$k$-WTA-0.1}& Natural &  99.3\%         & \textbf{62.2\%}     \\
&AT     &  99.2\%   & 95.0\%        & & AT &  99.2\% &  \textbf{96.4\%}  \\  
&TRADES   &  99.2\%   &  96.0\%             &  & TRADES &   99.0\%     &  \textbf{96.9\%} \\
&FAT   &  98.2\%   &  94.7\%              & & FAT &   98.1\%     &  \textbf{96.0\%}  \\
\toprule
\end{tabular}
\end{table}

\subsection{Loss Landscape Visualization}\label{sec:surf}
In addition to the experiments shown in \figref{loss} and \secref{loss_land} of the main text,
we further 
we visualize the loss landscapes of \kwta networks when different sparsity ratios $\gamma$ are used.
The plots are shown in \figref{surf},
produced in the same way as \figref{loss} described in \secref{loss_land}.

As analyzed in \secref{theory}, a larger $\gamma$ tends to 
smooth the loss surface of the \kwta network 
with respect to the input, while a smaller $\gamma$ renders the loss
surface more discontinuous and ``spiky''. 
In addition, adversarial training tends to reduce the range of the loss values---a 
similar phenomenon in ReLU networks has 
already been reported~\cite{madry2017towards,tramer2017ensemble}---but 
that does not mean that the loss surface becomes smoother; the loss surface remains spiky.
\begin{figure*}[h!]
	\centering
	\includegraphics[width=1.0\textwidth]{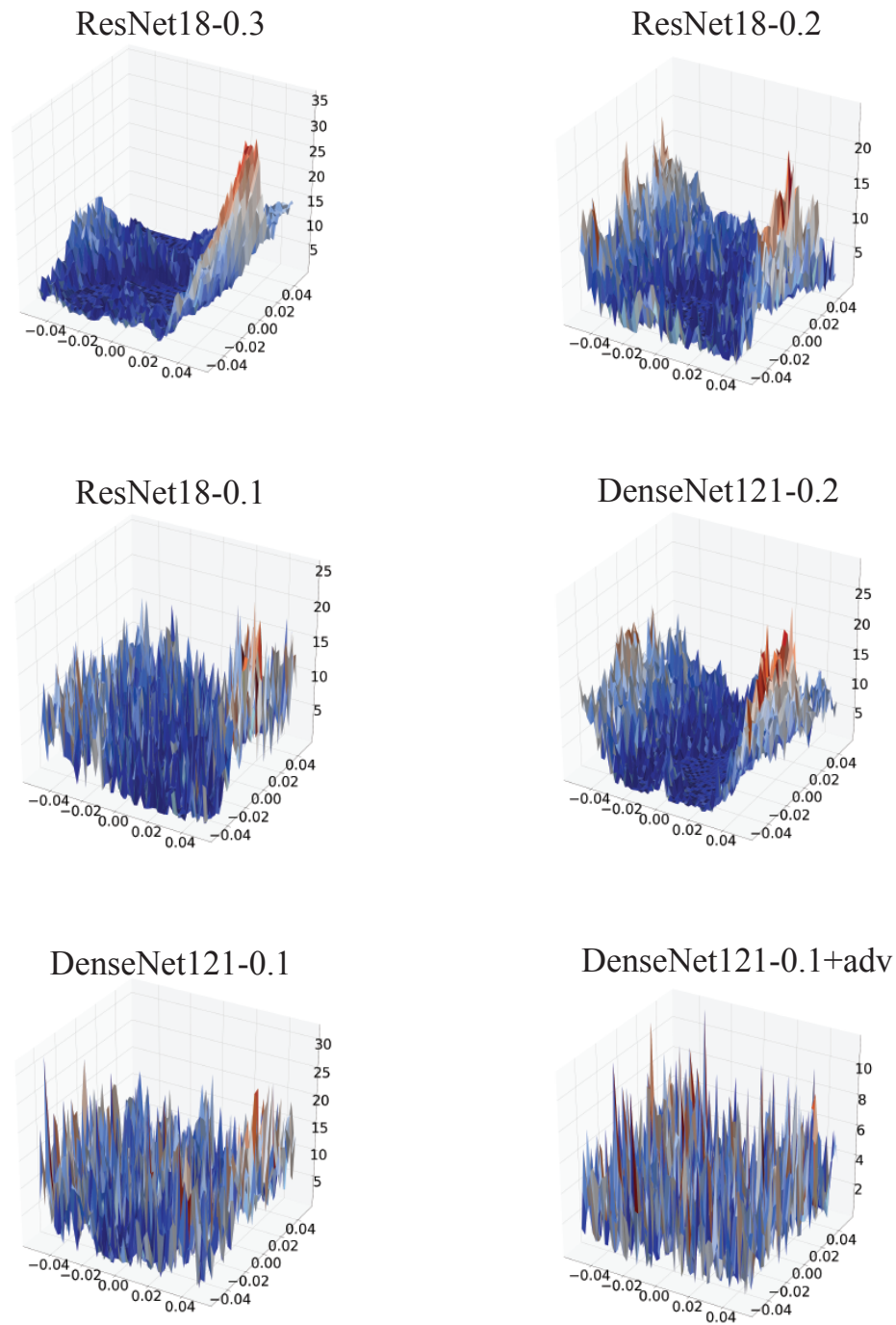}
	\vspace{-5mm}
	\caption{Visualization of ResNet18 and DenseNet121 with different $\gamma$ values.
        The last one (DenseNet121-0.1+adv) is the result using adversarial training.
        The others are optimized using natural (non-adversarial) training.
	}
    \label{fig:surf}
	\vspace{-3mm}
\end{figure*}

\end{document}